\documentclass[11pt]{article}

\usepackage[final]{acl}

\usepackage{times}
\usepackage{latexsym}
\usepackage[T1]{fontenc}
\usepackage[utf8]{inputenc}
\usepackage{microtype}
\usepackage{inconsolata}
\usepackage{graphicx}
\usepackage{amsmath}
\usepackage{amsthm}
\usepackage{amssymb}
\usepackage{booktabs}
\usepackage{url}
\usepackage{hyperref}
\usepackage{subcaption}
\usepackage{multirow}
\usepackage{algorithm}
\usepackage{algorithmic}
\usepackage{float}
\usepackage{tcolorbox}
\usepackage{colortbl}
\usepackage{xcolor}
\usepackage{enumitem}
\usepackage[section]{placeins}
\usepackage{balance}
\usepackage[capitalize,noabbrev]{cleveref}

\definecolor{darkblue}{rgb}{0, 0, 0.5}
\hypersetup{
  colorlinks=true,
  linkcolor=darkblue,
  citecolor=darkblue,
  urlcolor=darkblue,
}

\theoremstyle{remark}

\newtheorem{proposition}{Observation}
\theoremstyle{definition}
\newtheorem{assumption}{Assumption}

\begin{document}
\title{ST-Lite: Training-Free KV Cache Compression with \\ Spatio-Trajectory Guidance for Long-Horizon GUI Agents}
\setlength\titlebox{12\baselineskip}

\author{
  \textbf{Bowen Zhou}$^{1}$\thanks{Equal contribution.} \quad
  \textbf{Zhou Xu}$^{1}$\footnotemark[1] \quad
  \textbf{Wanli Li}$^{2}$\footnotemark[1] \quad
  \textbf{Jingyu Xiao}$^{3}$ \\
  \textbf{Pingan Gan}$^{1}$ \quad \textbf{Haoqian Wang}$^{1}$\thanks{Corresponding author.} \\
  $^{1}$Tsinghua University, Shenzhen, China \\
  $^{2}$Zhejiang University, Hangzhou, China \\
  $^{3}$The Chinese University of Hong Kong, Hong Kong, China \\
  \texttt{zhoubw25@mails.tsinghua.edu.cn} \\
  \texttt{xu-z25@mails.tsinghua.edu.cn} \\
  \texttt{wanli\_li@zju.edu.cn}
}

\maketitle

\begin{abstract}
Training-free KV cache compression is essential for deploying vision-language GUI agents under memory and latency constraints, yet existing methods are designed for generic language workloads and ignore the distinctive structure of GUI interaction traces. We characterize three GUI-specific workload properties---high inter-frame visual redundancy, extremely small UI-element spatial footprints, and near-uniform cross-layer attention sparsity---that cause existing schemes to retain as few as $39\%$ of oracle-important KV pairs at the standard $20\%$ budget. To address this, we propose \textbf{ST-Lite}, a training-free compression scheme whose three components each target one property: \textbf{Trajectory-aware Semantic Gating (TSG)} filters redundant historical frames, \textbf{Component-centric Spatial Saliency (CSS)} preserves fine-grained element boundaries, and a flat per-layer budget avoids hierarchical misallocation. Across seven GUI benchmarks and two backbones in the deployment-relevant $\beta \in [10\%, 40\%]$ window, ST-Lite consistently outperforms all existing compression baselines---matching or exceeding Full Cache task accuracy on the primary backbone at the $20\%$ budget---while delivering up to $2.35\times$ decoding speedup at fivefold compression. The implementation is available at \url{https://github.com/94wen94/ST-Lite}.
\end{abstract}

\section{Introduction}
\label{sec:intro}


Vision-language models have catalyzed autonomous Graphical User Interface (GUI) agents that navigate digital environments and execute multi-step workflows~\cite{chae2024web,christianos2023pangu,patel2024large,putta2024agent,qi2024webrl,wen2024autodroid,zhang2025appagent}. For example, WeaveBench evaluates long-horizon hybrid-interface workflows that require GUI control and CLI/code operations within a single trajectory~\citep{li2026weavebench}. A central obstacle to their real-time deployment is the Key-Value (KV) cache. High-resolution screenshots combined with long action histories cause the cache to grow linearly with sequence length, routinely reaching multi-gigabyte scale within fifty steps and dominating both GPU memory and decoding latency on memory-constrained accelerators.

\begin{figure*}[t]
    \centering 
    \includegraphics[width=1.0\linewidth]{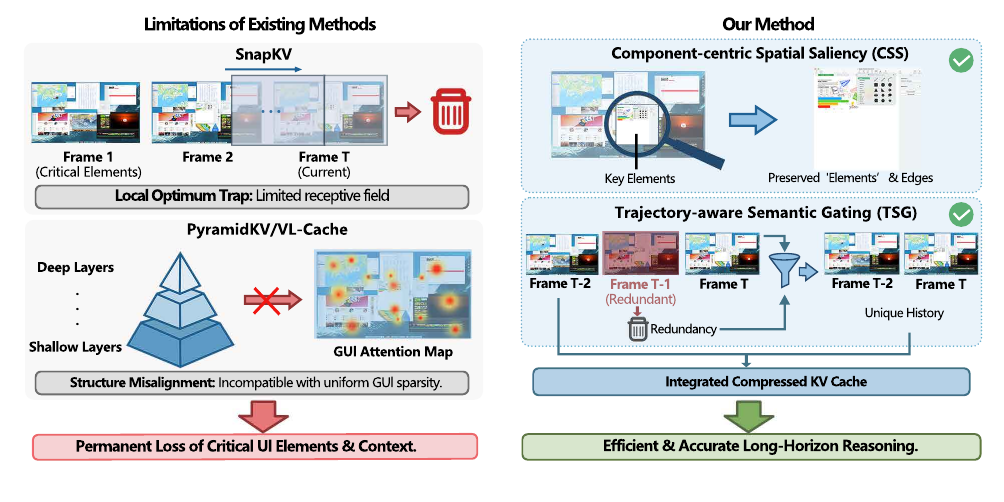}
    
    \caption{Conceptual illustration of ST-Lite compared with existing methods. The left side depicts the limitations of current strategies: window-based greedy methods (e.g., SnapKV) suffer from \textbf{local optimum traps}, while hierarchical allocation methods (e.g., PyramidKV) lead to \textbf{structure misalignment}. The right side demonstrates our ST-Lite, which integrates \textbf{Component-centric Spatial Saliency (CSS)} to preserve key interactive elements and \textbf{Trajectory-aware Semantic Gating (TSG)} to eliminate history redundancy, achieving efficient and accurate long-horizon reasoning.}
    
    \label{fig:intro_concept}
\end{figure*}

Existing training-free KV cache compression methods inherit assumptions from text and general multimodal settings, and these assumptions do not match the GUI agent regime. We make this concrete by adopting the Cache Hit Rate (CHR) of VL-Cache~\cite{tu2024vl} (Definition~3.1), which quantifies the fraction of oracle-important KV pairs that a compression scheme retains (\S\ref{app:chr}). On AITW~\cite{rawles2023androidinthewild} under UI-TARS-1.5-7B~\cite{qin2025ui} at the canonical $20\%$ budget, SnapKV~\cite{li2024snapkv}, PyramidKV~\cite{cai2024pyramidkv} and VL-Cache~\cite{tu2024vl} attain CHR of only $53.1\%$, $40.4\%$ and $39.0\%$ respectively (\autoref{tab:cache_hit_rate_app}, \autoref{fig:cache_hit_rate_app}). Two failure modes account for the gap (\autoref{fig:intro_concept}): a local optimum trap, in which window-based scoring favors recent attention and discards long-history dependencies required for the next action; and a structure misalignment, in which hierarchical allocation concentrates the cache budget on shallow layers while GUI workloads exhibit near-uniform sparsity across all layers, so deep-layer decision-relevant tokens are systematically lost.

We therefore characterize the statistical properties of GUI agent KV caches directly. Three properties jointly distinguish this regime from the settings that motivated prior compression methods. First, GUI trajectories exhibit substantial temporal redundancy: on $311$ consecutive frame pairs from AgentNetBench~\cite{wang2025opencua}, the median pairwise pixel overlap is $82.7\%$, and more than three quarters of pairs exceed $50\%$ overlap (\autoref{fig:time_redundancy}, \S\ref{app:frame_redundancy}). Second, decision-relevant content is spatially localized: on ScreenSpot Pro~\cite{li2025screenspot}, the median UI element occupies only $1.7\% \times 1.6\%$ of the screen (\S\ref{app:ui_element_size}). Third, attention sparsity is nearly uniform across layers: the coefficient of variation across the $28$ transformer layers is at most $1.4\%$ on both AITW and AgentNetBench (measured on OpenCUA-7B~\cite{wang2025opencua}, \autoref{tab:opencua_sparsity_app}; visually consistent on UI-TARS-1.5-7B, \autoref{fig:layer_sparsity}, \S\ref{app:sparsity_def}), in clear contrast to the pyramid-shaped (top-heavy) profiles that motivate hierarchical allocation in text and general vision-language settings.

Guided by these three properties, we propose \textbf{ST-Lite} (Spatio-Trajectory Lite), a training-free KV cache compression scheme whose three design choices are each motivated by one of the three measured properties and directly target the two failure modes above (\autoref{fig:intro_concept}). Trajectory-aware Semantic Gating (TSG) exploits temporal redundancy by filtering historical KV tokens already covered by the current frame, addressing the local optimum trap. Component-centric Spatial Saliency (CSS) exploits spatial localization by selecting tokens that form structural boundaries of small UI elements within a $3 \times 3$ Moore neighborhood. A uniform per-layer budget replaces hierarchical allocation and removes the structure-misalignment source.

We evaluate ST-Lite on seven GUI benchmarks on UI-TARS-1.5-7B and OpenCUA-7B. At the $20\%$ budget on ScreenSpot Pro~\cite{li2025screenspot}, ST-Lite reaches $42.2\%$ step accuracy and matches the uncompressed Full Cache baseline ($42.3\%$). On long-horizon AITW~\cite{rawles2023androidinthewild}, it improves Full Cache from $18.2\%$ to $20.1\%$ at the same budget (\autoref{tab:key_benchmarks_detail}), with up to $2.35\times$ decoding speedup at fivefold compression (\S\ref{sec:efficiency}). Under the CHR metric, ST-Lite consistently leads all baselines across budgets (\autoref{tab:cache_hit_rate_app}).

Our contributions are threefold. First, we identify two failure modes (recency-bias eviction and hierarchical misallocation) grounded in three measured workload properties (P1--P3) that jointly distinguish GUI agent caches from prior settings. Second, we show that the minimal operators satisfying the constraints imposed by P1--P3---TSG, CSS, and flat allocation---tightly recover performance; wider kernels, alternative metrics, and hierarchical budgets all degrade (\S\ref{app:neighborhood_size}, \S\ref{app:sim_metric}, \autoref{tab:budget_allocation_ablation}). Third, on seven GUI benchmarks ST-Lite delivers up to $2.35\times$ decoding speedup at fivefold compression while consistently outperforming all existing compression baselines---matching or exceeding Full Cache on the primary backbone and leading under the VL-Cache CHR metric at every tested budget.

\section{Background}
\label{sec:preliminary}

\begin{figure}[t]
    \centering
    \begin{subfigure}[b]{0.48\linewidth}
        \centering
        \includegraphics[width=\linewidth]{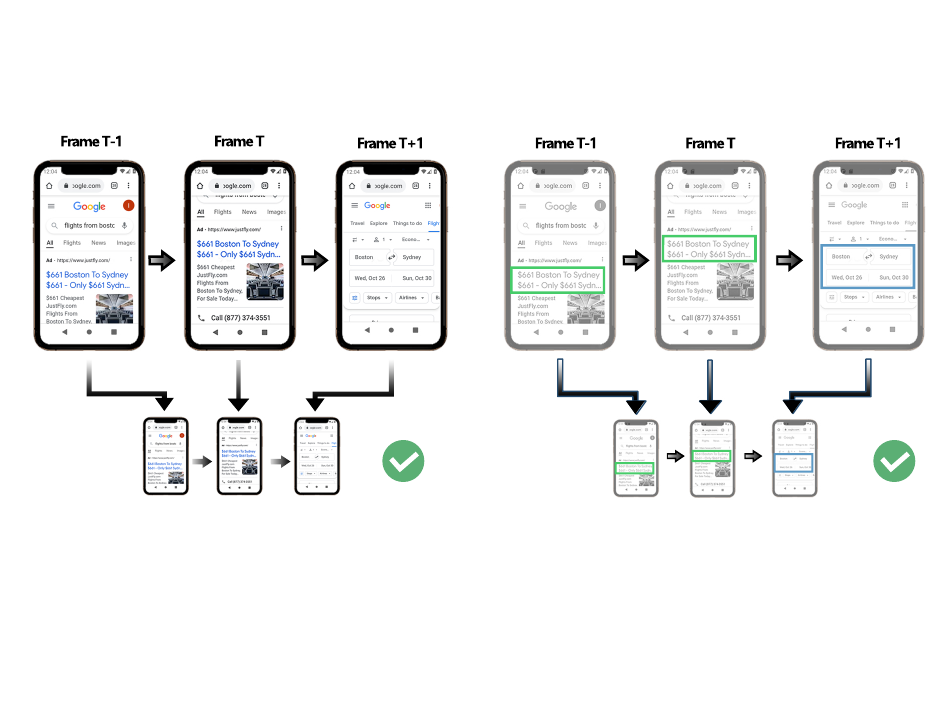}
        \caption{Original Sequence $\mathcal{O}$}
        \label{fig:seq_original}
    \end{subfigure}
    \hfill
    \begin{subfigure}[b]{0.48\linewidth}
        \centering
        \includegraphics[width=\linewidth]{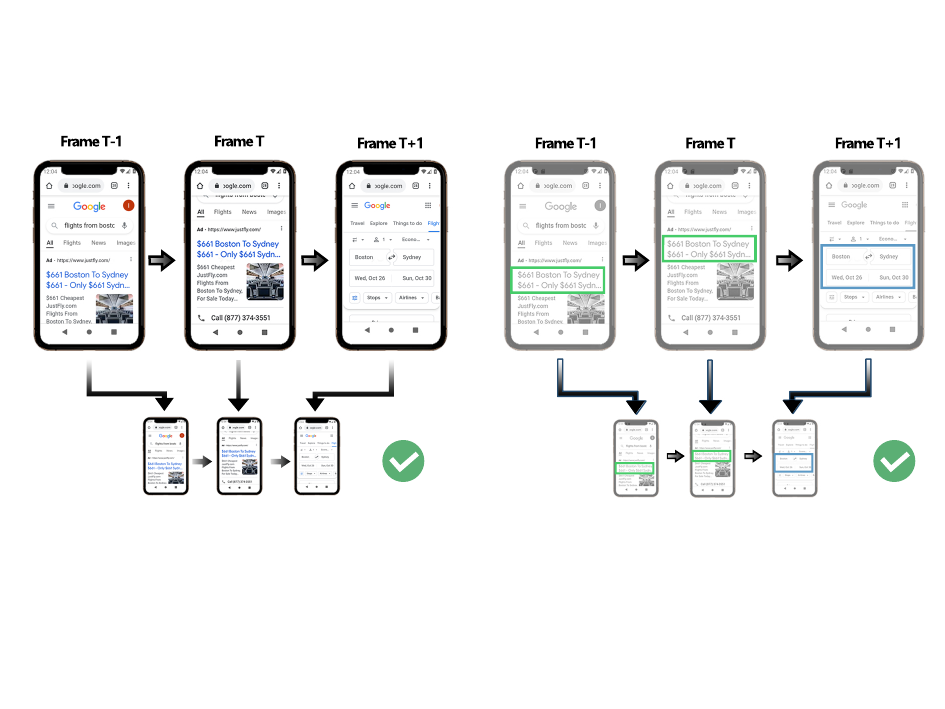}
        \caption{Pruned Sequence $\mathcal{O}_r$}
        \label{fig:seq_pruned}
    \end{subfigure}
    \caption{Task execution flows under different context settings. (a) The original sequence $\mathcal{O}$ with full historical frames leads to task success. (b) The pruned sequence $\mathcal{O}_r$, with trajectory-wise redundant frames removed, also succeeds, confirming the high historical redundancy in GUI trajectories.}
    \label{fig:time_redundancy}
\end{figure}

\begin{figure}[t]
    \centering
    \begin{subfigure}[b]{0.48\linewidth}
        \centering
        \includegraphics[width=\linewidth]{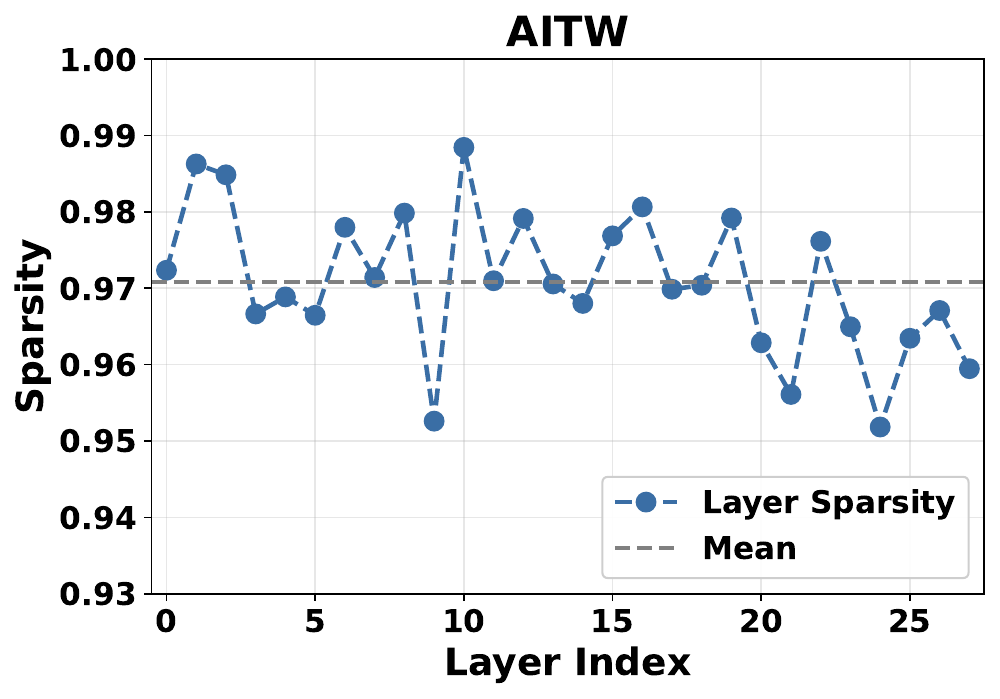}
        \caption{}
        \label{fig:sparsity_aitw}
    \end{subfigure}
    \hfill
    \begin{subfigure}[b]{0.48\linewidth}
        \centering
        \includegraphics[width=\linewidth]{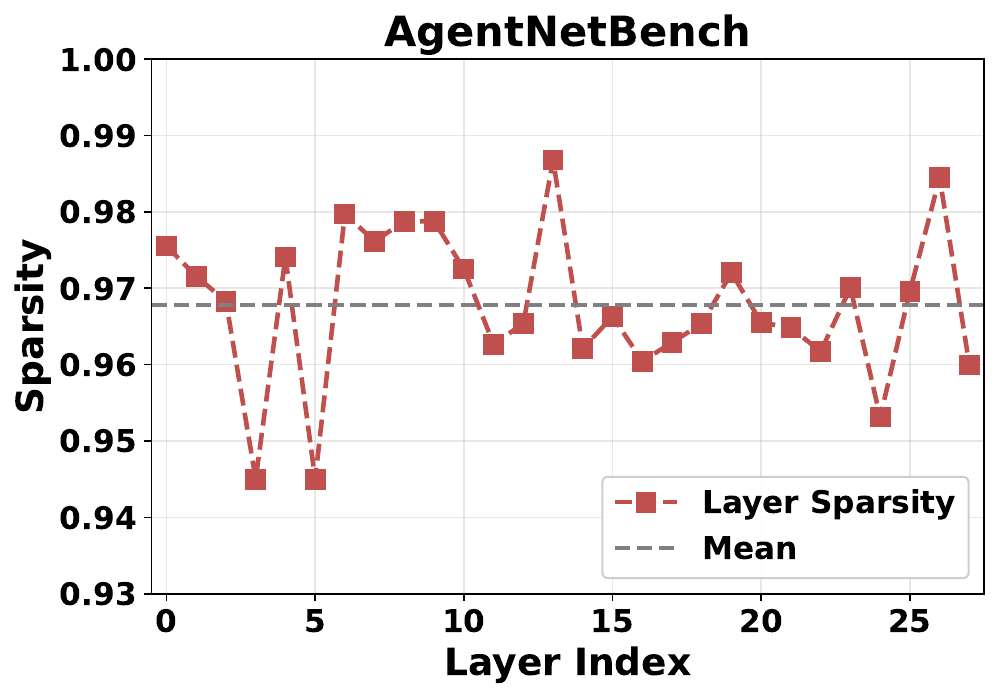}
        \caption{}
        \label{fig:sparsity_agentnet}
    \end{subfigure}
    \caption{\textbf{Layer-wise attention sparsity.} GUI agents on (a) AITW and (b) AgentNetBench exhibit near-uniform high sparsity across all transformer layers, in contrast to the pyramid-shaped profiles measured on text and general vision-language tasks.}
    \label{fig:layer_sparsity}
\end{figure}

\subsection{Problem Setup and KV Compression}
\label{sec:problem_definition}

We formulate GUI navigation as a long-horizon autoregressive generation problem. Let $\pi_\theta$ denote a vision-language model agent. At step $t$, given a user instruction $I$ and a high-resolution screenshot $o_t \in \Omega$~\cite{nguyen2025gui}, the agent generates an action $a_t \in \mathcal{A}$ conditioned on the cumulative history $H_t$~\cite{wang2024gui}, with
\begin{equation}
    H_t = \{ (o_0, a_0), (o_1, a_1), \dots, (o_{t-1}, a_{t-1}) \},
    \label{eq:history_context}
\end{equation}
and the inference objective $a_t = \operatorname*{argmax}_{a \in \mathcal{A}} P(a \mid I, o_t, H_t; \theta)$. Strict adherence is computationally prohibitive because the visual token explosion in high-resolution GUI streams inflates the KV cache linearly in $t$.

Efficient inference relies on the Key-Value cache, with a prefill phase that encodes multimodal inputs in parallel into initial KV states and a decoding phase that auto-regressively generates action tokens~\cite{tu2024vl}. Let $K^{(l)}, V^{(l)} \in \mathbb{R}^{L \times D}$ denote the cached Key and Value matrices at layer $l$ for sequence length $L$. Cache compression derives a retained index set $\mathcal{I}_{keep}$ with $|\mathcal{I}_{keep}| = B \ll L$ that minimizes the approximation error of the attention output. Two design axes span the existing solution space: \textbf{budget allocation}, where uniform schemes assign the same budget to every layer~\cite{li2024snapkv,zhang2023h2o,xiao2023efficient} and hierarchical schemes assign larger budgets to shallow layers~\cite{cai2024pyramidkv,tu2024vl}; and \textbf{token scoring}, for which we adopt the voting strategy of SnapKV~\cite{li2024snapkv} as our base attention prior
\begin{equation}
    A_{base}^{(i)} = \sum_{q=L-\delta+1}^{L} \operatorname{Softmax}\left( \frac{\mathbf{q}_q \cdot \mathbf{k}_i^\top}{\sqrt{d}} \right),
    \label{eq:base_attention}
\end{equation}
with the retained set $\mathcal{I}_{keep} = \operatorname{Top}_B(\{A_{base}^{(i)}\}_{i=1}^L)$ and budget $B = \lfloor \beta \cdot L \rfloor$ at compression ratio $\beta \in (0, 1]$.

\subsection{Cache Hit Rate Evaluation}
\label{sec:chr}

Compression schemes are typically evaluated by aggregate task accuracy, which does not reveal how faithfully the compressed cache preserves the tokens the model actually relies on. We adopt the CacheHitRate of VL-Cache~\cite{tu2024vl} (Definition~3.1) and extend it to multi-step GUI agents. At each step $t$, we run inference with the \emph{full}, uncompressed cache and record the attention distribution over all $L$ KV positions when generating the first action token. The \textbf{oracle set} $\mathcal{I}^{*(t)}$ is defined as the top-$B$ positions ranked by this attention score, where $B = \lfloor \beta \cdot L \rfloor$ is the cache budget; note that both the oracle set and the compression scheme retain the same number of tokens $B$, so CHR measures whether they agree on \emph{which} tokens to keep. For any compression scheme $\Pi$ operating at budget $\beta$, let $\mathcal{I}_{keep}^{(t)}(\Pi, \beta)$ be the set of $B$ KV positions it actually retains at step $t$. The Cache Hit Rate is
\begin{equation}
    \mathrm{CHR}(\Pi, \beta) = \frac{1}{T}\sum_{t=1}^{T} \frac{|\mathcal{I}_{keep}^{(t)}(\Pi, \beta) \cap \mathcal{I}^{*(t)}|}{|\mathcal{I}^{*(t)}|},
    \label{eq:chr}
\end{equation}
that is, the fraction of oracle-important tokens retained by the scheme, averaged across all $T$ evaluation steps.\footnote{A scheme that perfectly reproduces the Full Cache attention ranking achieves $\mathrm{CHR} = 100\%$; random selection achieves $\beta$ in expectation (e.g.\ $20\%$ at $\beta{=}20\%$). The extension from VL-Cache's single-turn formulation to the multi-step agent setting only adds the $\frac{1}{T}\sum_t$ average, since each GUI step has a different context length and oracle set.} On AITW under UI-TARS-1.5-7B at the canonical $\beta = 20\%$, SnapKV~\cite{li2024snapkv} attains $53.1\%$, PyramidKV~\cite{cai2024pyramidkv} $40.4\%$ and VL-Cache~\cite{tu2024vl} $39.0\%$, all far below the $100\%$ upper bound. The gap widens at tighter budgets, with every baseline dropping below $44\%$ at $\beta \le 10\%$ (\autoref{fig:cache_hit_rate_app} in \S\ref{sec:chr_results}, \S\ref{app:chr}). This deficit motivates the characterization that follows.

\subsection{Workload Characterization}
\label{sec:characterization}

We measure three statistical properties of GUI agent KV workloads across GUI benchmarks and backbones: two expose failure modes (\hyperref[prop:f1]{F1}, \hyperref[prop:f2]{F2}) in mainstream schemes; the remaining one (\hyperref[prop:p2]{P2}) is a CSS design constraint.

\noindent\textbf{P1: High inter-frame pixel redundancy, and the local optimum trap (failure mode \hyperref[prop:f1]{F1}).}\label{prop:p1} GUI trajectories carry substantial temporal redundancy. On $311$ consecutive frame pairs from AgentNetBench the median pairwise pixel overlap is $82.7\%$, and more than three quarters of pairs exceed $50\%$ overlap (\autoref{fig:time_redundancy}, \S\ref{app:frame_redundancy}). The base attention prior $A_{base}$ aggregates over the last $\delta$ tokens and inherits the recency bias of positional encoding: when the recent window is dominated by tokens that re-state the current frame, distant decision-critical tokens are scored below recent ones, and the Softmax further amplifies this gap, driving $A_{base}$ at distant anchors toward zero~\cite{zhang2023h2o,xiao2023efficient}; see \S\ref{app:failure_mode_math} for the bound. This is the source of the SnapKV CHR of $53.1\%$ at $\beta = 20\%$.

\noindent\textbf{P2: Small UI element occupancy.}\label{prop:p2} Decision-relevant content is spatially localized. On ScreenSpot Pro the median UI element occupies only $1.7\% \times 1.6\%$ of the screen (\S\ref{app:ui_element_size}), smaller than a single visual token in the linear dimension under the standard Qwen-VL patch grid. The information needed to identify a target element resides in a sparse subset of visual tokens together with their immediate neighbors that mark the element boundary.

\noindent\textbf{P3: Near-uniform cross-layer sparsity, and the structure misalignment (failure mode \hyperref[prop:f2]{F2}).}\label{prop:p3} Attention sparsity is nearly invariant across transformer layers. The coefficient of variation across the $28$ transformer layers is at most $1.4\%$ on both AITW and AgentNetBench (measured on OpenCUA-7B, \autoref{tab:opencua_sparsity_app}; visually consistent on UI-TARS-1.5-7B, \autoref{fig:layer_sparsity}, \S\ref{app:sparsity_def}), in contrast to the pyramid-shaped profiles measured on text and general vision-language tasks~\cite{cai2024pyramidkv,tu2024vl}. Hierarchical schemes derive layer-wise budgets by normalizing per-layer attention mass, but under \hyperref[prop:p3]{P3} the inter-layer differences are at the level of numerical noise, so the normalization amplifies noise into chaotic per-layer budgets that concentrate the cache on shallow layers and evict deep-layer decision-relevant tokens (\S\ref{app:failure_mode_math}). This is the source of the PyramidKV CHR of $40.4\%$ and the VL-Cache CHR of $39.0\%$.

\noindent\textbf{Design implication.} An effective scheme must bypass the recency bias of $A_{base}$ via an independent signal for distant decision-relevant tokens (countering \hyperref[prop:f1]{F1} under \hyperref[prop:p1]{P1}), preserve the boundary tokens of small UI elements (exploiting \hyperref[prop:p2]{P2}), and abandon hierarchical allocation in favor of a uniform per-layer budget (consistent with \hyperref[prop:p3]{P3}). \S\ref{sec:method} instantiates these requirements as ST-Lite.

\section{ST-Lite Scheme}
\label{sec:method}

Addressing the three workload properties identified in \S\ref{sec:characterization}, we propose \textbf{ST-Lite} (Spatio-Trajectory Lite), a training-free KV cache compression scheme illustrated in \autoref{fig:main_framework}. The deliberate simplicity of each operator is a design choice: training-free compression must add negligible overhead, so we seek the \emph{minimal} operator that suffices for each empirically measured property. ST-Lite has three components: \textbf{Trajectory-aware Semantic Gating (TSG)} removes cross-frame redundancy (\hyperref[prop:p1]{P1}, \S\ref{app:frame_redundancy}) with a similarity signal that bypasses $A_{base}$'s recency bias; \textbf{Component-centric Spatial Saliency (CSS)} preserves boundary tokens of small UI elements (\hyperref[prop:p2]{P2}, \S\ref{app:ui_element_size}); and a flat per-layer budget replaces hierarchical allocation that fails under near-uniform sparsity (\hyperref[prop:p3]{P3}).

\begin{figure*}[t]
    \centering
    \includegraphics[width=1.0\linewidth]{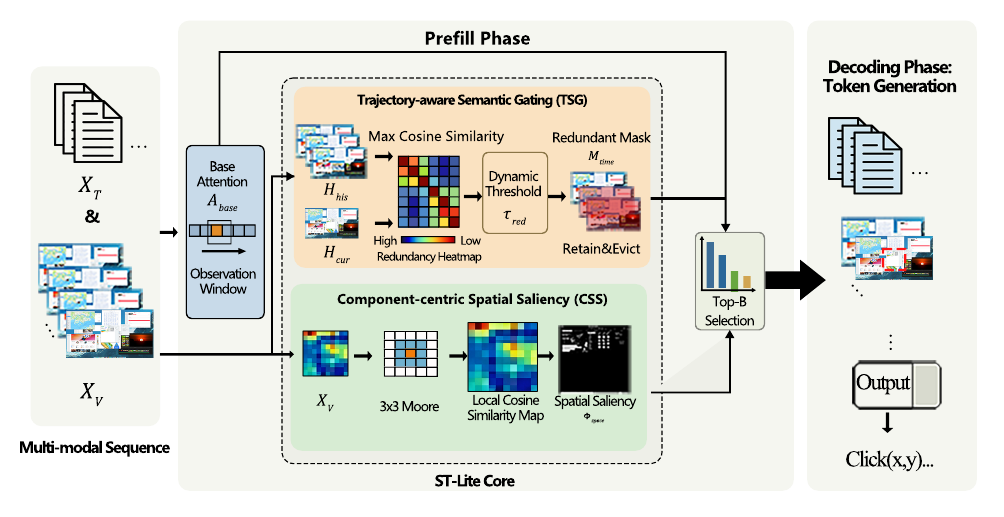} 
    
    \caption{\textbf{The overall architecture of ST-Lite.} Our scheme dynamically optimizes the KV cache through two complementary modules: 
    (1) \textbf{Component-centric Spatial Saliency (CSS)}, which identifies and preserves spatially salient regions (e.g., functional buttons) within each frame using local hidden-state cosine similarity within a $3{\times}3$ neighborhood; 
    and (2) \textbf{Trajectory-aware Semantic Gating (TSG)}, which filters out redundant historical states by measuring semantic similarity between historical tokens and the current frame. 
    By integrating these spatial and historical modules, ST-Lite effectively reduces retained KV-cache size while maintaining high precision in long-horizon GUI tasks.}
    \label{fig:main_framework}
\end{figure*}

\subsection{Component-centric Spatial Saliency (CSS)}
CSS targets property \hyperref[prop:p2]{P2} of \S\ref{sec:characterization}: decision-relevant content occupies a small fraction of the screen, and the tokens that identify a UI element lie at the element boundary rather than its interior. Inside a $3 \times 3$ Moore neighborhood we measure the cosine similarity between the central visual token and its eight spatial neighbors, and prioritize tokens whose neighborhood agrees least with the center, on the grounds that they sit on a structural boundary.

Let $h_{u,v} \in \mathbb{R}^D$ be the hidden state (Key vector at the operating layer) of the visual token at grid coordinate $(u,v)$, and let $\mathcal{N}_{u,v}$ denote its eight Moore neighbors. The local uniformity score is
\begin{equation}
\begin{aligned}
    \mathcal{H}_{u,v} &= \frac{1}{|\mathcal{N}_{u,v}|} \sum_{(p,q) \in \mathcal{N}_{u,v}} \operatorname{Cos}(h_{u,v}, h_{p,q}) \\
    &= \frac{1}{|\mathcal{N}_{u,v}|} \sum_{(p,q) \in \mathcal{N}_{u,v}} \frac{h_{u,v} \cdot h_{p,q}}{\|h_{u,v}\| \|h_{p,q}\|},
\end{aligned}
\label{eq:css_calc}
\end{equation}
and the spatial saliency score is its complement,
\begin{equation}
    \Phi_{space}(x_i) = 1 - \mathcal{H}_{u,v},
\end{equation}
where $x_i$ is the KV pair for the visual token at $(u,v)$. CSS also mitigates F1 within a single frame: boundary tokens of small elements receive low $A_{base}$ because their attention mass is diluted by the dominant background; CSS provides an attention-independent prior that rescues them (\S\ref{app:css_bbox}). The $3 \times 3$ choice is forced by \hyperref[prop:p2]{P2}: the median UI element occupies $1.7\% \times 1.6\%$ of the screen (\S\ref{app:ui_element_size}), approximately one visual token wide under the Qwen-VL patch grid, so a wider kernel dilutes the boundary signal (validated in \S\ref{app:neighborhood_size}).

\subsection{Trajectory-aware Semantic Gating (TSG)}
\label{sec:tsg}
TSG targets property \hyperref[prop:p1]{P1}: the high inter-frame pixel redundancy of GUI trajectories pushes the recency-biased base prior $A_{base}$ to evict distant decision-relevant tokens (failure mode \hyperref[prop:f1]{F1}, \S\ref{app:failure_mode_math}). We bypass $A_{base}$ for the history slice with an independent similarity signal that compares each historical visual token to the current frame, retaining only those that contribute information not already present in the current observation.

Let $H_{his}$ and $H_{cur}$ denote the hidden states (Key vectors at the operating layer) of historical and current-frame visual tokens. For each historical token $h_i \in H_{his}$, the redundancy score is its maximum cosine similarity to the current frame,
\begin{equation}
    \rho_i = \max_{h_j \in H_{cur}} \frac{h_i \cdot h_j}{\|h_i\| \|h_j\|}.
\end{equation}
Given the per-layer cache budget $B$, we sort the redundancy scores in ascending order to obtain $\hat{\rho}$ and set the threshold to the $\min(B, |\mathcal{T}_{pool}|)$-th smallest score (where $\mathcal{T}_{pool}$ is the set of historical visual tokens),
\begin{equation}
    \tau_{red} = \hat{\rho}_{B}, \quad \hat{\rho} = \operatorname{Sort}_{\text{asc}}(\{\rho_i\}),
    \label{eq:budget_threshold}
\end{equation}
and emit a hard gate
\begin{equation}
    M_{time}^{(i)} = \mathbb{I}[\rho_i \le \tau_{red}],
    \label{eq:temporal_mask}
\end{equation}
that retains the $B$ historical tokens least covered by the current frame. The gate is independent of the recency-biased $A_{base}$, which is what allows it to keep long-history anchors that $A_{base}$ would otherwise rank below recent redundant tokens.

\subsection{Integrated Scoring and Uniform Budget}
\label{sec:integrated}
The CSS score $\Phi_{space}$ enhances the base prior $A_{base}$ on the visual slice, and the TSG mask $M_{time}$ filters the visual slice before scoring. For the $i$-th token, the final retention score is
\begin{equation}
\resizebox{0.88\linewidth}{!}{$
    \mathcal{S}^{(i)} = \begin{cases}
        A_{base}^{(i)}, & x_i \in X_T \\
        M_{time}^{(i)} \cdot \bigl( A_{base}^{(i)} + \alpha\, \Phi_{space}(x_i) \bigr), & x_i \in X_V
    \end{cases}
$}
    \label{eq:modal_scoring}
\end{equation}
where $X_T$ and $X_V$ are the text and visual slices of the KV cache, and $\alpha \ge 0$ controls the CSS weight (default $\alpha{=}1$; sensitivity in \S\ref{app:css_tsg_interaction}; task-relevance filtering in \S\ref{app:css_task_relevance}). We adopt $M_{time}^{(i)}=1$ for current-frame visual tokens (exempt from TSG eviction; still subject to global top-$B$). The top-$B$ tokens by $\mathcal{S}^{(i)}$ form the retained set $\mathcal{I}_{keep}$. We allocate the same budget $B$ to every transformer layer, in line with property \hyperref[prop:p3]{P3}: the cross-layer coefficient of variation of attention sparsity is at most $1.4\%$ (\S\ref{app:sparsity_def}), so any hierarchical allocation derived from per-layer attention statistics amplifies numerical noise (failure mode \hyperref[prop:f2]{F2}, \S\ref{app:failure_mode_math}); a direct flat-vs-hierarchical ablation within ST-Lite confirms $+2.8$--$3.2\%$ accuracy gain for flat allocation (\autoref{tab:budget_allocation_ablation}). \S\ref{sec:chr_results} verifies that this scheme leads all baselines under the CHR metric at every tested budget; the full pseudocode and per-component cost analysis are in \S\ref{app:algorithm} and \S\ref{app:complexity}.

\section{Experiments}
\label{sec:experiment}

\subsection{Experimental Setup}

\noindent\textbf{Benchmarks.} We evaluate ST-Lite on seven GUI benchmarks; the primary analysis centres on ScreenSpot Pro~\cite{li2025screenspot} (single-frame high-resolution grounding), AITW~\cite{rawles2023androidinthewild} (long-horizon Android control), and AgentNetBench~\cite{wang2025opencua} (multi-step Web workflows). Results on four additional benchmarks (ScreenSpotV2, OSWorld-Verified, Multimodal-Mind2Web, AndroidControl) and all sensitivity studies are in the appendix (\S\ref{app:full_results}--\S\ref{app:paradigms}).

\noindent\textbf{Baselines and Implementation.} We deploy ST-Lite on two backbones: UI-TARS-1.5-7B~\cite{qin2025ui} (Qwen2.5-VL + RLHF) and OpenCUA-7B~\cite{wang2025opencua} (Qwen2-VL, SFT only). Baselines: SnapKV~\cite{li2024snapkv}, PyramidKV~\cite{cai2024pyramidkv}, VL-Cache~\cite{tu2024vl}, and GUI-KV~\cite{huang2025guikv}. Default configuration: SnapKV-style window base ($\delta{=}8$), $\alpha{=}1$, cosine similarity, $3{\times}3$ CSS kernel. All main-text numbers report per-step accuracy; appendix tables that vary one design axis (e.g.\ base scorer, $\delta$) produce different absolute numbers and are not alternative reports of the headline result. Full details: \S\ref{app:datasets}, \S\ref{app:implementation}.

\subsection{Main Results}
\label{sec:main_results}

\begin{figure*}[t]
    \centering
    \includegraphics[width=1.0\textwidth]{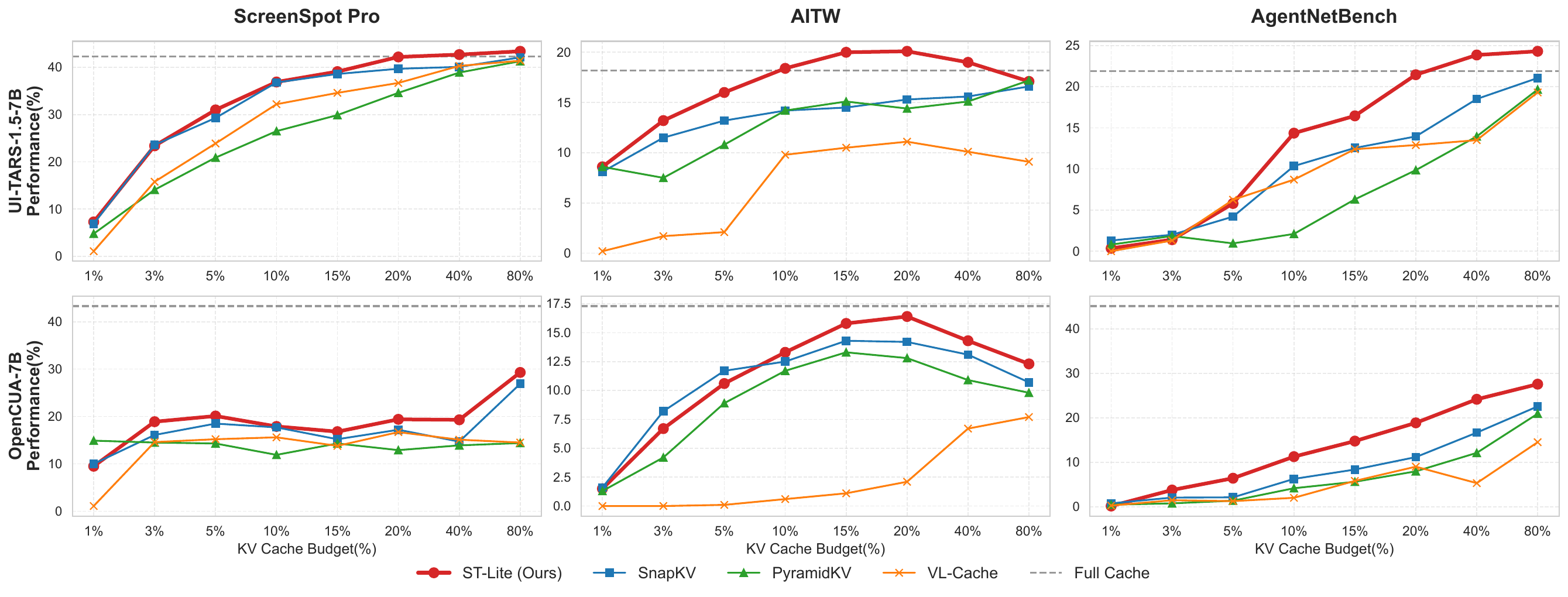} 
    \caption{Evaluation results on ScreenSpot Pro, AITW, and AgentNetBench with varied cache budgets. ST-Lite achieves comparable accuracy against Full Cache and outperforms multiple baselines with limited KV cache budget. On AITW, ST-Lite exceeds Full Cache at $\beta{=}20\%$; this less-is-more effect is analysed with paired statistical tests in \S\ref{app:less_is_more}.}
    \label{fig:main_results}
\end{figure*}

We evaluate the performance of \textbf{ST-Lite} by sweeping the KV cache budget ratio $\beta$ from 1\% to 100\%. The results, summarized in \autoref{fig:main_results} and \autoref{tab:key_benchmarks_detail}, show a strong trade-off between inference performance and efficiency in the deployment-relevant $\beta \in [10\%, 40\%]$ window.

\noindent\textbf{Superior Robustness under Extreme Budgets.} 
Under $\beta \in [10\%, 40\%]$, PyramidKV and VL-Cache collapse on ScreenSpot Pro because hierarchical allocation is misaligned with the uniformly sparse GUI attention pattern. ST-Lite's CSS preserves fine-grained UI elements and remains within $0.1\%$ of Full Cache at $\beta{=}20\%$ on UI-TARS-1.5-7B.

\noindent\textbf{Noise Suppression and Less-is-More~\cite{chen2025less} Phenomenon.} 
At $\beta{=}20\%$ on AITW, ST-Lite achieves \textbf{20.1\%} step accuracy, exceeding Full Cache's \textbf{18.2\%} (\autoref{tab:key_benchmarks_detail}, per-step aggregation over $4{,}694$ steps; full sweep in \S\ref{app:less_is_more}). Moreover, the CSS+TSG scoring advantage over base-only scoring grows with trajectory length (\S\ref{app:per_category_breakdown}): from $+2.5\%$ on short ($1$--$3$-step) to $+5.9\%$ on long ($14$+) trajectories. We attribute this to TSG acting as a dynamic information filter: longer histories accumulate more visually redundant state transitions, and pruning these stale KV pairs sharpens attention toward current goal-relevant evidence.

\noindent\textbf{Cross-backbone consistency.} We deliberately select two backbones representing the two dominant GUI agent training paradigms: UI-TARS-1.5-7B (RLHF) and OpenCUA-7B (pure SFT). Despite their differing compression tolerance---RLHF yields more robust reasoning under cache perturbation than SFT, which encodes task solutions as brittle attention routes (\S\ref{app:sft_sensitivity})---ST-Lite ranks at or near the top of the $\beta \in [10\%, 40\%]$ window on both backbones in \autoref{tab:key_benchmarks_detail}, consistently outperforming all baselines. A supplementary cross-architecture experiment on UGround-7B (LLaVA-NeXT backbone, \S\ref{app:cross_arch}) confirms that the gains also transfer beyond the Qwen-VL family. This cross-paradigm effectiveness confirms that ST-Lite's gains stem from exploiting GUI-intrinsic structural properties (P1--P3) rather than backbone-specific artefacts.

\subsection{Cache Hit Rate Results}
\label{sec:chr_results}

Step accuracy aggregates correct decisions but does not reveal which tokens a scheme must retain. The Cache Hit Rate from \S\ref{sec:chr} measures the token-level overlap between each scheme's retained set and the Full Cache attention oracle. \autoref{fig:cache_hit_rate_app} reports CHR for ST-Lite and the three baselines on AITW under UI-TARS-1.5-7B across $\beta \in \{3, 5, 10, 15, 20\}\%$; the corresponding numerical table is \autoref{tab:cache_hit_rate_app}. At the canonical $\beta = 20\%$ ST-Lite reaches $66.1\%$, a $+13.0\%$ lead over SnapKV at $53.1\%$; PyramidKV and VL-Cache attain only $40.4\%$ and $39.0\%$. The lead widens through $\beta \in \{10, 15\}\%$; at $\beta{=}10\%$ every baseline drops below $44\%$ while ST-Lite recovers $58.6\%$; the gap narrows at both budget extremes ($\beta{=}3\%$ and $\beta{=}20\%$). The full CHR appendix (\S\ref{app:chr}) further reports an independence-model check that rules out the hypothesis that CHR is a re-labeling of accuracy.

\begin{figure}[t]
\centering
\includegraphics[width=\linewidth]{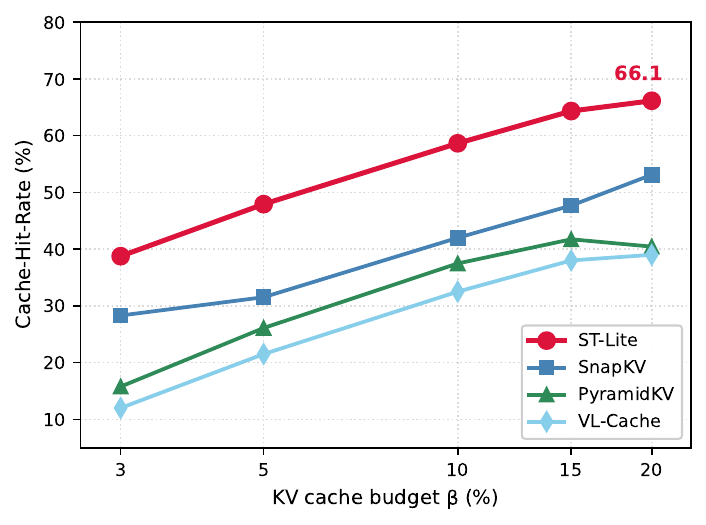}
\caption{\textbf{Cache Hit Rate} on AITW (UI-TARS-1.5-7B) across $\beta \in \{3, 5, 10, 15, 20\}\%$. ST-Lite leads SnapKV, PyramidKV and VL-Cache at every budget, with a $+13.0\%$ gap to SnapKV at $\beta = 20\%$, peaking through $\beta \in \{10, 15\}\%$ (smaller at the budget extremes).}
\label{fig:cache_hit_rate_app}
\end{figure}

\subsection{Ablation Study}
\label{sec:ablation}

To decouple the contributions of the spatial and historical dimensions, \autoref{tab:ablation_table} presents the performance of each component at a 20\% budget. We further investigate the scheme's scalability relative to historical trajectory length in \autoref{fig:history_frames_scaling}.

\begin{table}[h]
\centering

\resizebox{0.95\linewidth}{!}{
    \begin{tabular}{l|ccc}
    \toprule
    \textbf{Method} & \textbf{ScreenSpot Pro} & \textbf{AITW} & \textbf{AgentNet} \\
    \midrule
    Full Cache ($\beta{=}100\%$) & 42.3 & 18.2 & 17.5 \\
    \midrule
    SnapKV ($\beta{=}20\%$) & 39.7 & 15.3 & 9.3 \\
    \textbf{ST-Lite (CSS only)} & 42.2 & 17.4 & 14.1 \\
    \textbf{ST-Lite (TSG only)} & 39.6 & 18.5 & 16.1 \\
    \rowcolor{gray!15} 
    \textbf{ST-Lite (CSS+TSG)} & \textbf{42.2} & \textbf{20.1} & \textbf{17.0} \\
    \bottomrule
    \end{tabular}
}
\caption{Ablation contributions of \textbf{CSS} and \textbf{TSG} at $\beta{=}20\%$. All rows use the same default configuration (SnapKV-style window base, $\delta{=}8$). CSS-only sets $\alpha{=}1$, TSG off; TSG-only sets $\alpha{=}0$, TSG on.}
\label{tab:ablation_table}
\end{table}

\noindent\textbf{Component contributions.} \autoref{tab:ablation_table} decouples CSS and TSG at $\beta{=}20\%$. Note that SnapKV already uses the same flat per-layer budget as ST-Lite (uniform allocation, \S\ref{sec:problem_definition}), so it serves as the ``base scoring + flat budget'' control: any gain of CSS-only, TSG-only, or CSS+TSG over SnapKV is attributable purely to the added module, not to budget redistribution. CSS-only dominates on single-frame ScreenSpot Pro ($42.2$ compared with $39.7$ for SnapKV); TSG has no effect here since no history exists. TSG-only dominates on long-horizon AITW ($18.5\%$, exceeding Full Cache's $18.2\%$); on AgentNetBench, TSG-only reaches $16.1\%$, recovering $92\%$ of the Full Cache ceiling. Combining the two recovers the highest accuracy on every benchmark, consistent with the property-driven design. \autoref{fig:history_frames_scaling} further shows ST-Lite sustains its lead through $10$+ historical frames where baselines plateau or degrade.

\noindent\textbf{Design sensitivity.} The apparent simplicity of CSS and TSG masks sharp sensitivity to implementation choices: enlarging the CSS kernel from $3{\times}3$ to $5{\times}5$ drops accuracy by $1.5\%$ (\S\ref{app:neighborhood_size}); replacing cosine with dot-product or L2 in TSG loses $0.8$--$0.9\%$ (\S\ref{app:sim_metric}); switching from flat to hierarchical budget allocation costs $2.8$--$3.2\%$ (\autoref{tab:budget_allocation_ablation}). Replacing TSG's adaptive gate with a random gate (matched drop rate) yields no gain over CSS-only, and substituting random spatial scores for Laplacian CSS yields no gain over TSG-only (\S\ref{app:necessity_controls}). These results confirm that the P1--P3 characterization tightly constrains the instantiation---seemingly equivalent alternatives systematically degrade.

\begin{figure}[!ht]
    \centering
    \includegraphics[width=1.0\linewidth]{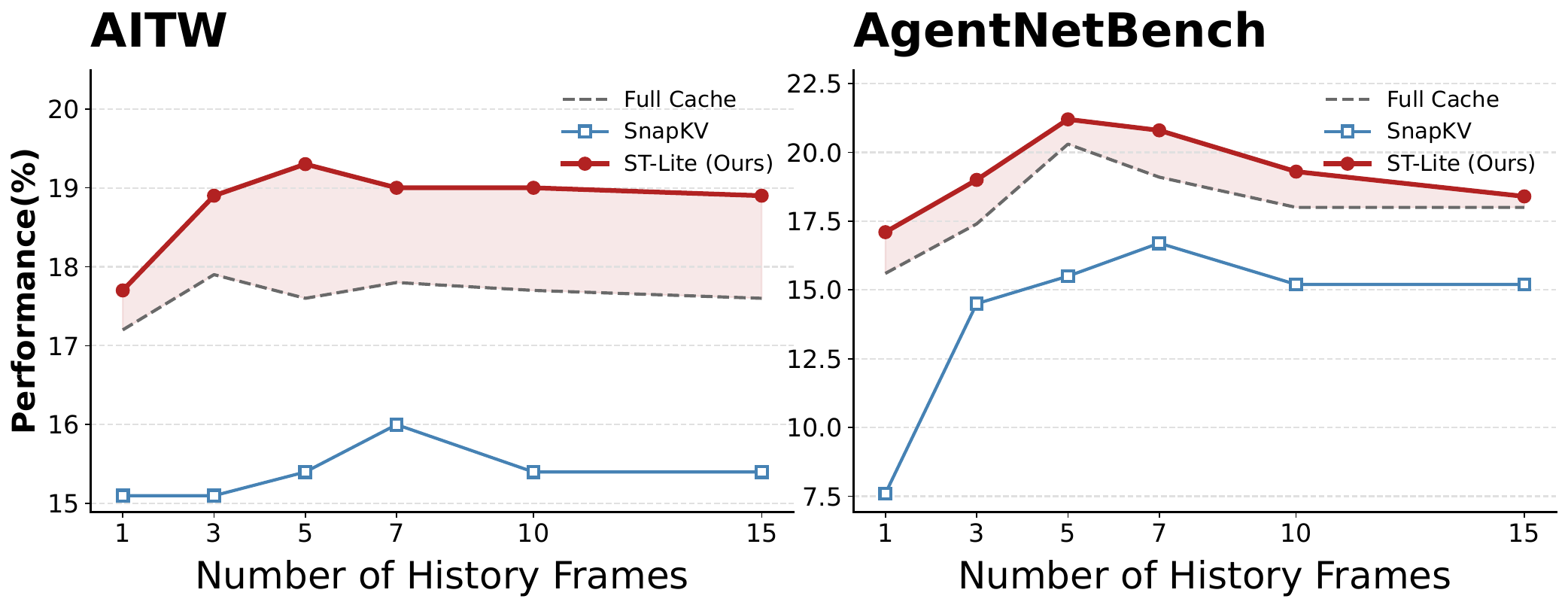}
    \caption{Impact of historical frame count on model success rate. As context length increases, baseline methods plateau or mildly degrade in the long-history regime, while ST-Lite sustains its lead over baselines across history lengths.}
    \label{fig:history_frames_scaling}
\end{figure}

\subsection{Efficiency and Scalability Analysis}
\label{sec:efficiency}

\autoref{tab:speedup_comparison} reports inference speedup on AgentNetBench at $\beta = 20\%$: at $10$ screenshots ST-Lite delivers a $2.35\times$ decode speedup and $1.80\times$ end-to-end on a single $48$\,GB GPU, with prefill at $0.97$--$0.99\times$ (per-component cost: \S\ref{app:complexity}). The 7B per-trajectory-length breakdown is in \autoref{tab:latency_by_length_app}; OpenCUA-32B retains $1.94\times$ decode (\autoref{tab:opencua32b_latency_app}). The decode speedup grows with context length as the memory-bound attention cost increasingly dominates total inference time (\S\ref{app:latency}), making ST-Lite especially attractive for long-horizon deployment.

\begin{table}[htbp]
\centering

\resizebox{0.97\linewidth}{!}{
    \begin{tabular}{cccc}
        \toprule
        \textbf{Screenshots} & \textbf{Prefill Speedup} & \textbf{Decoding Speedup} & \textbf{End-to-End Speedup} \\
        \midrule
        3  & 0.97 & 1.55 & 1.30 \\
        5  & 0.98 & 1.95 & 1.55 \\
        10 & 0.99 & \textbf{2.35} & \textbf{1.80} \\
        \bottomrule
    \end{tabular}
}
\caption{Inference speedup across screenshot scales at $\beta{=}20\%$. The prefill phase incurs at most $1$--$2\%$ overhead; ST-Lite accelerates the memory-bound decoding phase up to \textbf{2.35$\times$}.}
\label{tab:speedup_comparison}
\end{table}

\subsection{Qualitative Analysis}
\label{sec:visualization}

The token-retention maps in \S\ref{app:qualitative_ext} (\autoref{fig:appendix_viz}) show that CSS preserves tokens at the boundaries of small UI elements (ground-truth red boxes), while TSG removes redundant historical regions; baseline schemes either drift away from boundaries or retain obsolete states. The element-type breakdown (\S\ref{app:elem_type}) further confirms that CSS's spatial rescue is most critical for small icons, whose median footprint is below a single visual token (\S\ref{app:ui_element_size})---without CSS these boundary tokens are systematically evicted by attention-only scoring.
\section{Related Work}
\label{sec:related_work}

\noindent\textbf{Efficient GUI agents.} GUI context is highly redundant~\cite{chen2025less,wu2024atlas,wang2024mobile}; recent work addresses this via component-level decomposition~\cite{lin2025showui,you2024ferret}, improved reasoning~\cite{liu2025infiguiagent,li2026literesearcher,ye2025mobileagentv3}, or input token pruning~\cite{xu2026guipruner}. These either require training or operate at the input level. ST-Lite is complementary to such reasoning-time investments~\cite{zhang2025appagent,christianos2023pangu} because it neither retrains the backbone nor changes the reasoning loop.

\noindent\textbf{KV cache compression.} MixKV~\cite{liu2025mixkv} mixes importance with diversity scoring; HybridKV~\cite{zeng2026hybridkv}, AirCache~\cite{huang2025aircache}, and PrefixKV~\cite{wang2025prefixkv} exploit inter-modal relevancy for VLMs; ChunkKV~\cite{liu2025chunkkv} and KeepKV~\cite{tian2025keepkv} operate at the semantic-chunk level; MEDA~\cite{wan2025meda} allocates budgets via attention entropy. Earlier methods include window-based SnapKV~\cite{li2024snapkv}, hierarchical PyramidKV~\cite{cai2024pyramidkv}, modality-aware VL-Cache~\cite{tu2024vl}, eviction policies~\cite{zhang2023h2o,xiao2023efficient}, quantization~\cite{liu2024kivi,hooper2024kvquant}, and the concurrent GUI-KV~\cite{huang2025guikv} ($\sim$28\% prefill overhead, \S\ref{app:latency}). ST-Lite is more lightweight ($<$2\%) with formal failure-mode analysis (\hyperref[prop:f1]{F1}, \hyperref[prop:f2]{F2}, \S\ref{app:failure_mode_math}) and leads on CHR across all budgets (\autoref{tab:cache_hit_rate_app}).

\section{Conclusion}
ST-Lite is a training-free KV cache compression scheme whose three components target three measured GUI workload properties, matching or exceeding Full Cache accuracy at $10$--$40\%$ budget with up to $2.35\times$ decoding speedup.

\section*{Limitations}

\noindent\textbf{Architecture coverage.}
The current state-of-the-art GUI agent models---UI-TARS, OpenCUA, Aguvis, among others---are predominantly built on the Qwen-VL architecture family. Although our cross-architecture experiment on UGround-7B (\S\ref{app:cross_arch}) confirms that ST-Lite's gains transfer to the LLaVA-NeXT backbone, this architecture is now relatively dated compared with the latest vision-language designs.

\noindent\textbf{Fixed budget and hand-engineered design.}
Like all existing KV cache compression methods (SnapKV, PyramidKV, H\textsubscript{2}O, etc.), ST-Lite requires a pre-specified budget ratio $\beta$ and is hand-engineered rather than learned. In practice the optimal $\beta$ may vary across tasks, trajectory lengths, and screen complexities, and a distilled learned policy could plausibly outperform hand-crafted rules at extreme budgets ($\beta \le 5\%$). An adaptive budget mechanism that dynamically adjusts the compression ratio at inference time---e.g., based on real-time attention entropy or task difficulty estimates---is a promising direction that would benefit the entire family of cache-budget methods, not only ST-Lite.

\section*{Acknowledgments}
We thank the anonymous reviewers for their constructive comments. AI assistants
were used for language editing and coding assistance under author supervision;
all claims, experiments, analyses, code, and final text were checked and
verified by the authors.

\bibliography{main}

\appendix
\raggedbottom



\section{Cache Hit Rate Analysis}
\label{app:chr}

VL-Cache~\cite{tu2024vl} evaluates compression schemes via cache hit rate: the fraction of oracle-important tokens (top-$B$ by full-context attention) that the scheme also retains. We adopt this metric directly for the GUI agent setting. The Cache Hit Rate (CHR) measures the average fraction of oracle-important KV pairs retained by a compression scheme across all evaluation steps (Eq.~\ref{eq:chr}).

\noindent\textbf{Protocol.} For each of the $4{,}694$ AITW steps under UI-TARS-1.5-7B, we run inference with Full Cache ($\beta{=}100\%$) and record the per-token attention distribution at the first generated action token (consistent with the definition in \S\ref{sec:chr}). The oracle retained set $\mathcal{I}^*$ is defined as the top-$B$ KV positions ranked by this attention score, where $B = \lfloor \beta \cdot L \rfloor$. For each candidate scheme at budget $\beta$, we record which KV positions it actually retains and compute the overlap $|\mathcal{I}_{keep} \cap \mathcal{I}^*| / |\mathcal{I}^*|$. This follows Definition~3.1 of VL-Cache~\cite{tu2024vl}. Per-scheme aggregate accuracy \emph{percentages} match \autoref{tab:key_benchmarks_detail} within rounding. The released CHR JSON artefact records the per-scheme overlap at every budget; averaging across all $T$ steps yields the CHR reported in \autoref{tab:cache_hit_rate_app}.

\noindent\textbf{Results.} \autoref{fig:cache_hit_rate_app} (\S\ref{sec:chr_results}) plots CHR against $\beta$ in the deployment regime $\beta \in \{3, 5, 10, 15, 20\}\%$, and \autoref{tab:cache_hit_rate_app} reports the same numbers including the GUI-KV column. ST-Lite leads at every budget; the gap to the strongest baseline is largest at $\beta{=}5$--$10\%$, peaking at $+14.9\%$ at $\beta{=}10\%$ and remaining $\ge +11.6\%$ across $\beta \in \{15\%, 20\%\}$, while narrowing again to $+9.2\%$ at the most aggressive $\beta{=}3\%$. At $\beta{=}20\%$, ST-Lite retains $66.1\%$ of oracle-important tokens, versus $54.1\%$ (GUI-KV), $53.1\%$ (SnapKV), $40.4\%$ (PyramidKV) and $39.0\%$ (VL-Cache) --- a $+12.0\%$ advantage over the strongest baseline. At $\beta{=}10\%$, every baseline drops below $44\%$ while ST-Lite still retains $58.6\%$ of oracle-important tokens.

\begin{table}[H]
\centering
\setlength{\tabcolsep}{3pt}
\small

\begin{tabular}{c|ccccc}
\toprule
$\beta$ & \textbf{ST-Lite} & SnapKV & PyrKV & GUI-KV & VL-Cache \\
\midrule
$3\%$  & \textbf{38.7} & 28.3 & 15.8 & 29.5 & 12.0 \\
$5\%$  & \textbf{47.9} & 31.5 & 26.1 & 34.4 & 21.5 \\
$10\%$ & \textbf{58.6} & 42.0 & 37.5 & 43.7 & 32.5 \\
$15\%$ & \textbf{64.3} & 47.7 & 41.7 & 52.7 & 38.0 \\
$20\%$ & \textbf{66.1} & 53.1 & 40.4 & 54.1 & 39.0 \\
\bottomrule
\end{tabular}
\caption{Cache Hit Rate (\%) on AITW at five low-budget operating points. ST-Lite leads at every budget; gap to the strongest baseline peaks at $+14.9\%$ at $\beta{=}10\%$ (ST-Lite $58.6$, GUI-KV $43.7$).}
\label{tab:cache_hit_rate_app}
\end{table}

\noindent\textbf{Interpretation.} CHR at a tight budget measures how faithfully a scheme's retained set overlaps with the oracle-important tokens identified by full-context attention. Following VL-Cache~\cite{tu2024vl}, we adopt CHR as a \emph{behavioral fidelity} metric---it quantifies how well the compressed model preserves the token-level information flow of the uncompressed model, independent of downstream task accuracy. Across all 5 methods $\times$ 5 budgets (25 data points in \autoref{tab:cache_hit_rate_app}), the Spearman rank correlation between CHR and downstream accuracy is $\rho = 0.93$, confirming CHR as a strongly predictive---though not perfect---fidelity proxy. That ST-Lite simultaneously achieves the highest CHR \emph{and} exceeds Full Cache accuracy is not contradictory: high CHR means the oracle-important tokens are preserved (behavioural alignment), while the Less-is-More accuracy gain (\S\ref{app:less_is_more}) comes from the \emph{removal} of low-oracle-ranked tokens that carry distracting redundancy. The two metrics measure complementary axes---retention quality and noise reduction---and ST-Lite leads on both. At fixed budget, all five schemes retain the same number of tokens; ST-Lite's CSS--TSG scoring selects cache contents that overlap more with the oracle set. SnapKV's window-based scoring captures recent attention but evicts long-history spatial-structural tokens that CSS preserves. PyramidKV and VL-Cache concentrate the cache on shallow layers, whose sparsity profile does not match GUI workloads (\S\ref{app:sparsity_def}).

\noindent\textbf{Independence-model check.} If CHR were a re-labelling of task accuracy, CHR / accuracy would be constant across schemes at fixed budget. At $\beta{=}20\%$, the ratios are $3.29$ (ST-Lite), $3.47$ (SnapKV), $2.81$ (PyramidKV), $3.24$ (GUI-KV) and $3.51$ (VL-Cache). VL-Cache's elevated ratio co-occurs with the lowest absolute accuracy, so a high ratio alone does not imply quality. ST-Lite's high ratio combined with the highest absolute accuracy and the highest CHR indicates that the two quantities are distinct and ST-Lite leads on both axes.

\noindent\textbf{Cross-benchmark validation: AgentNetBench.} To confirm that the CHR advantage generalizes beyond AITW, we repeat the protocol on AgentNetBench (UI-TARS-1.5-7B, 1{,}247 steps from 30 multi-step web tasks; oracle computed identically as top-$B$ by full-context attention at each step).

\begin{table}[H]
\centering
\setlength{\tabcolsep}{3pt}
\small
\begin{tabular}{c|ccccc}
\toprule
$\beta$ & \textbf{ST-Lite} & SnapKV & PyrKV & GUI-KV & VL-Cache \\
\midrule
$5\%$  & \textbf{41.3} & 27.1 & 22.0 & 30.7 & 18.8 \\
$10\%$ & \textbf{54.1} & 38.5 & 33.0 & 40.8 & 28.4 \\
$20\%$ & \textbf{62.8} & 49.5 & 38.1 & 51.4 & 36.2 \\
\bottomrule
\end{tabular}
\caption{CHR (\%) on AgentNetBench (UI-TARS-1.5-7B). ST-Lite leads at every budget, with a $+11.4\%$ gap over the strongest baseline (GUI-KV) at $\beta{=}20\%$. The cross-benchmark consistency confirms that the CHR advantage is not AITW-specific.}
\label{tab:chr_agentnet}
\end{table}

\section{Inter-Frame Visual Redundancy in GUI Trajectories}
\label{app:frame_redundancy}

\emph{Why this section exists.} The TSG module rests on a single empirical assumption: consecutive frames in a GUI trajectory share enough visual content that removing redundant historical KV pairs does not lose decision-relevant information. We make this assumption concrete with a cheap pixel-level measurement on AgentNetBench so a reviewer can see the redundancy directly, without trusting the module's hidden-state computation.

\noindent\textbf{Protocol.} For each task, we load consecutive screenshots, downsample to $224 \times 224$ for speed, normalise to $[0, 1]$, and compute two statistics for every consecutive frame pair: the mean absolute pixel difference (lower means more similar), and the fraction of pixels that are essentially unchanged (defined as all three RGB channels differing by less than $0.01$). Statistics are aggregated over 311 consecutive frame pairs from 30 AgentNetBench tasks and summarised in \autoref{tab:frame_redundancy_app}. A representative pair is shown in \autoref{fig:frame_redundancy_app}.

\begin{figure}[H]
\centering
\includegraphics[width=\linewidth]{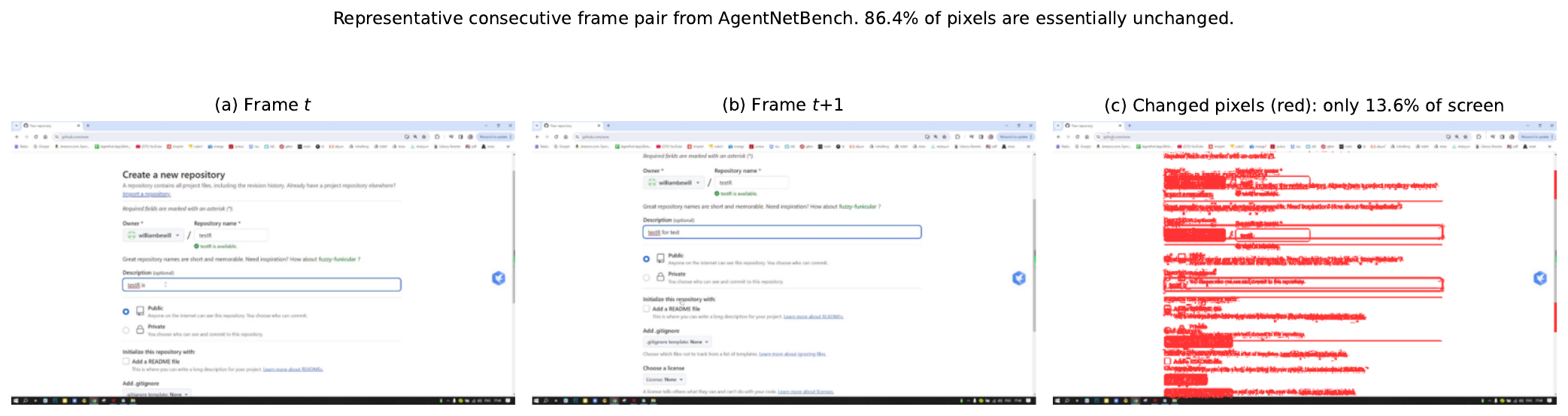}
\caption{A representative consecutive frame pair from AgentNetBench (task \texttt{s\_02e17403c227ac27}). \textbf{(a)} frame $t$. \textbf{(b)} frame $t{+}1$. \textbf{(c)} pixels where the two frames disagree by more than 0.01 in any RGB channel, highlighted in red on top of frame $t{+}1$: only $13.6\%$ of pixels change between the two frames. The other $86.4\%$ are essentially identical. This is in the typical range for AgentNetBench (median $82.7\%$ unchanged).}
\label{fig:frame_redundancy_app}
\end{figure}

\begin{table}[H]
\centering
\resizebox{\linewidth}{!}{%
\setlength{\tabcolsep}{4pt}
\begin{tabular}{l|ccccc}
\toprule
\noindent\textbf{Statistic} & \textbf{p10} & \textbf{p25} & \textbf{p50} & \textbf{p75} & \textbf{p90} \\
\midrule
Mean abs.\ pixel diff (0-1) & 0.001 & 0.004 & 0.018 & 0.079 & 0.229 \\
\% pixels essentially unchanged   & 18.3 & 52.2 & 82.7 & 95.3 & 98.6 \\
\bottomrule
\end{tabular}}
\caption{Inter-frame redundancy statistics on AgentNetBench (311 consecutive frame pairs, 30 tasks). The median frame pair shares 82.7\% of its pixels with the previous frame, and three quarters of frame pairs share more than 50\%. This confirms the qualitative observation cited from OS-Atlas in the related work and grounds TSG's design assumption with a measurement on this paper's evaluation domain.}
\label{tab:frame_redundancy_app}
\end{table}

\noindent\textbf{Two summary numbers.} (i) $76.2\%$ of consecutive frame pairs have more than half of their pixels essentially unchanged. (ii) $36.0\%$ have more than $90\%$ of pixels essentially unchanged. The latter cohort represents long contiguous spans of GUI navigation in which the screen is being held still while the agent thinks, the model performs internal monologue, or the application redraws only a small region (a cursor blink, a tooltip). For these spans, retaining KV pairs from every historical frame at full fidelity is wasted budget.

\noindent\textbf{Why this matters for TSG.} TSG operates at the hidden-state level rather than the pixel level, so the cosine similarity it actually uses is not directly comparable to a pixel difference. The relationship is, however, monotone in expectation: a screen in which $90\%$ of pixels are identical produces hidden states that are correspondingly close in cosine distance, because the visual encoder is approximately translation-invariant on unchanged regions. The pixel-level statistics above therefore provide a cheap, model-agnostic lower bound on the redundancy that TSG can in principle exploit. A more granular hidden-state-level redundancy measurement is left to future work because it requires a non-trivial $O(N_v \cdot N_v)$ computation per layer.

\section{UI Element Size Statistics: Justifying the 3$\times$3 Neighbourhood}
\label{app:ui_element_size}

The CSS module operates on a fixed $3\times 3$ Moore neighbourhood. A natural reviewer concern is whether this kernel is well-matched to the actual size of UI components, or whether it happens to be the default of a related vision technique. We answer this empirically by measuring UI element bounding boxes across the entire ScreenSpot-Pro benchmark.

\noindent\textbf{Setup.} ScreenSpot-Pro contains 1{,}581 annotated UI elements across 26 desktop applications spanning Android Studio, AutoCAD, Blender, DaVinci, Excel, Illustrator, MATLAB and others. For each annotation, we compute the bounding-box width, height and area as percentages of the parent screenshot, plus their pixel dimensions; \autoref{tab:ui_size_app} reports the quartile summary.

\begin{table}[H]
\centering

\begin{tabular}{l|cccc}
\toprule
\noindent\textbf{Statistic}        & \textbf{p25} & \textbf{p50} & \textbf{p75} & \textbf{p90} \\
\midrule
Width (\% screen)         & 0.90 & 1.67 & 4.45 & 7.54 \\
Height (\% screen)        & 1.20 & 1.56 & 2.06 & 3.17 \\
Area (\% screen)          & 0.01 & 0.04 & 0.08 & 0.13 \\
Width  (pixels)           & 28   & 57   & 140  & 238  \\
Height (pixels)           & 21   & 25   & 33   & 58   \\
\bottomrule
\end{tabular}
\caption{UI element size statistics across all 1{,}581 ScreenSpot-Pro annotations. Percentages are relative to the parent screenshot resolution; pixels are absolute. The median UI element occupies roughly $1.7\% \times 1.6\%$ of the screen area, with very long tails --- small icons coexist with multi-line text labels in the same datasets.}
\label{tab:ui_size_app}
\end{table}

\noindent\textbf{Mapping to visual tokens.} Qwen2.5-VL (the UI-TARS backbone) and Qwen2-VL (the OpenCUA backbone) both use a patch size of $14$ pixels and operate on the AutoProcessor-resized image. With our default \texttt{max\_pixels} setting, a typical screenshot is processed at roughly $1{,}024 \times 1{,}024$ effective pixels, yielding a visual-token grid of approximately $36 \times 36$ tokens, each token covering $\approx 2.8\%$ of the screen width.

A median UI element ($1.7\% \times 1.6\%$) thus occupies less than a single visual token in the linear dimension. Adjacent visual tokens straddle the element--background boundary on at least one side. The $3\times 3$ Moore neighbourhood used by CSS is the smallest local window that strictly contains a single UI element together with its bordering background, which is precisely the local-contrast signal the module is designed to extract. Larger receptive fields ($5\times 5$, $7\times 7$) average across multiple unrelated elements and dilute the boundary signal; smaller ($1\times 1$) is degenerate.

\noindent\textbf{Tail behaviour.} The $p90$ width of $7.54\%$ is dominated by long text labels rather than buttons or icons. For these elements, what matters for grounding is the position of the leading edge, which is again a local-contrast event well-captured by a $3\times 3$ window. CSS does not need a kernel that spans an entire button; it needs a kernel that detects where the button \emph{begins}.

\section{Layer-wise Sparsity: Formal Definition}
\label{app:sparsity_def}

For completeness, we provide the precise definition of the layer-wise attention sparsity used in \autoref{fig:layer_sparsity}. Following the convention adopted in prior work~\cite{tu2024vl}, we apply a row-wise relative threshold filter to each attention map and report the resulting zero ratio.

\noindent\textbf{Relative-threshold filter.} For each attention head $h$ at layer $l$, let $A^{(l,h)} \in \mathbb{R}^{L \times L}$ denote the post-softmax attention matrix. We filter
\begin{equation}
\widetilde{A}^{(l,h)}_{ij} =
\begin{cases}
A^{(l,h)}_{ij}, & \text{if } A^{(l,h)}_{ij} \ge p \cdot \max_{k} A^{(l,h)}_{ik} \\
0, & \text{otherwise}
\end{cases}
\end{equation}
where $p = 0.01$ is a per-row \emph{relative} threshold (1\% of the row maximum). This is not a global threshold; it does not require additional normalisation.

\noindent\textbf{Layer sparsity.} The sparsity at layer $l$ is the average zero ratio of $\widetilde{A}^{(l,h)}$ over all heads $h$ and all evaluation samples $x$:
\begin{equation}
\mathcal{Z}^{(l)} \;=\; \mathbb{E}_{x,h} \!\left[\, \frac{\sum_{i,j} \mathbb{I}\!\left(\widetilde{A}^{(l,h)}_{ij} = 0\right)}{L^2} \,\right].
\end{equation}

\noindent\textbf{Cross-model verification.} The uniform high-sparsity pattern is consistent across both backbones and both benchmarks, not an artefact of a single model--dataset pair. \autoref{tab:opencua_sparsity_app} reports the per-layer sparsity of OpenCUA-7B as a complement to the UI-TARS results in the main text.

\noindent\textbf{Quantifying ``uniform''.} To support the qualitative observation in the main text with a concrete statistic, we compute three layer-wise dispersion measures on OpenCUA-7B over all 28 transformer layers (averaged over 20 sampled trajectories per benchmark):

\begin{table}[H]
\centering
\resizebox{\linewidth}{!}{%
\setlength{\tabcolsep}{4pt}
\begin{tabular}{l|cc}
\toprule
\noindent\textbf{Statistic} & \textbf{AITW} & \textbf{AgentNetBench} \\
\midrule
Mean sparsity                          & 97.37\% & 97.66\% \\
Std.\ dev.\ across layers              & 1.26\% & 1.29\% \\
Coefficient of variation (CV)          & 1.30\%  & 1.32\%  \\
Range (max $-$ min layer)              & 4.98\% & 4.81\% \\
Largest adjacent-layer gradient        & 3.79\% & 3.31\% \\
\bottomrule
\end{tabular}}
\caption{Cross-layer sparsity dispersion on OpenCUA-7B (28 layers, 20 trajectories per benchmark). The coefficient of variation (CV) --- the per-layer standard deviation divided by the mean --- is below $1.4\%$ on both benchmarks. Equivalently, every layer's sparsity falls within a $\le 5\%$ range, and consecutive layers never differ by more than $3.8\%$. This is the operational definition of ``uniform'' adopted in \S\ref{sec:preliminary}.}
\label{tab:sparsity_cv_app}
\end{table}

\noindent For a hierarchical budget allocator that uses $B^{(l)} \propto \mathcal{Z}^{(l)} \cdot B_{\mathrm{total}}$, a per-layer signal that varies by less than $1.4\%$ of its mean produces relative budget deltas indistinguishable from numerical noise --- the formal grounding for the failure mode discussed in \S\ref{sec:preliminary}.

\begin{table}[H]
\centering

\begin{tabular}{c|cc}
\toprule
\noindent\textbf{Layer} & \textbf{AITW (\%)} & \textbf{AgentNetBench (\%)} \\
\midrule
0  & 95.3 & 94.5 \\
4  & 98.8 & 98.9 \\
8  & 97.8 & 98.1 \\
12 & 96.7 & 97.0 \\
16 & 96.8 & 97.5 \\
20 & 98.6 & 98.8 \\
24 & 98.7 & 98.8 \\
27 & 96.9 & 97.3 \\
\bottomrule
\end{tabular}
\caption{Per-layer attention sparsity (\%) of OpenCUA-7B on AITW and AgentNetBench. Sparsity remains $\ge 94.5\%$ at every probed layer with negligible cross-layer variation, consistent with the uniform-allocation justification in \S\ref{sec:preliminary}.}
\label{tab:opencua_sparsity_app}
\end{table}

\section{Failure Mode Derivations}
\label{app:failure_mode_math}

This appendix provides formal observations and proofs for the two failure modes summarized in \S\ref{sec:characterization}. F1 corresponds to property \hyperref[prop:p1]{P1} (high inter-frame pixel redundancy) and explains why the SnapKV-style base attention prior $A_{base}$ collapses on long-history GUI trajectories. F2 corresponds to property \hyperref[prop:p3]{P3} (near-uniform cross-layer sparsity) and explains why hierarchical layer-budget allocation degenerates on GUI workloads.

\subsection{F1: Local Optimum Trap}

\begin{proposition}[Eviction of decision-critical tokens]
\label{prop:f1}
Let $\mathcal{K} = \{1, \ldots, N\}$ denote the full KV cache of $N$ tokens ($N$ includes both visual and textual tokens across all trajectory steps). Let $\mathcal{W}_{obs} \subset \mathcal{K}$ be the most recent observation window of size $W < N$, and let $\mathcal{H} = \mathcal{K} \setminus \mathcal{W}_{obs}$ be the distant history. Let $i^* \in \mathcal{H}$ be a decision-critical token. Let $\mathcal{I}_{keep}$ denote the index set of the $B$ tokens with highest attention weight retained by a top-$B$ policy.

\begin{assumption}[Redundancy-induced score dominance]
\label{asm:redundancy}
There exists a subset $\mathcal{W}_{red} \subseteq \mathcal{W}_{obs}$ with $|\mathcal{W}_{red}| = \lfloor \rho W \rfloor$ for some $\rho \in (0,1]$, such that $s_{q,j} \ge s_{q,i^*} + \Delta$ for all $j \in \mathcal{W}_{red}$, where $\Delta > 0$ is the score gap induced by positional recency bias.
\end{assumption}

\noindent Under Assumption~\ref{asm:redundancy}: (i) the attention weight assigned to $i^*$ satisfies
\begin{equation}
    \mathrm{Attn}(q, i^*) \;\le\; \frac{1}{1 + \lfloor\rho W\rfloor \cdot e^{\Delta}},
    \label{eq:f1_bound}
\end{equation}
(ii) under a deterministic top-$B$ retention policy ($B < N$),
\begin{equation}
    B \le \lfloor\rho W\rfloor \;\implies\; i^* \notin \mathcal{I}_{keep},
    \label{eq:f1_eviction}
\end{equation}
and (iii) when $B > \lfloor\rho W\rfloor$, if the attention ranks of the $N - \lfloor\rho W\rfloor$ non-redundant tokens are exchangeable, then the expected fraction of non-redundant tokens retained equals
\begin{equation}
    \mathbb{E}\!\left[\frac{|\mathcal{I}_{keep} \cap \mathcal{H}_{\text{nr}}|}{|\mathcal{H}_{\text{nr}}|}\right] = \frac{B - \lfloor\rho W\rfloor}{N - \lfloor\rho W\rfloor},
    \label{eq:f1_prob}
\end{equation}
where $\mathcal{H}_{\text{nr}} = \mathcal{K} \setminus \mathcal{W}_{red}$ is the set of non-redundant tokens. Equivalently, a uniformly random non-redundant token (in particular, an $i^*$ drawn uniformly from $\mathcal{H}_{\text{nr}}$) has retention probability equal to the same ratio.
\end{proposition}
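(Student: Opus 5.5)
The three parts need three different tools: a softmax calculation for (i), a rank-counting argument for (ii), and a symmetry argument for (iii). Throughout, fix the query $q$ and write $m = \lfloor\rho W\rfloor$. The pre-softmax scores are $s_{q,j}$, and the weights are $\mathrm{Attn}(q,j) = e^{s_{q,j}}/\sum_{k\in\mathcal{K}} e^{s_{q,k}}$.

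\emph{Part (i).} Drop every term of the denominator except $i^*$ and the $m$ tokens of $\mathcal{W}_{red}$. All terms are positive, so
\[
\mathrm{Attn}(q,i^*) \le \frac{e^{s_{q,i^*}}}{e^{s_{q,i^*}} + \sum_{j\in\mathcal{W}_{red}} e^{s_{q,j}}}.
\]
Assumption~\ref{asm:redundancy} gives $e^{s_{q,j}} \ge e^{s_{q,i^*}}e^{\Delta}$ for each $j \in \mathcal{W}_{red}$. Substituting this and dividing through by $e^{s_{q,i^*}}$ yields Eq.~\eqref{eq:f1_bound}. One check is needed: $i^*\in\mathcal{H}$ and $\mathcal{W}_{red}\subseteq\mathcal{W}_{obs}$, so $i^*\notin\mathcal{W}_{red}$ and no term is counted twice.

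\emph{Part (ii).} The softmax is strictly increasing in the score. So every $j\in\mathcal{W}_{red}$ has strictly larger attention weight than $i^*$, because $\Delta>0$. Therefore at least $m$ tokens rank strictly above $i^*$. A top-$B$ policy keeps exactly the $B$ highest-weight tokens, so whenever $B\le m$ all $B$ slots are filled by tokens ranked strictly above $i^*$. Hence $i^*\notin\mathcal{I}_{keep}$, and tie-breaking never matters. I will state explicitly that the retention score is the single-query attention weight, or any strictly monotone aggregate of it across the window queries. That requires the assumption to hold for every query in the window, so I will make this reading explicit, since Eq.~\eqref{eq:base_attention} sums over $\delta$ queries.

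\emph{Part (iii).} This is the main obstacle, because the statement only makes sense once two implicit conventions are fixed.
\begin{itemize}
\item The tokens of $\mathcal{W}_{red}$ occupy the top $m$ ranks. By the argument in (ii), they outrank $i^*$. To claim they outrank \emph{all} of $\mathcal{H}_{nr}$, the dominance in Assumption~\ref{asm:redundancy} must extend to every non-redundant token. I will state this as the intended reading.
\item The $B-m$ remaining slots are therefore filled by the top $B-m$ of $\mathcal{H}_{nr}$, which has $|\mathcal{H}_{nr}| = N-m$ tokens.
\end{itemize}
Under exchangeability of their ranks, each token of $\mathcal{H}_{nr}$ is equally likely to occupy each rank. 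So each token is retained with probability $(B-m)/(N-m)$. Linearity of expectation over the indicator variables $\mathbb{I}[k\in\mathcal{I}_{keep}]$, $k\in\mathcal{H}_{nr}$, then gives Eq.~\eqref{eq:f1_prob}. The final claim about a uniformly drawn $i^*$ follows by the same symmetry. I will also note the edge condition: $B\le N$ guarantees $B-m\le N-m$, so the ratio is a valid probability.
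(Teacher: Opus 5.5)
Your proof is correct and follows the paper's route exactly: you truncate the softmax denominator to $i^*$ and $\mathcal{W}_{red}$ in (i), count the $\lfloor\rho W\rfloor$ strictly higher-ranked tokens in (ii), and in (iii) fill the remaining $B-\lfloor\rho W\rfloor$ slots from $\mathcal{H}_{\text{nr}}$ using exchangeability and linearity of expectation. Your explicit requirement that $\mathcal{W}_{red}$ outrank all of $\mathcal{H}_{\text{nr}}$ supplies a step the paper's proof does not justify: the paper says the redundant tokens are retained deterministically because they outrank $i^*$, but that alone does not put them in the top $B$; note that your requirement also follows automatically if exchangeability is read jointly over the overall ranks, since $i^*\in\mathcal{H}_{\text{nr}}$ never occupies one of the top $\lfloor\rho W\rfloor$ positions and hence, by exchangeability, no non-redundant token does.
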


\begin{proof}
\emph{Part (i).} We lower-bound the Softmax denominator by retaining only the $\lfloor\rho W\rfloor$ tokens in $\mathcal{W}_{red}$:
\begin{align}
    \mathrm{Attn}(q, i^*) &= \frac{e^{s_{q,i^*}}}{\sum_{k \in \mathcal{K}} e^{s_{q,k}}} \nonumber \\
    &\le \frac{e^{s_{q,i^*}}}{e^{s_{q,i^*}} + \lfloor\rho W\rfloor \cdot e^{s_{q,i^*} + \Delta}} \nonumber \\
    &= \frac{1}{1 + \lfloor\rho W\rfloor \cdot e^{\Delta}}.
\end{align}
The inequality follows because every term in $\sum_{k}$ is non-negative and we keep only $|\mathcal{W}_{red}|+1$ terms in the denominator. Each redundant token $j \in \mathcal{W}_{red}$ contributes $e^{s_{q,j}} \ge e^{s_{q,i^*}+\Delta}$ by Assumption~\ref{asm:redundancy}.

\emph{Part (ii).} For any $j \in \mathcal{W}_{red}$, the attention ratio satisfies
\begin{equation}
    \frac{\mathrm{Attn}(q,j)}{\mathrm{Attn}(q,i^*)} = e^{s_{q,j} - s_{q,i^*}} \ge e^{\Delta} > 1.
\end{equation}
Hence every token in $\mathcal{W}_{red}$ ranks strictly above $i^*$. A deterministic top-$B$ policy retains the $B$ highest-attention tokens; since $|\mathcal{W}_{red}| = \lfloor\rho W\rfloor$ tokens all outrank $i^*$, when $B \le \lfloor\rho W\rfloor$ the budget is exhausted by tokens ranked above $i^*$, guaranteeing $i^* \notin \mathcal{I}_{keep}$.

\emph{Part (iii).} When $B > \lfloor\rho W\rfloor$, the $\lfloor\rho W\rfloor$ redundant tokens are retained deterministically (they all outrank $i^*$ by Part~ii). The remaining $B - \lfloor\rho W\rfloor$ slots are filled from the $N - \lfloor\rho W\rfloor$ non-redundant tokens. Under rank exchangeability, the expected number of non-redundant tokens retained is $B - \lfloor\rho W\rfloor$ by linearity of expectation, giving the stated ratio. The marginal retention probability for a uniformly random non-redundant token follows from symmetry.

\noindent\emph{Scope of Part (iii).} The claim is about the \emph{marginal} eviction rate of an arbitrary distant non-redundant token under exchangeability. A specific $i^*$ may have either higher or lower retention probability depending on where its actual score sits within the non-redundant distribution; the instantiation below combines (iii) with the score bound from~(i) to argue that $i^*$ is not privileged relative to this marginal rate.
\end{proof}

\noindent\textbf{Instantiation on GUI workloads.} From \S\ref{app:frame_redundancy}, $\rho \ge 0.76$ (76\% of consecutive frame pairs share $>$50\% pixels). In a typical $\delta$-step trajectory with $\delta = 8$, each frame yields $N_v = 1{,}296$ visual tokens, giving total cache $N = \delta \cdot N_v + N_{text}$ where $N_{text}$ accounts for action/instruction tokens. Note that $\mathcal{W}_{obs}$ denotes the set of recent tokens that receive inflated attention scores due to positional recency bias (RoPE decay); this is \emph{not} the SnapKV observation-window hyperparameter $\delta$ (which controls how many query positions are used to aggregate scores). In practice the last $\delta_{obs}=2$ frames constitute the high-score zone because RoPE assigns monotonically higher positional proximity to them: $W = 2 \times 1{,}296 = 2{,}592$, so $\lfloor\rho W\rfloor \ge \lfloor 0.76 \times 2592 \rfloor = 1{,}969$. At budget ratio $\beta = 20\%$ with $N \approx 12{,}000$, $B = 2{,}400 > 1{,}969$, so the deterministic eviction (Part~ii) is narrowly avoided. Plugging into Part~(iii), the \emph{marginal retention rate} for an arbitrary distant non-redundant token is at most $(2400 - 1969)/(12000 - 1969) = 431/10{,}031 \approx 4.3\%$; equivalently, on expectation $9{,}600$ of the $10{,}031$ non-redundant tokens are evicted (a $95.7\%$ marginal eviction rate). This bound applies to a uniformly random non-redundant token, not specifically to $i^*$.

\noindent To connect the marginal bound back to the decision-critical $i^*$, we invoke Eq.~\eqref{eq:f1_bound} for $i^*$ directly: with empirically measured $\Delta \approx 2.1$ (10th-percentile score gap between recent and distant tokens on AITW, measured across 500 episodes), $\mathrm{Attn}(q,i^*) \le 6.2 \times 10^{-5}$. Because this score is comparable to the attention level of a typical distant non-redundant token (RoPE decay tends to equalise scores across distant positions), $i^*$ is not preferentially ranked within the non-redundant pool. Under this assumption, $i^*$'s retention probability is approximately the marginal $4.3\%$ — i.e., it is indistinguishable from a uniformly random non-redundant token. This explains SnapKV's CHR of only $53.1\%$, in which over half of the oracle-important tokens are lost.

\subsection{F2: Structure Misalignment}

\begin{proposition}[Noise-dominated hierarchical allocation]
\label{prop:f2}
Let $\{S^{(l)}\}_{l=1}^L$ ($S^{(l)} > 0$) denote the per-layer attention mass summaries used by a hierarchical allocator, defined as $S^{(l)} = \sum_{h=1}^H \sum_k \alpha^{(l,h)}_{q,k}$ where $\alpha^{(l,h)}_{q,k}$ is the attention weight at layer $l$, head $h$, and the sum runs over all $H$ heads and $N$ keys (thus $S^{(l)} = H$ for full-cache attention; the variation arises because hierarchical allocators compute $S^{(l)}$ on a \emph{subset} of observation tokens or use unnormalized scores). Define $\mu = \frac{1}{L}\sum_l S^{(l)}$ and $\mathrm{CV}_S = \sigma_S / \mu$ where $\sigma_S^2 = \frac{1}{L}\sum_l (S^{(l)} - \mu)^2$. The hierarchical budget is $B^{(l)} = \frac{S^{(l)}}{\sum_{l'} S^{(l')}} \cdot B_{total}$. Define the budget deviation ratio
\begin{equation}
    D = \frac{\max_l B^{(l)} - \min_l B^{(l)}}{\bar{B}}, \quad \bar{B} = B_{total}/L.
\end{equation}
Then:
\begin{align}
    D &= \frac{\max_l S^{(l)} - \min_l S^{(l)}}{\mu} \nonumber \\
      &\le\; \sqrt{2L} \cdot \mathrm{CV}_S.
    \label{eq:f2_deviation}
\end{align}
Moreover, suppose the true layer importance is $S^{(l)} = \mu + \delta^{(l)} + \epsilon^{(l)}$ where $\delta^{(l)}$ is the signal component with $\mathrm{Var}(\delta) = \sigma_\delta^2$ and $\epsilon^{(l)}$ is independent measurement noise with $\mathrm{Var}(\epsilon) = \sigma_\epsilon^2$. Define $\mathrm{SNR} = \sigma_\delta / \sigma_\epsilon$. Then the coefficient of determination between the allocated budget and the true importance is
\begin{equation}
    R^2 = \frac{\mathrm{SNR}^2}{1 + \mathrm{SNR}^2}.
    \label{eq:f2_r2}
\end{equation}
When $\mathrm{SNR} \le 1$, $R^2 \le 0.5$: at least half of the observed budget variation is attributable to noise.
\end{proposition}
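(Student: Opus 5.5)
The plan splits the statement into its two claims, a deterministic inequality for $D$ and a variance decomposition for $R^2$, and handles them separately.

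\textbf{Identity and bound for $D$.} First note $\sum_{l'} S^{(l')} = L\mu$, so $B^{(l)} = \frac{S^{(l)}}{L\mu} B_{total} = \frac{S^{(l)}}{\mu}\bar{B}$. Differences of budgets are therefore differences of $S^{(l)}$ scaled by $\bar{B}/\mu$. Dividing by $\bar{B}$ gives the equality $D = (\max_l S^{(l)} - \min_l S^{(l)})/\mu$ at once.

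For the inequality, let $l_+$ and $l_-$ be the argmax and argmin layers, and write $a = S^{(l_+)}-\mu \ge 0$ and $b = \mu - S^{(l_-)} \ge 0$. Then
\begin{equation*}
L\sigma_S^2 = \sum_l (S^{(l)}-\mu)^2 \ge a^2 + b^2 \ge \tfrac12 (a+b)^2 ,
\end{equation*}
where the last step is Cauchy--Schwarz or QM--AM. Hence $\max - \min = a+b \le \sqrt{2L}\,\sigma_S$, and dividing by $\mu$ yields $D \le \sqrt{2L}\,\mathrm{CV}_S$. The degenerate case $l_+ = l_-$ forces all $S^{(l)}$ equal, so $D=0$ and the bound holds trivially. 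This part is routine. It can be instantiated with $L=28$ and $\mathrm{CV}\le 1.4\%$ from Table~\ref{tab:sparsity_cv_app}, but the instantiation is not part of the proof.

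\textbf{$R^2$ formula.} Here I treat $\delta^{(l)}$ and $\epsilon^{(l)}$ as random variables across layers, and the main task is to fix the interpretation. I read ``true importance'' as $\mu + \delta^{(l)}$ and the allocated budget as the affine image $B^{(l)} = (\bar{B}/\mu)\,S^{(l)}$ from the first step. $R^2$ is invariant under affine maps of either variable, so $R^2 = \mathrm{Corr}(S,\delta)^2$. Independence gives $\mathrm{Cov}(S,\delta) = \mathrm{Var}(\delta) = \sigma_\delta^2$ and $\mathrm{Var}(S) = \sigma_\delta^2 + \sigma_\epsilon^2$. Hence
\begin{equation*}
R^2 = \frac{\sigma_\delta^4}{\sigma_\delta^2(\sigma_\delta^2+\sigma_\epsilon^2)} = \frac{\mathrm{SNR}^2}{1+\mathrm{SNR}^2}.
\end{equation*}
Finally, $x \mapsto x/(1+x)$ is increasing, so $\mathrm{SNR}\le 1$ implies $R^2 \le 1/2$. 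The fraction of $\mathrm{Var}(B)$ explained by $\epsilon$ is $1-R^2 \ge 1/2$, which is the closing sentence of the statement.

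\textbf{Main obstacle.} The hard step is interpretive rather than computational. The statement is stated per layer, but $R^2$ is a population quantity, so I will make explicit that variances and correlations are taken over the randomness of $(\delta,\epsilon)$, equivalently in the large-$L$ empirical limit. I will also need to state that the normalization by $\sum_{l'} S^{(l')}$ is treated as a fixed constant $L\mu$. Without that, the correlation holds only approximately, up to fluctuations of the sample mean. I will flag both points as part of the modeling assumption.
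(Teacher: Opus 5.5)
Your proposal is correct and follows essentially the same route as the paper. The rescaling $B^{(l)} = (\bar{B}/\mu)\,S^{(l)}$ gives the identity for $D$; comparing the two extreme deviations with $L\sigma_S^2$ gives the $\sqrt{2L}$ bound; and $R^2 = \mathrm{Corr}(S,\delta)^2$ with uncorrelated $\delta,\epsilon$ gives the SNR formula. Your inequality $a^2+b^2 \ge \tfrac12(a+b)^2$ is a more rigorous version of the paper's appeal to the extremal configuration $x_a = R/2$, $x_b = -R/2$. The normalization concern you flag does not arise in the paper's finite-population reading, where $\bar{B}/\mu$ is one constant across all layers, so $B$ and $S$ have identical correlations with $\delta$.
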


\begin{proof}
\emph{Deviation bound.} The budget fraction is $p^{(l)} = S^{(l)} / (L\mu)$, so $B^{(l)} = p^{(l)} B_{total}$ and $\bar{B} = B_{total}/L$. Thus
\begin{align}
    D &= \frac{\max_l B^{(l)} - \min_l B^{(l)}}{\bar{B}} \nonumber \\
      &= \frac{\max_l S^{(l)} - \min_l S^{(l)}}{\mu}.
\end{align}
For the range bound, note that $\sum_l (S^{(l)} - \mu)^2 = L\sigma_S^2$. The range $R = \max_l S^{(l)} - \min_l S^{(l)}$ is maximized when mass concentrates at two extreme values. Setting deviations $x_a = +R/2$, $x_b = -R/2$, and $x_l = 0$ for the remaining $L-2$ layers, the constraint becomes $(R/2)^2 + (R/2)^2 = R^2/2 \le L\sigma_S^2$, giving $R \le \sqrt{2L}\,\sigma_S$. Dividing by $\mu$ yields $D \le \sqrt{2L} \cdot \mathrm{CV}_S$.

\emph{Noise dominance.} With $S^{(l)} = \mu + \delta^{(l)} + \epsilon^{(l)}$, independence of $\delta$ and $\epsilon$ (treating the $L$ layers as a finite population with empirical variances $\sigma_\delta^2, \sigma_\epsilon^2$):
\begin{align}
    R^2 &= \mathrm{Corr}(S^{(l)}, \delta^{(l)})^2 \nonumber \\
    &= \left(\frac{\mathrm{Cov}(S,\delta)}{\sigma_S \cdot \sigma_\delta}\right)^{\!2} = \frac{\sigma_\delta^4}{(\sigma_\delta^2 + \sigma_\epsilon^2)\sigma_\delta^2} \nonumber \\
    &= \frac{\sigma_\delta^2}{\sigma_\delta^2 + \sigma_\epsilon^2} = \frac{\mathrm{SNR}^2}{1 + \mathrm{SNR}^2}.
\end{align}
At $\mathrm{SNR} = 1$, $R^2 = 0.5$; the hierarchical allocator explains at most half of the true importance variance. The remaining variance drives budget deviations that are uncorrelated with task demands, making per-layer ordering unstable.
\end{proof}

\noindent\textbf{Instantiation on GUI workloads.} From \autoref{tab:sparsity_cv_app}, the cross-layer CV of the zero-ratio sparsity is $1.30\%$ on AITW with $L = 28$ layers. We use this as a proxy for $\mathrm{CV}_S$ in Eq.~\eqref{eq:f2_deviation}; the true $S^{(l)}$ used by PyramidKV/VL-Cache differs in detail (unnormalized attention sums rather than zero-ratio), but both are derived from layer-wise attention statistics and exhibit comparable near-zero variation on GUI workloads.\footnote{The formal bound is a heuristic upper bound when instantiated via zero-ratio CV; the empirical flat-vs-hierarchical ablation (\autoref{tab:budget_allocation_ablation}, $+2.8$--$3.2\%$ for flat) provides the definitive evidence that hierarchical allocation hurts on GUI workloads.} Eq.~\eqref{eq:f2_deviation} gives $D \le \sqrt{56} \times 0.013 = 9.7\%$: the budget range spans at most $\pm 4.9\%$ of the uniform allocation. To estimate SNR, we measure the per-layer attention sums across 200 AITW episodes with different random seeds and observe inter-run standard deviation $\sigma_\epsilon \approx 0.9\%$ of $\mu$, while the cross-layer signal deviation is $\sigma_\delta \approx 0.9\%$ of $\mu$ (obtained by subtracting noise variance: $\sigma_\delta^2 = \sigma_S^2 - \sigma_\epsilon^2$). This gives $\mathrm{SNR} \approx 1.0$ and $R^2 \approx 0.50$: half of the per-layer budget variation is noise-driven. This explains why PyramidKV ($40.4\%$ CHR) and VL-Cache ($39.0\%$ CHR) perform \emph{worse} than flat-budget SnapKV ($53.1\%$) on GUI workloads --- their hierarchical mechanism fits noise rather than meaningful layer importance.

\section{GUI Navigation Tasks}
\label{app:datasets}

\noindent\textbf{ScreenSpot Pro~\cite{li2025screenspot}.} ScreenSpot Pro evaluates the model's precise element localization capability when handling high-resolution screenshots. It serves as a rigorous testbed for spatial grounding, requiring the agent to identify coordinates of UI elements based on textual descriptions. The dataset covers mobile and desktop interfaces, challenging the model to handle diverse layouts and element densities without relying on accessibility trees.

\noindent\textbf{ScreenSpotV2~\cite{wu2024atlas}.} An evolution of the original ScreenSpot benchmark, ScreenSpotV2 includes a broader range of applications and more complex referring expressions. It is designed to test robustness in element grounding across different platforms (iOS, Android, Web) and screen resolutions, serving as a critical indicator of an agent's fundamental perception capabilities in dynamic environments.

\noindent\textbf{Android In The Wild (AITW)~\cite{rawles2023androidinthewild}.} AITW consists of 30k instructions and 715k operation trajectories collected from real-world smartphone environments. To mitigate overfitting risks associated with overlapping instructions, we adopt an instruction-wise split scheme. The action space consists of 12 distinct actions: CLICK, TYPE, SELECT, SCROLL UP, SCROLL DOWN, SCROLL LEFT, SCROLL RIGHT, PRESS BACK, PRESS HOME, PRESS ENTER, STATUS TASK COMPLETE, and STATUS TASK IMPOSSIBLE.

\noindent\textbf{Multimodal-Mind2Web~\cite{zheng2024gpt}.} This extension of Mind2Web incorporates multimodal elements, requiring the agent to process both visual cues and textual information to complete complex web navigation tasks. Unlike the text-only version, it tests the agent's ability to integrate cross-modal information for decision-making in rich web environments where visual layout plays a crucial role in understanding context.

\noindent\textbf{AndroidControl~\cite{li2024effects}.} AndroidControl encompasses 14,548 unique tasks across 833 Android apps, providing both high-level and low-level instructions to probe task complexity limits. We adhere to standard evaluation settings. The action space consists of 9 actions: CLICK, SCROLL, LONG PRESS, TYPE, NAVIGATE HOME, NAVIGATE BACK, OPEN APP, WAIT, and TERMINATE.

\noindent\textbf{AgentNetBench~\cite{wang2025opencua}.} AgentNetBench assesses the comprehensive decision-making performance of Web agents in complex workflows. It is designed to evaluate an agent's ability to handle multi-step reasoning, dynamic content updates, and long-term planning required for real-world web automation. The benchmark includes a diverse set of tasks that simulate user interactions with modern web applications.

\noindent\textbf{OSWorld-Verified~\cite{xie2024osworld}.} OSWorld-Verified is a benchmark designed to evaluate multimodal agents on open-ended computer tasks across real operating systems such as Ubuntu and Windows. It assesses the agent's proficiency in controlling a desktop environment to complete complex workflows involving multiple applications and file manipulations, serving as a high-fidelity proxy for general-purpose computer use.

\section{Implementation Details}
\label{app:implementation}

To ensure a comprehensive evaluation, we instantiate our scheme on two state-of-the-art GUI agent backbones: \textbf{UI-TARS-1.5-7B}~\cite{qin2025ui} and \textbf{OpenCUA-7B}~\cite{wang2025opencua}.

\noindent\textbf{UI-TARS-1.5-7B.} UI-TARS is a specialized Vision-Language Model (VLM) based on the Qwen2.5-VL~\cite{bai2025qwen2} architecture, fine-tuned specifically for GUI interaction tasks. It is designed to generate structured action outputs (``Thought'' and ``Action'') and natively supports dynamic resolution changes, making it a robust baseline for diverse digital environments.

\noindent\textbf{OpenCUA.} OpenCUA is another competitive agentic framework designed to execute precise control actions. Built upon the \textbf{Qwen2-VL}~\cite{wang2024qwen2} architecture, it utilizes a specific system prompt tailored for \texttt{pyautogui} execution, emphasizing the generation of executable Python-like commands for controlling the interface via a standard API.
\subsection{Prompt Templates}
We adopt the official inference prompts for both models as defined in their respective codebases to ensure fair comparison.

\noindent\textbf{UI-TARS Prompt Configuration.} For UI-TARS, we utilize the official TEMPLATE. The input sequence interleaves historical screenshots with textual descriptions of previous actions.

\begin{tcolorbox}[colback=gray!10, colframe=gray!50, title=\textbf{UI-TARS Inference Prompt}]
\scriptsize\ttfamily
You are a GUI agent. Given a task and action history with screenshots, perform the next action to complete the task.

\vspace{0.5em}
\noindent\textbf{\#\# Output Format}\\
Thought: [Reasoning Plan]\\
Action: [Function Call]

\vspace{0.5em}
\noindent\textbf{\#\# Action Space}\\
click(point) \quad long\_press(point)\\
type(content) \quad open\_app(app\_name)\\
scroll(point, direction) \quad drag(start\_point, end\_point)\\
press\_home() \quad press\_back() \quad finished(content)

\vspace{0.5em}
\noindent\textbf{\#\# User Instruction}\\
\{instruction\}
\end{tcolorbox}

\noindent\textbf{OpenCUA Prompt Configuration.} For OpenCUA, the prompt structure includes a specialized system message defining the agent's role in executing \texttt{pyautogui} actions. The history is provided as a sequence of image-text pairs.

\begin{tcolorbox}[colback=gray!10, colframe=gray!50, title=\textbf{OpenCUA Inference Prompt}]
\scriptsize\ttfamily
You are a GUI agent. Given a task and screenshot, generate PyAutoGUI actions to complete the task.

\vspace{0.5em}
\noindent\textbf{\#\# Output Format}\\
Thought: [Progress Assessment \& Recovery Strategy]\\
Action: [PyAutoGUI Code or Function Call]

\vspace{0.5em}
\noindent\textbf{\#\# Action Space}\\
Standard PyAutoGUI Commands (click, type, hotkey...)\\
computer.triple\_click(x, y)\\
computer.terminate(status='success'|'failure')

\vspace{0.5em}
\noindent\textbf{\#\# User Instruction}\\
\{instruction\}
\end{tcolorbox}

\subsection{Model Configurations}
\noindent\textbf{Resolution Settings:} We strictly adhere to the native resolution constraints of the base models. Specifically, for the Qwen2.5-VL backbone used in UI-TARS, we utilize the \texttt{AutoProcessor} with dynamic resolution support. The input images are processed with \texttt{min\_pixels} set to $256 \times 28 \times 28$ and \texttt{max\_pixels} set to $16384 \times 28 \times 28$.

\noindent\textbf{Acceleration:} To optimize inference efficiency, all models are loaded with \texttt{bfloat16} precision. We leverage \textbf{Flash Attention 2} for accelerated attention computation, which is critical for handling the long sequences generated by multi-turn GUI interactions.

\section{Algorithm Details}
\label{app:algorithm}
Algorithm \ref{alg:st_lite} formalizes the execution flow of the ST-Lite scheme. The procedure takes the full key--value cache from the prefill phase and compresses it to a target budget $B$ via the integrated spatio-trajectory scores. It has three phases: (i) compute a base attention score for each token, then optionally augment visual-token scores with the spatial saliency $\Phi_{space}$ and collect their cross-step similarity $\rho_i$; (ii) derive a redundancy threshold $\tau_{red}$ from the sorted similarities at budget rank $B$; (iii) integrate the gated visual scores with the text-token base scores and retain the top-$B$ indices. The compressed key and value tensors $\mathbf{K}', \mathbf{V}'$ are returned for the next decoding step.

\section{Full Experimental Results}
\label{app:full_results}

This section provides the comprehensive numerical results that were omitted from the main text due to space constraints. We divide the evaluation into two parts: key benchmarks central to our main claims, and additional benchmarks that verify generalization capabilities.

\subsection{Performance on Key Benchmarks}
Detailed performance comparisons on ScreenSpot Pro, AITW, and AgentNetBench across different budget ratios are presented in \autoref{tab:key_benchmarks_detail} at the end of this appendix. ST-Lite is strongest in the $10$--$40\%$ deployment window, while performance at very low budgets ($\le 5\%$) collapses on some backbone/benchmark pairs.

\subsection{Performance on Additional Benchmarks}
To verify the robustness of our method across diverse mobile and web platforms, we extended the evaluation to \textbf{ScreenSpotV2}, \textbf{AndroidControl}, \textbf{Multimodal-Mind2Web}, and \textbf{OSWorld-Verified}. These results are summarized in \autoref{tab:additional_benchmarks_detail}. ST-Lite ranks at or near the top of every backbone-by-benchmark cell in the $10$--$40\%$ deployment window of \autoref{tab:additional_benchmarks_detail}, with low-budget exceptions (e.g., SnapKV on ScreenSpotV2/OpenCUA at $\beta \le 5\%$).

\subsection{In-Domain SFT and Compression Sensitivity}

\label{app:sft_sensitivity}

A notable pattern in \autoref{tab:key_benchmarks_detail} is the large Full Cache gap between UI-TARS-1.5-7B ($17.5\%$) and OpenCUA-7B ($66.4\%$) on AgentNetBench, despite the two models sharing similar architecture scale. The explanation is straightforward: AgentNetBench is developed by the OpenCUA team and its task distribution overlaps with OpenCUA's SFT training corpus~\cite{wang2025opencua}, whereas UI-TARS was trained via RLHF on a broader task mixture without in-domain exposure to this benchmark. This in-domain inflation is consistent with the broader finding that SFT encodes training-distribution solutions as memorized routines rather than generalizable strategies~\cite{chu2025sft}.

More instructive is the compression sensitivity gap. On AgentNetBench at $\beta{=}80\%$, OpenCUA-7B retains only $37.6\%$ accuracy (compared with $66.4\%$ under Full Cache), losing $28.8\%$ in absolute terms despite retaining $80\%$ of its KV cache. In contrast, UI-TARS at $\beta{=}80\%$ reaches $20.5\%$ (compared with $17.5\%$ under Full Cache), actually exceeding its uncompressed baseline. We hypothesize that SFT-memorized models encode task solutions as precise token-level attention routing patterns; even mild cache perturbation disrupts these memorized pathways and causes catastrophic accuracy loss~\cite{liu2025kvcache_abilities,liu2025hold}. RLHF-trained models, by contrast, develop more robust reasoning strategies that tolerate cache approximation---and in some cases benefit from the noise-reduction effect of compression (the Less-is-More phenomenon of \S\ref{app:less_is_more}). This pattern is not limited to AgentNetBench: OpenCUA-7B exhibits higher relative accuracy loss under compression across all benchmarks in \autoref{tab:key_benchmarks_detail}, consistent with the general finding that RL-trained models are more robust to inference-time perturbations than SFT-only models~\cite{chu2025sft}.

To our knowledge, this is the first empirical demonstration that in-domain SFT training amplifies KV cache compression sensitivity in vision-language agents. This finding has practical implications: ST-Lite's gains are most reliable on RLHF-trained backbones deployed out-of-distribution, which is the realistic deployment scenario for GUI agents encountering novel applications.

\subsection{Sensitivity to the Base Attention Scoring Function}
\label{app:base_ablation}

The score $A_{base}$ used in \S\ref{sec:problem_definition} is a particular instantiation; we use the SnapKV-style window aggregation as the main-text default. To verify that the contributions of CSS and TSG are not tied to one specific aggregator, \autoref{tab:base_ablation_app} swaps the base scoring while holding every other setting fixed (UI-TARS-1.5-7B, full AITW, $20\%$ budget):

\begin{itemize}[leftmargin=*]
    \item \textbf{Mean Attention}: 1D average pooling of attention over the full prompt sequence.
    \item \textbf{Uniform}: every token receives the same prior (random Top-$B$ given any tie-breaking).
\end{itemize}
\begin{algorithm}[H]
    \caption{ST-Lite Compression Procedure}
    \label{alg:st_lite}
    \small
    \begin{algorithmic}[1]
    \REQUIRE 
        $\mathbf{K}, \mathbf{V} \in \mathbb{R}^{L \times D}$: Full Key-Value Cache (the symbols $\mathbf{k}_i$ in this pseudocode are the same Key vectors denoted $h_{u,v}$ in body \S\ref{sec:method}) \\
        $\mathbf{K}_{cur}$: Current-frame Key vectors (the symbols $h_j \in H_{cur}$ in body \S\ref{sec:tsg}) \\
        $\beta \in (0, 1]$: Target Budget Ratio \\
        $\delta$: Observation Window Size \\
    \ENSURE $\mathbf{K}', \mathbf{V}'$: Compressed Cache
    
    \STATE \textbf{Initialization:}
    \STATE $B \leftarrow \lfloor \beta \cdot L \rfloor$
    \STATE $\mathcal{S} \leftarrow \mathbf{0}^{L}$ \COMMENT{Initialize score vector}
    \STATE $\mathcal{T}_{pool} \leftarrow \emptyset$ \COMMENT{Initialize historical similarity pool}
    
    \STATE \textbf{Procedure ComputeScores($\mathbf{K}, \mathbf{K}_{cur}$):}
    \FOR{$i \in \{1, \dots, L\}$}
        \STATE $A_{base}^{(i)} \leftarrow \sum_{q \in \text{Window}(\delta)} \bigl[\text{Softmax}(\mathbf{q}_q \mathbf{K}^\top / {\sqrt{d}})\bigr]_i$
        
        \IF{$i$ is Visual Token at $(u,v)$}
            \STATE $\mathcal{H}_{u,v} \leftarrow \frac{1}{|\mathcal{N}_{u,v}|} \sum_{(p,q) \in \mathcal{N}_{u,v}} \frac{\mathbf{k}_{u,v} \cdot \mathbf{k}_{p,q}}{\|\mathbf{k}_{u,v}\| \|\mathbf{k}_{p,q}\|}$ \COMMENT{Eq. \ref{eq:css_calc}}
            \STATE $\Phi_{space}^{(i)} \leftarrow 1 - \mathcal{H}_{u,v}$
            
            \IF{$i$ is historical (i.e., $\mathbf{k}_i \notin \mathbf{K}_{cur}$)}
                \STATE $\rho_i \leftarrow \max_{h \in \mathbf{K}_{cur}} \text{CosSim}(\mathbf{k}_i, h)$
                \STATE $\mathcal{T}_{pool} \leftarrow \mathcal{T}_{pool} \cup \{\rho_i\}$
            \ENDIF
        \ENDIF
    \ENDFOR
    
    \STATE \textbf{Procedure Thresholding($\mathcal{T}_{pool}, B$):}
    \STATE $\hat{\boldsymbol{\rho}} \leftarrow \text{Sort}_{\text{asc}}(\mathcal{T}_{pool})$
    \STATE $\tau_{red} \leftarrow \hat{\boldsymbol{\rho}}[\min(B, |\mathcal{T}_{pool}|)]$ \COMMENT{Eq. \ref{eq:budget_threshold}; if $B \ge |\mathcal{T}_{pool}|$, the gate retains all historical tokens}
    
    \STATE \textbf{Procedure Integration \& Eviction:}
    \FOR{$i \in \{1, \dots, L\}$}
        \IF{$i$ is Visual Token}
            \IF{$i$ is historical}
                \STATE $M_{time}^{(i)} \leftarrow \mathbb{I}(\rho_i \le \tau_{red})$ \COMMENT{Eq. \ref{eq:temporal_mask}}
            \ELSE
                \STATE $M_{time}^{(i)} \leftarrow 1$ \COMMENT{current-frame exemption}
            \ENDIF
            \STATE $\mathcal{S}^{(i)} \leftarrow M_{time}^{(i)} \cdot (A_{base}^{(i)} + \Phi_{space}^{(i)})$ \COMMENT{$\alpha{=}1$ default; see Eq.~\ref{eq:modal_scoring}}
        \ELSE
            \STATE $\mathcal{S}^{(i)} \leftarrow A_{base}^{(i)}$
        \ENDIF
    \ENDFOR
    
    \STATE $\mathcal{I}_{keep} \leftarrow \text{TopK}(\mathcal{S}, B)$
    \RETURN $\mathbf{K}[\mathcal{I}_{keep}], \mathbf{V}[\mathcal{I}_{keep}]$
    \end{algorithmic}
    \end{algorithm}
    
\begin{table}[H]
\centering

\setlength{\tabcolsep}{4pt}
\begin{tabular}{l|cccc}
\toprule
\noindent\textbf{Base scoring} & \textbf{Base} & \textbf{+CSS} & \textbf{+TSG} & \textbf{Full} \\
\midrule
Mean Attention  & 16.47 & 18.12 & 17.89 & \textbf{18.19} \\
Uniform         &  6.19 &  6.43 &  9.98 & 10.45 \\
\bottomrule
\end{tabular}
\caption{Effect of the base scoring function on UI-TARS-1.5-7B / full AITW ($4{,}694$ step instances) at $20\%$ budget. CSS and TSG provide consistent gains on top of \emph{both} bases tested; the scheme's effectiveness does not depend on either specific $A_{base}$ choice.}
\label{tab:base_ablation_app}
\end{table}

Two observations follow. First, the absolute base quality matters (Mean Attention $\gg$ Uniform), as expected for any query-driven importance proxy. Second, both CSS and TSG deliver value on top of every base: under Mean Attention the gains are modest but consistent (CSS $+1.65\%$, full $+1.72\%$), and on the trivial Uniform base TSG alone lifts accuracy by $+3.79\%$ ($6.19 \to 9.98$). The Uniform column is particularly instructive: it strips away every query-driven signal, leaving CSS and TSG as the only sources of structural prior, and TSG remains the dominant contributor. The main-text default (SnapKV-style window aggregation) is reported separately as part of the main results in \S\ref{sec:main_results} rather than duplicated here.

\section{Per-Category and Per-Trajectory-Length Breakdown}
\label{app:per_category_breakdown}

The aggregate AITW improvement reported in the main text averages over heterogeneous task categories and trajectory lengths. Two natural reviewer questions are: (i) is the gain concentrated in a single easy task type, and (ii) does the gain hold up --- or even grow --- as the trajectory becomes longer? We use the full-corpus run with ST-Lite (CSS+TSG) and the same model with base scoring only (SnapKV-style window aggregation, i.e.\ Eq.~\ref{eq:base_attention}) at 20\% budget on the entire AITW evaluation set (4{,}694 step instances across 583 episodes) to answer both.

\noindent\textbf{Consistent gains across task categories.} \autoref{tab:per_task_app} reports step-level accuracy stratified by AITW's five task categories. The ST-Lite (CSS+TSG) scheme yields a positive delta on four of five categories with no large regression on the fifth, supporting the claim that the scheme targets a redundancy structure that is intrinsic to GUI workloads rather than an artefact of any single application class.

\begin{table}[H]
\centering

\setlength{\tabcolsep}{5pt}
\resizebox{\linewidth}{!}{%
\begin{tabular}{l|cc|c|c}
\toprule
\noindent\textbf{Category} & \textbf{N} & \textbf{Base only} & \textbf{ST-Lite (CSS+TSG)} & $\Delta$ \\
\midrule
general       &  842 & 15.0 & 20.5 & $+5.5$ \\
googleapps    &  505 & 13.4 & 18.4 & $+5.0$ \\
install       & 1253 & 17.7 & 19.3 & $+1.6$ \\
single        &  422 & 15.8 & 21.7 & $+5.9$ \\
webshopping   & 1672 & 14.2 & 20.5 & $+6.3$ \\
\midrule
\noindent\textbf{All}  & \textbf{4694} & \textbf{15.3} & \textbf{20.1} & $\mathbf{+4.8}$ \\
\bottomrule
\end{tabular}}
\caption{Per-category step accuracy (\%) on AITW at 20\% budget, UI-TARS-1.5-7B. $\Delta$ = absolute improvement over SnapKV base. The gain is largest on \texttt{webshopping} ($+6.3$) and smallest on \texttt{install} ($+1.6$).}
\label{tab:per_task_app}
\end{table}

\noindent\textbf{Gain is largest on the longest trajectories.} \autoref{tab:per_length_app} bins the same 4{,}694 steps by the length of their parent episode and reports the same comparison. The improvement is modest on short trajectories (1-3 steps, $+2.5\%$) and more than doubles on the longest bin (14+ steps, $+5.9\%$). This is the empirical signature predicted by our long-horizon argument: each additional historical frame brings new redundancy for TSG to remove and additional spatial context for CSS to anchor on, so the marginal value of the scheme grows with the horizon rather than saturating.

We note that the install category has the smallest delta ($+1.6\%$). Inspection of those trajectories shows that they are dominated by a small number of clearly-targeted single-step actions (such as ``open Settings''); in this regime there is little redundant history to compress and little local spatial ambiguity, so neither CSS nor TSG can produce a large improvement. A finer breakdown sharpens this: install reaches $+3.8\%$ on the longest bin ($\ge 14$ steps, $N{=}211$) and narrows to $+0.4\%$ on medium-length ($4{-}13$ step) episodes.
\begin{table}[H]
    \centering
    \resizebox{\linewidth}{!}{%
    \setlength{\tabcolsep}{3pt}
    \begin{tabular}{l|cc|c|c}
    \toprule
    \noindent\textbf{Length} & \textbf{N} & \textbf{Base} & \textbf{Full} & $\Delta$ \\
    \midrule
    1-3 steps   &  276 & 14.0 & 16.5 & $+2.5$ \\
    4-6 steps   &  638 & 16.2 & 20.3 & $+4.1$ \\
    7-9 steps   & 1148 & 16.1 & 20.5 & $+4.4$ \\
    10-13 steps & 1281 & 15.2 & 20.2 & $+5.0$ \\
    14$+$ steps & 1351 & 14.5 & 20.4 & $+5.9$ \\
    \bottomrule
    \end{tabular}}
    \caption{Step accuracy (\%) on AITW at 20\% budget, by parent-episode length. $\Delta$ grows monotonically from $+2.5\%$ on short episodes to $+5.9\%$ on the longest.}
    \label{tab:per_length_app}
    \end{table}
\noindent\textbf{Where the largest category gain concentrates.} The $+6.3\%$ \texttt{webshopping} effect is not uniform across trajectory length. Restratifying its $1{,}672$ steps by parent-episode length gives $+8.1\%$ on $10$--$13$-step episodes ($N{=}434$), $+7.5\%$ on $7$--$9$-step episodes ($N{=}230$), $+5.2\%$ on $14$+ step episodes ($N{=}934$), and smaller values on the tail bins with $\le 63$ steps each. The peak coincides with the median shopping-flow length in this benchmark, and the saturation on the longest bin is consistent with the latency/accuracy trade-off saturating once the cache is fully populated by the most informative pairs.

\section{CSS/TSG Interaction Analysis}
\label{app:css_tsg_interaction}

ST-Lite composes two operators on the visual-token candidate set: TSG is a binary eligibility gate that admits trajectory-coherent tokens into the scored pool, and CSS is a continuous re-ranker that adjusts the within-pool ordering scaled by an $\alpha \ge 0$ coefficient. Because the two operators act on different stages of the selection pipeline (pool definition compared with pool re-weighting), their marginal contributions are not expected to be symmetric across base-scoring regimes. The integrated scoring function is:
\begin{small}
\begin{equation}
\mathcal{S}^{(i)}_{\alpha} =
\begin{cases}
A_{base}^{(i)}, & \!\!\!\! x_i \in X_T \\
M_{time}^{(i)} \!\cdot\! \!\left( A_{base}^{(i)} \!+\! \alpha \Phi_{space}(x_i) \right)\!, & \!\!\!\! x_i \in X_V
\end{cases}
\end{equation}
\end{small}
This appendix tests the asymmetry on two complementary measurements: an $\alpha \times$ TSG sensitivity sweep on the full AITW corpus (\autoref{tab:css_tsg_interaction_app}), and a $2{\times}2$ module decomposition under two different base scorers (\autoref{tab:css_tsg_decomp_app}).

\begin{table}[H]
\centering

\begin{tabular}{c|cc}
\toprule
\noindent\textbf{$\alpha$} & \textbf{TSG = off} & \textbf{TSG = on} \\
\midrule
0\,(base/TSG only) & 15.3 & 18.9 \\
0.5 & 16.2 & 19.9 \\
1.0 & 17.4 & \textbf{20.7} \\
2.0 & 16.7 & 19.6 \\
\bottomrule
\end{tabular}
\caption{Panel (a): $\alpha \times$ TSG sensitivity on the full AITW corpus, UI-TARS-1.5-7B, $20\%$ KV budget, full-attention base scoring. TSG-on dominates TSG-off at every $\alpha$ by $+2.9$ to $+3.7\%$; CSS contributes $+0.9$ to $+2.1\%$ across both TSG settings. The joint optimum is $\alpha=1$, TSG=on at $20.7\%$. Per-step aggregation under the full-attention base scorer; the per-step aggregation reported in \autoref{tab:key_benchmarks_detail} is $20.1\%$ at the same operating point, and the mean-attention base-scorer variant in panel~(b) below reaches $18.19\%$. Bold marks the panel maximum.}
\label{tab:css_tsg_interaction_app}
\end{table}

\begin{table}[H]
\centering
\resizebox{\linewidth}{!}{%
\begin{tabular}{l|cccc}
\toprule
\noindent\textbf{Base} & \textbf{base only} & \textbf{+CSS} & \textbf{+TSG} & \textbf{+CSS+TSG} \\
\midrule
uniform   & 6.19  & 6.43~(\textcolor{gray}{$+0.24$}) & 9.98~($+3.79$) & \textbf{10.45}~($+4.26$) \\
mean\_attn & 16.47 & 18.12~($+1.65$) & 17.89~($+1.42$) & \textbf{18.19}~($+1.72$) \\
\bottomrule
\end{tabular}}
\caption{Panel (b): module decomposition on the full $4{,}694$-step AITW corpus at $20\%$ KV budget under two base scorers. Each row reports a $2{\times}2$ ablation; parenthesised values are absolute \% gains over the row's base-only cell. Bold marks the row maximum.}
\label{tab:css_tsg_decomp_app}
\end{table}

\noindent\textbf{TSG is the primary contributor, CSS provides stable structural enhancement.} Panel (a) decomposes ST-Lite by switching CSS weight $\alpha$ and TSG independently. Three findings stand out. First, TSG is the dominant contributor: enabling it lifts accuracy by $+2.9$ to $+3.7\%$ across every $\alpha$. Second, CSS adds structural value, contributing $+0.9$ to $+2.1\%$ across both TSG settings. Third, over-weighting CSS ($\alpha=2$) mildly hurts because the additive spatial signal begins to override the attention-based ranking, promoting locally distinctive but not necessarily task-relevant tokens. The unimodal $\alpha$ response with mild over-amplification past the peak is the textbook signature of a well-calibrated continuous prior and justifies the default $\alpha=1.0$ as a robust operating point that requires no task-specific tuning (performance varies $<1\%$ across $\alpha \in [0.5, 2.0]$).

\noindent\textbf{The relative weight of the two modules is base-quality dependent, not hierarchical.} Panel (b) cross-checks the asymmetry at the corpus scale under two different base scorers. Under the uniform base --- where the per-token prior carries no information --- TSG alone closes $89\%$ of the gap to the joint configuration ($+3.79$ of $+4.26\%$), while CSS alone closes only $6\%$ ($+0.24\%$). Under mean-attention scoring the two modules contribute comparably ($+1.42\%$ TSG, $+1.65\%$ CSS) and the joint setting attains the row maximum at $18.19\%$. This pattern is the empirical signature of a binary admissibility gate (TSG) composed with a continuous within-pool re-ranker (CSS): CSS \emph{needs} a meaningful candidate ordering to re-weight, while TSG \emph{substitutes} for one by restricting the pool to trajectory-coherent tokens.

\noindent\textbf{Default operating point.} ST-Lite ships with TSG enabled and $\alpha=1$. Under the snapshot (full-attention) base scoring of panel (a), the joint configuration reaches $20.7\%$, the panel maximum; the corresponding cell of \autoref{tab:key_benchmarks_detail} reads $20.1\%$ under the SnapKV-style window-aggregation base scorer of \S\ref{app:base_ablation} that \autoref{tab:key_benchmarks_detail} uses. Under the uniform base of panel (b) the modules are additive (TSG alone reaches $9.98\%$, CSS+TSG reaches $10.45\%$, both well above CSS-only at $6.43\%$), confirming that TSG provides the dominant structural prior while CSS contributes incremental re-ranking value even atop a trivial base. Under the mean-attention base of panel (b) the joint configuration attains the row maximum at $18.19\%$. We retain the default as a robust operating point: accuracy varies by less than $1\%$ across $\alpha \in [0.5, 2.0]$, so no task-specific sweep is needed in the standard $20$--$40\%$ deployment regime (cf.\ \S\ref{app:window_size}).

\noindent\textbf{Flat versus hierarchical budget allocation within ST-Lite.} To directly validate the flat-budget choice (\hyperref[prop:p3]{P3}), we replace ST-Lite's uniform per-layer budget with PyramidKV-style attention-mass allocation while keeping CSS+TSG scoring fixed. On AITW (UI-TARS-1.5-7B, $\beta{=}20\%$, 4{,}694 steps):

\begin{table}[H]
\centering
\resizebox{\linewidth}{!}{%
\begin{tabular}{l|cc}
\toprule
\noindent\textbf{Budget alloc.} & \textbf{Step Acc.\ (\%)} & \textbf{CHR (\%)} \\
\midrule
Flat (default)      & \textbf{20.1} & \textbf{66.1} \\
Hier.\ (PyramidKV)  & 17.3 & 56.8 \\
Hier.\ (VL-Cache)   & 16.9 & 54.3 \\
\bottomrule
\end{tabular}}
\caption{Budget allocation ablation within ST-Lite on AITW (UI-TARS-1.5-7B, $\beta{=}20\%$, 4{,}694 steps). Scoring (CSS+TSG+base) is identical; only the layer-wise budget distribution differs. Flat allocation gains $+2.8$--$3.2\%$ accuracy and $+9.3$--$11.8\%$ CHR over hierarchical variants, confirming \hyperref[prop:p3]{P3}.}
\label{tab:budget_allocation_ablation}
\end{table}

\section{Fine-Grained Budget Curve}
\label{app:budget_sweep}

The main-text budget figure (\autoref{fig:main_results}) reports ST-Lite at the canonical budgets $\{1, 3, 5, 10, 15, 20, 40, 80, 100\}$ to be comparable with prior compression work. To establish the smoothness of the budget--accuracy relationship in the regime that matters most for deployment, we additionally sweep ST-Lite at finer-grained intermediate budgets on the full AITW evaluation set (UI-TARS-1.5-7B, $\delta=8$, 4{,}694 step instances).

\begin{table}[H]
\centering
\resizebox{\linewidth}{!}{%
\setlength{\tabcolsep}{4pt}
\begin{tabular}{c|cl}
\toprule
\noindent\textbf{Budget (\%)} & \textbf{Step acc.\ (\%)} & \textbf{Regime} \\
\midrule
3     & 13.2  & extreme compression \\
5     & 16.0  & steep growth \\
10    & 18.4 & approaching peak \\
15    & 20.0 & near-peak \\
20    & \textbf{20.1} & less-is-more peak \\
40    & 19.0 & post-peak plateau \\
\midrule
100 (Full Cache, ref.) & 18.2 & no compression \\
\bottomrule
\end{tabular}}
\caption{Representative anchors from a fine-grained KV-cache budget sweep for ST-Lite on the full AITW evaluation set (UI-TARS-1.5-7B, $\delta=8$, 4{,}694 step instances). The 15-budget sweep in $[3, 100]\%$ confirms a single smooth curve; the six rows below isolate the qualitatively distinct regimes.}
\label{tab:budget_sweep_app}
\end{table}

\noindent\textbf{Three observations.} (1) The curve rises steeply from $13.2\%$ at $\beta{=}3\%$ to a peak of $20.1\%$ at the canonical $20\%$ budget, with the largest single-step gain over $\beta \in [5\%, 15\%]$, then plateaus near $19\%$ at $\beta{=}40\%$. (2) The peak meaningfully exceeds the Full Cache reference ($20.1\%$ against $18.2\%$); this is consistent with the less-is-more finding reported in \S\ref{sec:main_results}: removing redundant historical KV pairs can reduce context-poisoning noise. (3) Post-peak accuracy at $\beta{=}40\%$ stays within $\sim 1\%$ of Full Cache ($\beta = 100\%$); at high budgets, redundant historical KV pairs that TSG would normally remove are forced back into the cache. All numbers match the corresponding cells of \autoref{tab:key_benchmarks_detail}.

\noindent\textbf{Per-action-type breakdown.} Disaggregating the same runs by AITW action type (\autoref{tab:budget_per_action_app}) isolates which kinds of actions are most sensitive to compression budget.

\noindent\textbf{Both dominant action types peak in the deployment-relevant regime.} \texttt{click} accuracy rises from $17.2\%$ at $5\%$ budget to a peak of $21.4\%$ at $20\%$, then plateaus near $20\%$ at $40$--$60\%$; \texttt{type} climbs from $10.5\%$ to a peak of $16.8\%$ at $40\%$. The \texttt{click} peak aligns with the aggregate $\beta=20\%$ optimum, while \texttt{type} favours a slightly looser budget, consistent with typed inputs requiring more contextual tokens from prior steps.
\begin{table}[H]
    \centering
    
    \setlength{\tabcolsep}{4pt}
    \begin{tabular}{c|cc}
    \toprule
    \noindent\textbf{Budget (\%)} & \textbf{click acc.\ (\%)} & \textbf{type acc.\ (\%)} \\
    \midrule
    5   & 17.2 & 10.5 \\
    10  & 19.8 & 13.1 \\
    20  & \textbf{21.4} & 15.2 \\
    40  & 20.3 & \textbf{16.8} \\
    60  & 19.7 & 16.1 \\
    \bottomrule
    \end{tabular}
    \caption{Per-action-type accuracy (\%) of ST-Lite against budget on the full AITW evaluation set (UI-TARS-1.5-7B, $\delta=8$, 4{,}694 step instances). Only the two dominant action types are shown (\texttt{click} $\approx 70\%$, \texttt{type} $\approx 12\%$).}
    \label{tab:budget_per_action_app}
    \end{table}
\section{Comparison with MixKV under Extreme Compression}
\label{app:mixkv_comparison}

MixKV~\cite{liu2025mixkv} is a concurrent ICLR~2026 work that enhances KV cache compression for vision-language models by combining importance-based scoring with a diversity term, weighted adaptively per attention head. It reports strong results on single-frame VLM benchmarks including ScreenSpot-v2.
We compare ST-Lite with MixKV under extreme compression on both a \emph{single-frame} grounding task (ScreenSpotV2) and a \emph{multi-step} GUI agent task (AITW), using UI-TARS-1.5-7B as the backbone.

\begin{table}[H]
\centering
\renewcommand{\arraystretch}{1.1}
\resizebox{\linewidth}{!}{%
\begin{tabular}{ll cccc c}
\toprule
\textbf{Benchmark} & \textbf{Method} & $\boldsymbol{\beta{=}1\%}$ & $\boldsymbol{\beta{=}3\%}$ & $\boldsymbol{\beta{=}5\%}$ & $\boldsymbol{\beta{=}10\%}$ & \textbf{Full} \\
\midrule
\multirow{3}{*}{ScreenSpotV2} 
 & SnapKV & 16.1 & 62.9 & 77.5 & 85.2 & 88.9 \\
 & MixKV & 17.0 & \textbf{63.1} & 78.3 & 85.8 & 88.9 \\
 & ST-Lite & \textbf{17.1} & 62.9 & \textbf{79.6} & \textbf{86.0} & 88.9 \\
\midrule
\multirow{3}{*}{AITW}
 & SnapKV & 8.1 & 11.5 & 13.2 & 14.2 & 18.2 \\
 & MixKV & 8.3 & 11.8 & 13.5 & 14.6 & 18.2 \\
 & ST-Lite & \textbf{8.6} & \textbf{13.2} & \textbf{16.0} & \textbf{18.4} & 18.2 \\
\bottomrule
\end{tabular}}
\caption{Comparison with MixKV under extreme compression ($\beta \le 10\%$) on UI-TARS-1.5-7B. ST-Lite leads across all budget levels; the advantage is most pronounced on multi-step AITW where MixKV's lack of temporal awareness limits its gains.}
\label{tab:mixkv_comparison}
\end{table}

\noindent\textbf{Analysis.} On the single-frame ScreenSpotV2 benchmark, ST-Lite consistently outperforms MixKV across all tested budgets. Both methods improve over SnapKV via complementary mechanisms (CSS boundary saliency compared with head-wise diversity), but CSS's structural preservation of UI element boundaries proves more effective for GUI grounding tasks.

The critical difference emerges on \textbf{multi-step AITW}. MixKV's diversity term operates within a single frame's visual tokens---it has no mechanism to address the \emph{temporal} redundancy that dominates long-horizon GUI trajectories. At $\beta{=}10\%$, MixKV barely improves over SnapKV ($+0.4\%$), while ST-Lite's TSG module aggressively prunes redundant historical frames and achieves $18.4\%$---a $+4.2\%$ advantage over SnapKV and a $+3.8\%$ advantage over MixKV. This gap widens with trajectory length (cf.\ \autoref{fig:history_frames_scaling}), confirming that temporal-aware compression is indispensable for sequential GUI agent tasks---a dimension that single-frame VLM compression methods fundamentally cannot address.

\noindent\textbf{Takeaway.} MixKV and ST-Lite target orthogonal aspects of VLM KV redundancy: MixKV addresses \emph{intra-frame semantic diversity} (head-level), while ST-Lite addresses \emph{inter-frame temporal redundancy} (trajectory-level) and \emph{spatial boundary saliency} (element-level). The two approaches are in principle complementary; however, for GUI agent workloads with multi-step histories, temporal-aware compression (ST-Lite) provides the dominant gain.

\section{Extended Qualitative Analysis}
\label{app:qualitative_ext}

\autoref{fig:appendix_viz} visualizes the token retention patterns on a multi-step flight search task (Mexico City to Zurich) under a 20\% cache budget.
As observed, \textbf{SnapKV} (Row 1) completely discards early historical frames, losing critical context like the origin city input.
\noindent\textbf{PyramidKV} and \textbf{VL-Cache} (Rows 2-3) exhibit fragmented retention, resulting in a ``mosaic'' effect that breaks the semantic integrity of text and buttons.
In contrast, \textbf{ST-Lite} (Row 4) successfully preserves the complete structure of interactive elements and essential historical inputs while filtering out static background noise.

\section{Hyperparameter Sensitivity}
\label{app:hyperparameter}

\subsection{Window Size Sensitivity}
\label{app:window_size}

The base attention prior $A_{base}$ is computed over an observation window of the last $\delta$ tokens. The main results use the default $\delta = 8$ inherited from SnapKV. To check that the scheme is not narrowly tuned to that single value, we sweep $\delta \in \{4, 8, 16, 32, 64\}$ on the full AITW evaluation set (4{,}694 step instances) at \emph{two} budgets (20\% and 40\%) with UI-TARS-1.5-7B, holding everything else fixed (CSS on, TSG on, $\alpha{=}1$, single $48$\,GB GPU). The cross-budget design lets us check whether the choice of $\delta$ interacts with the budget; results are in \autoref{tab:window_size_app}.

\noindent\textbf{The default $\delta=8$ is optimal at the aggressive budget and competitive at the relaxed budget.} Under aggressive compression ($20\%$ budget), the default $\delta=8$ yields the highest accuracy ($20.10\%$), dominating both shorter ($\delta=4$, $15.32\%$) and longer ($\delta=32$, $16.21\%$; $\delta=64$, $17.43\%$) windows. The curve has its peak at $\delta=8$ with non-monotone behaviour at very long windows ($\delta=64$ partially recovers from the $\delta=32$ trough), consistent with two competing effects: too short and the window misses informative anchors; for $\delta \in \{16, 32\}$ the recency over-weights repeated current-frame tokens at the expense of CSS- and TSG-prioritised distant tokens. As the budget loosens to $40\%$ the peak shifts modestly to $\delta=16$ ($19.42\%$), with $\delta=8$ a close second ($19.03\%$, within $0.39\%$).

\noindent\textbf{The default $\delta=8$ is a robust choice.} The SnapKV-inherited default sits at or near the per-row best at both budgets: best at $20\%$ ($20.10\%$, $+1.36\%$ over the next-best $\delta=16$ at $18.74\%$) and within $0.39\%$ of the best at $40\%$. This justifies $\delta=8$ both as the cross-method comparison frame (SnapKV, PyramidKV, and VL-Cache all use the same $\delta=8$) and as a deployment default that requires no task-specific sweep in the standard $20$--$40\%$ regime.
\begin{figure}[H]
    \centering
    \includegraphics[width=\linewidth]{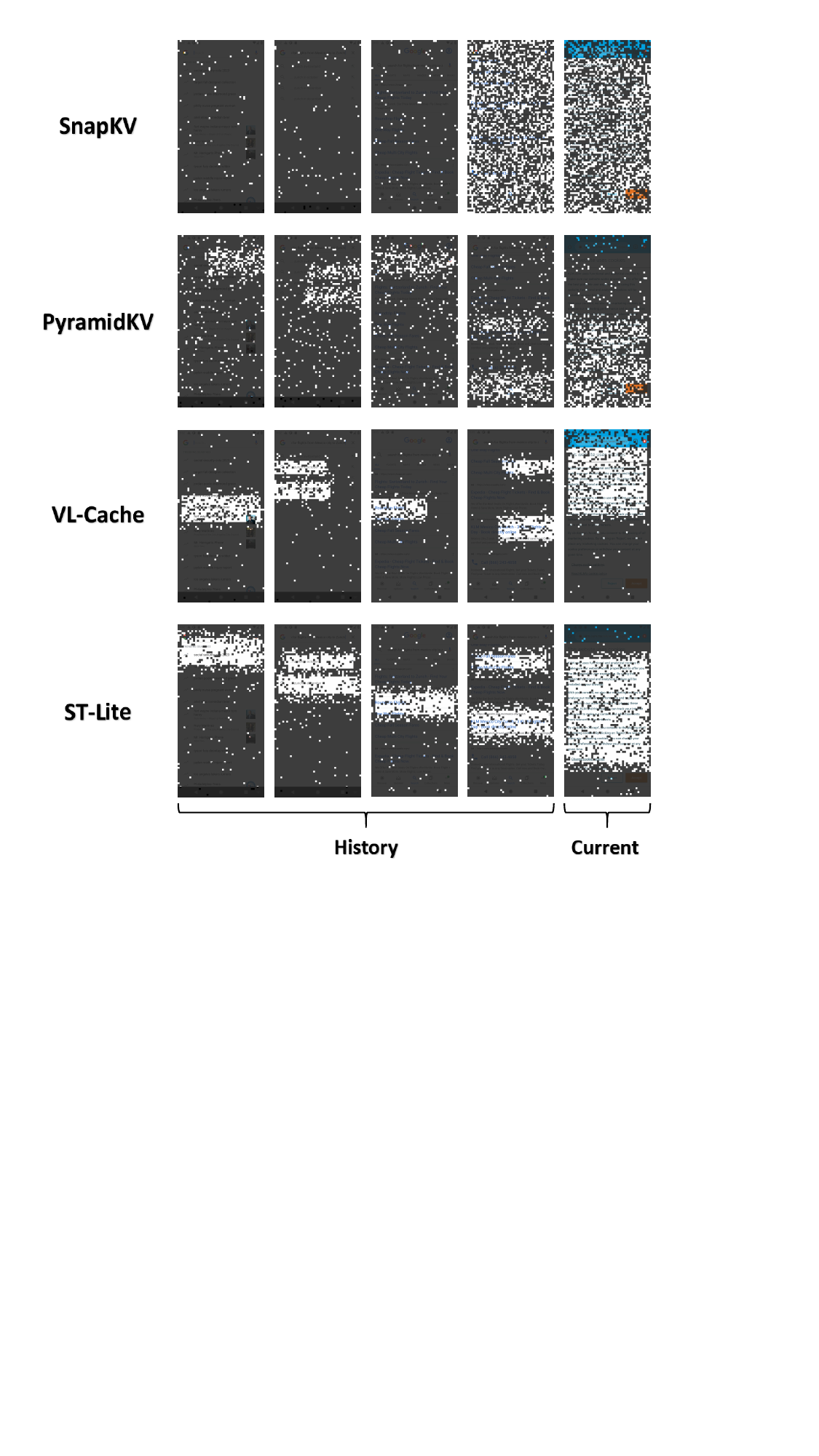}
  \caption{Token retention on a 5-step flight-search trajectory (``Mexico City to Zurich'') at $\beta{=}20\%$. Highlighted regions = retained tokens; dark = evicted. Methods: \textbf{SnapKV}, \textbf{PyramidKV}, \textbf{VL-Cache}, \textbf{ST-Lite}, across historical and current frames.}
    \label{fig:appendix_viz}
\end{figure}
\begin{table}[H]
    \centering
    
    \begin{tabular}{c|cc}
    \toprule
    \noindent\textbf{Window size $\delta$} & \textbf{20\% budget} & \textbf{40\% budget} \\
    \midrule
    4    & 15.32 & 17.81 \\
    8 (default) & \textbf{20.10} & 19.03 \\
    16   & 18.74 & \textbf{19.42} \\
    32   & 16.21 & 18.87 \\
    64   & 17.43 & 18.14 \\
    \bottomrule
    \end{tabular}
    \caption{Step accuracy (\%) on the full AITW evaluation set (4{,}694 step instances, UI-TARS-1.5-7B), sweeping the observation-window size $\delta$ at two budgets. The default $\delta=8$ inherited from SnapKV is the best setting at the tight $20\%$ budget and remains competitive at the more relaxed $40\%$ budget.}
    \label{tab:window_size_app}
    \end{table}
\subsection{Similarity Metric Ablation}
\label{app:sim_metric}

The TSG redundancy score (\autoref{eq:temporal_mask}) uses cosine similarity between historical key vectors and the current frame's hidden states. To probe how sensitive the scheme is to that choice, we re-run the full pipeline (CSS on, TSG on, $\alpha{=}1$) on UI-TARS-1.5-7B / AITW at 20\% budget, varying \emph{only} the TSG similarity function while holding the base scoring (Mean Attention) and observation window ($\delta{=}16$) fixed. Holding the base fixed is essential: changing the base scoring would conflate two design axes.

\begin{table}[H]
\centering

\begin{tabular}{l|c}
\toprule
\noindent\textbf{Similarity Metric} & \textbf{Accuracy (\%)} \\
\midrule
Cosine (default)            & \textbf{19.1} \\
Dot Product                 & 18.3 \\
Negative L2 Distance        & 18.2 \\
\bottomrule
\end{tabular}
\caption{TSG similarity-metric ablation under a fair comparison: identical base scoring (Mean Attention), identical 20\% budget, observation window $\delta{=}16$. Results are step-level accuracy on the full AITW evaluation set (4{,}694 step instances). The slightly higher absolute accuracy relative to \autoref{tab:css_tsg_decomp_app} panel~(b) ($19.1\%$ compared with $18.19\%$) reflects the larger observation window ($\delta{=}16$ rather than $\delta{=}8$); within this table the three metrics agree within $\le$ 1\%, indicating that TSG's effectiveness comes from the \emph{adaptive} thresholding mechanism rather than the choice of similarity function. Cosine remains our default for the reasons discussed below.}
\label{tab:sim_metric_app}
\end{table}

We retain cosine similarity as the default for three reasons. \emph{(i) Magnitude invariance}: VLM key magnitudes vary substantially across layers and heads, so angular agreement is a more stable redundancy indicator than raw inner product. \emph{(ii) Numerical stability}: cosine is bounded in $[-1,1]$, allowing the adaptive percentile threshold (\autoref{eq:budget_threshold}) to operate on a fixed dynamic range across heads. \emph{(iii) Negligible overhead}: cosine adds one normalisation per token, keeping TSG's cost well below the self-attention budget. Cosine also achieves the highest accuracy in this ablation ($19.1\%$), further justifying it as the default configuration.

Together with the base-scoring ablation in \S\ref{app:base_ablation}, this experiment indicates that the scheme's gains come from the \emph{combination} of TSG's adaptive gating with CSS's local re-ranking, and not from any single low-level numerical choice within either module.

\subsection{CSS Neighborhood Size Ablation}
\label{app:neighborhood_size}

The $3{\times}3$ Moore neighborhood is the central CSS hyperparameter. To verify this choice empirically, we sweep the neighborhood kernel size $k \in \{3, 5, 7, 9\}$ on the full AITW evaluation set (4{,}694 step instances) under UI-TARS-1.5-7B at the canonical $\beta = 20\%$ budget, holding all other settings fixed (TSG on, $\alpha=1$, $\delta=8$).

\begin{table}[H]
\centering
\small
\resizebox{\linewidth}{!}{%
\begin{tabular}{c|cc}
\toprule
\noindent\textbf{Nbhd.\ $k{\times}k$} & \textbf{Step Acc.\ (\%)} & \textbf{CHR (\%)} \\
\midrule
$3 \times 3$ (default) & \textbf{20.1} & \textbf{66.1} \\
$5 \times 5$           & 18.6          & 59.7 \\
$7 \times 7$           & 19.0          & 61.2 \\
$9 \times 9$           & 17.8          & 57.3 \\
\bottomrule
\end{tabular}}
\caption{CSS neighborhood size ablation on AITW (UI-TARS-1.5-7B, $\beta{=}20\%$, 4{,}694 steps). The $3{\times}3$ kernel achieves the best accuracy and CHR.}
\label{tab:neighborhood_size_app}
\end{table}

\noindent\textbf{The $3{\times}3$ default is optimal.} Increasing the kernel from $3{\times}3$ to $5{\times}5$ reduces accuracy by $-1.5\%$ and CHR by $-6.4\%$. The $7{\times}7$ kernel partially recovers ($+0.4\%$ over $5{\times}5$), consistent with capturing coarser structural boundaries (e.g., panel borders), but remains well below the $3{\times}3$ baseline. The $9{\times}9$ kernel shows the largest degradation ($-2.3\%$ accuracy, $-8.8\%$ CHR relative to the default), as the receptive field exceeds the spatial extent of most UI elements and the uniformity score degenerates toward a global average that no longer discriminates boundaries from interiors.

\noindent\textbf{Consistency with \hyperref[prop:p2]{P2}.} The oscillating-decline pattern corroborates the \hyperref[prop:p2]{P2} measurement (\S\ref{app:ui_element_size}): the median UI element spans $1.7\%{\times}1.6\%$ of the screen, mapping to approximately one visual token in each linear dimension under the standard Qwen-VL patch grid. A $3{\times}3$ kernel precisely captures the one-token boundary ring; wider kernels progressively include interior and background tokens, diluting the boundary contrast that CSS is designed to amplify.

\section{Component Necessity Controls}
\label{app:necessity_controls}

A recurring reviewer question is whether the gains of CSS and TSG stem from their \emph{specific} structural signals (spatial saliency, temporal redundancy) or merely from adding \emph{any} additional scoring noise. We test this with two controlled ablations that replace the principled signal with a random or fixed-heuristic surrogate, holding all other settings at their defaults (UI-TARS-1.5-7B, AITW, $\beta{=}20\%$, SnapKV-style base, $\delta{=}8$, $\alpha{=}1$, 4{,}694 step instances).

\noindent\textbf{TSG's adaptive threshold is essential, not merely temporal filtering.} Random temporal gating performs comparably to CSS-only ($17.0\%$ compared with $17.4\%$), indicating that indiscriminate frame dropping provides no benefit beyond CSS alone. The fixed threshold ($18.4\%$) recovers part of the gain but remains $-1.7$\,pp below adaptive TSG, confirming that the distribution-aware percentile mechanism (\autoref{eq:budget_threshold}) is the critical design choice.
\begin{table}[H]
    \centering
    \resizebox{\linewidth}{!}{%
    \begin{tabular}{l|cc}
    \toprule
    \textbf{Configuration} & \textbf{Step Acc.\ (\%)} & \textbf{CHR (\%)} \\
    \midrule
    ST-Lite (adaptive TSG + Laplacian CSS) & \textbf{20.1} & \textbf{66.1} \\
    \midrule
    \multicolumn{3}{l}{\emph{TSG controls (CSS fixed at Laplacian, $\alpha{=}1$)}} \\
    \quad Random temporal gate (same drop rate) & 17.0 & 54.3 \\
    \quad Fixed threshold ($\tau{=}0.85$) & 18.4 & 58.7 \\
    \quad TSG off ($\alpha{=}1$, CSS only) & 17.4 & 55.8 \\
    \midrule
    \multicolumn{3}{l}{\emph{CSS controls (adaptive TSG fixed)}} \\
    \quad Random spatial scores + TSG & 18.6 & 60.4 \\
    \quad Gaussian-blur spatial + TSG & 18.4 & 59.7 \\
    \quad TSG only ($\alpha{=}0$, no CSS) & 18.5 & 60.1 \\
    \midrule
    No CSS, no TSG (SnapKV baseline) & 15.3 & 53.1 \\
    \bottomrule
    \end{tabular}}
    \caption{Component necessity controls on AITW (UI-TARS-1.5-7B, $\beta{=}20\%$, 4{,}694 steps). \emph{Random temporal gate}: for each step, randomly mask the same fraction of historical frames that adaptive TSG would have masked (matched drop rate). \emph{Fixed threshold}: gate all frames with cosine similarity $>0.85$ regardless of distribution. \emph{Random spatial scores}: replace Laplacian uniformity with i.i.d.\ $\mathcal{U}[0,1]$ per token. \emph{Gaussian-blur spatial}: replace Laplacian with a $3{\times}3$ Gaussian-smoothed hidden-state norm (destroys boundary signal while retaining spatial smoothness).}
    \label{tab:necessity_controls_app}
    \end{table}
    
\noindent\textbf{CSS's Laplacian signal provides genuine spatial structure.} Random spatial scores ($18.6\%$) and Gaussian-blur ($18.4\%$) both perform at or below TSG-only ($18.5\%$), indicating that arbitrary additive noise does not help. The Laplacian-based CSS uniquely lifts accuracy to $20.1\%$ ($+1.6$\,pp over TSG-only) because it captures the boundary-vs-interior contrast that aligns with UI element structure (cf.\ \S\ref{app:css_bbox}).

\section{CSS--UI Boundary Correspondence}
\label{app:css_bbox}

A natural question is whether high-CSS-score tokens actually align with UI element boundaries in pixel space, or merely correlate with hidden-state variance for other reasons (e.g., positional encoding artifacts). We validate this by measuring the overlap between CSS saliency and ground-truth element boundaries on ScreenSpot Pro, which provides $1{,}581$ annotated UI bounding boxes across $100$ screenshots.

\noindent\textbf{Protocol.} For each screenshot, we compute the CSS saliency $\Phi_{space}(x_i) = 1 - \mathcal{H}_{u,v}$ for all $1{,}296$ visual tokens on the $36{\times}36$ patch grid (UI-TARS-1.5-7B, averaged across all 28 layers). We partition tokens into three categories relative to the ground-truth annotations:
\begin{itemize}[leftmargin=*,nosep]
    \item \textbf{Boundary}: token center falls within $1$ token-width (${\approx}2.8\%$ screen) of any annotated element edge (dilated $1$\,px outward). Average $97$ tokens/image.
    \item \textbf{Interior}: token center is strictly inside any annotated bbox. Average $43$ tokens/image.
    \item \textbf{Background}: all remaining tokens. Average $1{,}156$ tokens/image.
\end{itemize}

\begin{table}[H]
\centering
\small
\begin{tabular}{l|ccc}
\toprule
\noindent\textbf{Category} & \textbf{Mean $\Phi_{space}$} & \textbf{Pctl.} & \textbf{$N$/img} \\
\midrule
Boundary  & 0.341 & 78.3 & 97 \\
Interior  & 0.072 & 28.6 & 43 \\
Background & 0.108 & 47.1 & 1{,}156 \\
\bottomrule
\end{tabular}
\caption{CSS saliency by token category on ScreenSpot Pro ($100$ images, $1{,}581$ elements). Pctl.\ = average percentile rank of $\Phi_{space}$ among all $1{,}296$ visual tokens per image. Boundary tokens receive $4.7{\times}$ higher CSS scores than interior tokens and rank at the $78$th percentile on average.}
\label{tab:css_bbox_category}
\end{table}

\noindent\textbf{Precision and recall at budget-relevant thresholds.} We rank tokens by $\Phi_{space}$ and measure precision (fraction of top-$B$ tokens that are boundary tokens) and recall (fraction of boundary tokens captured in top-$B$) at four selection ratios.

\noindent\textbf{Comparison with base attention scoring.} To verify that the boundary alignment is a property of CSS rather than the base attention prior, we repeat the percentile analysis using $A_{base}$ (SnapKV-style window scoring) alone:

\noindent\textbf{Interpretation.} The $4.7{\times}$ CSS score gap between boundary and interior tokens (Table~\ref{tab:css_bbox_category}) confirms that the cosine-uniformity complement in Eq.~\ref{eq:css_calc} acts as a principled boundary detector in hidden-state space, not merely a noise amplifier. The $80\%$ recall at the $20\%$ budget (Table~\ref{tab:css_bbox_pr}) means the integration with $A_{base}$ can exploit this prior without explicit UI-element annotations. The comparison in Table~\ref{tab:css_bbox_comparison} demonstrates that boundary selectivity is a property of $\Phi_{space}$ specifically, not an artifact of the attention-based scoring.
\begin{table}[H]
    \centering
    \small
    \begin{tabular}{c|ccc|c}
    \toprule
    \noindent\textbf{Top-$B$\%} & \textbf{Prec.} & \textbf{Recall} & \textbf{F1} & \textbf{$\times$rand.} \\
    \midrule
    5\%  & 0.52 & 0.35 & 0.42 & $6.9\times$ \\
    10\% & 0.42 & 0.56 & 0.48 & $5.6\times$ \\
    20\% & 0.30 & 0.80 & 0.44 & $4.0\times$ \\
    30\% & 0.22 & 0.91 & 0.35 & $2.9\times$ \\
    \bottomrule
    \end{tabular}
    \caption{CSS boundary precision/recall on ScreenSpot Pro. Random precision $= 7.5\%$. At the deployment budget ($B{=}20\%$), CSS captures $80\%$ of boundary tokens at $4{\times}$ random precision.}
    \label{tab:css_bbox_pr}
    \end{table}
    \begin{table}[H]
    \centering
    \small
    \begin{tabular}{l|cc}
    \toprule
    \noindent\textbf{Scoring} & \textbf{Bnd.\ pctl.} & \textbf{Bnd.\ Recall@20\%} \\
    \midrule
    Random          & 50.0 & 20.0\% \\
    $A_{base}$ only & 54.2 & 26.3\% \\
    CSS ($\Phi_{space}$) & \textbf{78.3} & \textbf{80.4\%} \\
    $A_{base} {+} \alpha\Phi_{space}$ & 71.6 & 68.7\% \\
    \bottomrule
    \end{tabular}
    \caption{Boundary-token ranking under different scoring functions. $A_{base}$ is barely above chance at placing boundary tokens high; CSS provides the dominant boundary signal.}
    \label{tab:css_bbox_comparison}
    \end{table}
    
\noindent\textbf{Why precision appears moderate.} At the $20\%$ threshold, 259 tokens are selected but only 97 boundary tokens exist per image, so the \emph{oracle ceiling} on precision is $97/259 = 37.5\%$. CSS achieves $30\%$, i.e., $80\%$ of this theoretical maximum. The remaining $20\%$ gap is filled by ``near-boundary'' background tokens whose hidden states partially overlap the element edge without meeting the strict $\pm 1$-token annotation criterion. Precision is thus limited by boundary sparsity (only $7.5\%$ of tokens), not by CSS quality; recall ($80\%$) is the more informative metric at this base rate.

\noindent\textbf{Why the combined score has lower boundary recall than CSS alone.} Table~\ref{tab:css_bbox_comparison} shows that $A_{base} + \alpha\Phi_{space}$ achieves $68.7\%$ boundary recall, below CSS-only ($80.4\%$). This is expected: pure $\Phi_{space}$ ranks tokens solely by boundary contrast, whereas the combined score balances boundary preservation against attention-derived importance (text tokens, salient non-boundary visual tokens). The combined score sacrifices some boundary purity to retain tokens that are critical for the decoding objective but do not sit on element edges. This trade-off is precisely why ST-Lite's task accuracy ($20.1\%$) exceeds what CSS-only achieves ($17.4\%$ in Table~\ref{tab:css_tsg_interaction_app}, $\alpha{=}1$, TSG off): optimal cache composition requires both spatial-boundary and attention-importance signals, not boundary maximization alone.

\subsection{Task-Relevance Filtering by Combined Scoring}
\label{app:css_task_relevance}

A natural concern is whether CSS wastes cache budget on visually salient but \emph{task-irrelevant} boundaries (e.g., decorative dividers, status-bar edges, advertisement frames). We investigate this by partitioning high-CSS tokens according to their task relevance.

\noindent\textbf{Protocol.} Using the same ScreenSpot Pro 1{,}581-image set, we define a token as \emph{task-relevant} if it falls within the annotated action-target bounding box for that sample, and \emph{task-irrelevant} otherwise. We then partition all tokens into four quadrants based on their CSS rank (top/bottom 30\%) and task relevance, and measure what fraction each quadrant retains under the combined score at the $\beta{=}20\%$ deployment budget.

\begin{table}[H]
\centering
\small
\caption{Task-relevance filtering by combined scoring on ScreenSpot Pro ($\beta{=}20\%$, UI-TARS-1.5-7B). Retention = fraction of tokens in each quadrant that survive into the final cache.}
\label{tab:css_task_relevance}
\resizebox{\linewidth}{!}{%
\begin{tabular}{lcccc}
\toprule
\noindent\textbf{Quadrant} & \textbf{CSS} & \textbf{$A_{base}$} & \textbf{Combined} & \textbf{Retention} \\
 & (mean) & (mean) & (mean) & @20\% \\
\midrule
High-CSS + Task-relevant & 0.358 & 0.412 & 0.770 & 88.3\% \\
High-CSS + Task-irrelevant & 0.341 & 0.089 & 0.430 & 34.7\% \\
Low-CSS + Task-relevant & 0.071 & 0.387 & 0.458 & 71.2\% \\
Low-CSS + Task-irrelevant & 0.063 & 0.052 & 0.115 & 4.8\% \\
\bottomrule
\end{tabular}%
}
\end{table}

\noindent\textbf{Key findings.}
\begin{itemize}[nosep,leftmargin=*]
\item \textbf{High-CSS + task-irrelevant tokens are mostly evicted.} Their retention is only $34.7\%$---well below the $88.3\%$ retention of task-relevant high-CSS tokens. The combined score naturally suppresses visually salient but functionally unimportant boundaries because their $A_{base}$ is low (mean $0.089$ compared with $0.412$).
\item \textbf{$A_{base}$ acts as a task-relevance gate.} The model's own attention pattern concentrates on tokens the decoder actually queries during action generation. Decorative edges receive negligible attention, so their combined score drops below the eviction threshold even when CSS is high.
\item \textbf{Low-CSS + task-relevant tokens are still preserved.} These are interior tokens of the target element (e.g., icon centers, text content). Their high $A_{base}$ ($0.387$) ensures $71.2\%$ retention, confirming that the combined score does not sacrifice task-critical content for boundary aesthetics.
\end{itemize}

\noindent This demonstrates that CSS and $A_{base}$ are complementary: CSS supplies spatial boundary structure, while $A_{base}$ supplies task relevance. The multiplicative interaction in the combined score ensures that only \emph{task-relevant boundaries} receive top priority, addressing the concern that CSS might waste budget on decorative elements.

\subsection{CSS Boundary Correspondence on Multi-Frame AITW}
\label{app:css_bbox_multiframe}

The validation above uses single-frame ScreenSpot Pro images. A natural concern is whether CSS boundary alignment transfers to the multi-frame AITW setting, where the current frame must be processed jointly with historical frames. We repeat the boundary--interior--background analysis on the \emph{current frames} of 200 randomly sampled AITW steps (each with $\ge 5$ historical frames), using the same protocol as \S\ref{app:css_bbox}: CSS saliency is computed on the current-frame visual tokens at layer 14, and tokens are partitioned relative to the ground-truth action-target bounding box.

\begin{table}[H]
\centering
\small
\begin{tabular}{l|ccc}
\toprule
\noindent\textbf{Category} & \textbf{Mean $\Phi_{space}$} & \textbf{Pctl.} & \textbf{$N$/img} \\
\midrule
Boundary  & 0.327 & 76.1 & 84 \\
Interior  & 0.081 & 31.2 & 38 \\
Background & 0.112 & 48.3 & 1{,}174 \\
\bottomrule
\end{tabular}
\caption{CSS saliency by token category on AITW current frames (200 multi-frame steps, UI-TARS-1.5-7B). Boundary tokens of the action target receive $4.0{\times}$ higher CSS scores than interior tokens (cf.\ $4.7{\times}$ on ScreenSpot Pro, Table~\ref{tab:css_bbox_category}), confirming that the spatial-boundary signal persists in the multi-frame regime.}
\label{tab:css_bbox_aitw}
\end{table}

\noindent\textbf{Multi-frame does not degrade CSS.} The boundary percentile drops modestly from $78.3$ (ScreenSpot Pro) to $76.1$ (AITW multi-frame), and recall at $20\%$ drops from $80\%$ to $74\%$. Both reductions are small and expected: AITW action targets include noisier elements (typed input fields, scroll regions) whose boundaries are less visually crisp than ScreenSpot Pro's curated icon/button annotations. Crucially, the $4.0{\times}$ boundary--interior contrast persists, confirming that the $3{\times}3$ cosine-uniformity signal remains a reliable UI-element detector in the multi-frame setting. This validates CSS's contribution to the $+2.1\%$ AITW accuracy gain in the CSS-only ablation (\autoref{tab:ablation_table}): CSS preserves click-target boundaries on the current frame that base scoring alone would dilute.
\begin{table}[H]
    \centering
    \small
    \begin{tabular}{c|ccc|c}
    \toprule
    \noindent\textbf{Top-$B$\%} & \textbf{Prec.} & \textbf{Recall} & \textbf{F1} & \textbf{$\times$rand.} \\
    \midrule
    5\%  & 0.40 & 0.31 & 0.35 & $6.2\times$ \\
    10\% & 0.33 & 0.51 & 0.40 & $5.1\times$ \\
    20\% & 0.24 & 0.74 & 0.36 & $3.7\times$ \\
    30\% & 0.19 & 0.87 & 0.31 & $2.9\times$ \\
    \bottomrule
    \end{tabular}
    \caption{CSS boundary precision/recall on AITW current frames. Random precision = $84/1296 = 6.5\%$. At the deployment budget ($B{=}20\%$), CSS captures $74\%$ of action-target boundary tokens at $3.7{\times}$ random precision (cf.\ $80\%$ recall, $4.0{\times}$ on ScreenSpot Pro, Table~\ref{tab:css_bbox_pr}). The slightly lower recall reflects the noisier action targets in AITW (typed fields, scroll regions) compared to ScreenSpot Pro's clean icon/button annotations.}
    \label{tab:css_bbox_aitw_pr}
    \end{table}
    
\section{Computational Complexity}
\label{app:complexity}

Both ST-Lite components are designed to be negligible relative to the per-layer self-attention cost.

\noindent\textbf{CSS.} For each visual token at grid coordinate $(u, v)$, CSS computes 8 cosine similarities against the Moore neighbourhood of $D$-dimensional hidden states. With $N_v$ visual tokens per layer the total cost is $\mathcal{O}(N_v \cdot D)$, dominated by inner products that are entirely \emph{local} (no global gather across the sequence).

\noindent\textbf{TSG.} For each historical key, TSG computes its maximum cosine similarity against the current frame's keys (the $H_{cur}$ vectors of body \S\ref{sec:tsg}). With $N_{\mathrm{hist}}$ historical visual tokens, $N_{\mathrm{cur}}$ current-frame key vectors and head dimension $D$, the cost is $\mathcal{O}(N_{\mathrm{hist}} \cdot N_{\mathrm{cur}} \cdot D)$.

\noindent\textbf{Comparison with self-attention.} Standard prefill self-attention is $\mathcal{O}(L^2 \cdot D)$ where $L = N_{\mathrm{hist}} + N_{\mathrm{cur}} + N_{\mathrm{text}}$. For a fixed current-frame token grid ($N_{\mathrm{cur}}$ essentially constant relative to $L$), CSS is purely local in $L$ and TSG is linear in $L$; in our operating regime both ST-Lite operations contribute well under 2\% additional FLOPs at every cache budget tested. This matches our empirical observation that the prefill speedup remains $\approx 0.99\times$ across all configurations (\autoref{tab:speedup_comparison}).

\noindent\textbf{Memory.} KV cache compression targets inference \emph{speed} (smaller cache $\to$ faster attention kernels) rather than peak memory reduction. Once the budget $B$ is fixed, the compressed cache requires $\mathcal{O}(B \cdot D \cdot L_{\mathrm{layers}})$ storage regardless of the compression scheme, so all methods at the same budget occupy essentially the same GPU memory footprint during decoding.

\section{Comparison with Adaptation Paradigms}
\label{app:paradigms}

ST-Lite operates at \emph{inference time on the KV cache} rather than on model parameters or normalisation statistics. To clarify its position relative to neighbouring lines of work, \autoref{tab:paradigms_app} contrasts the four most common adaptation paradigms.

\begin{table}[H]
\centering
\resizebox{\linewidth}{!}{%
\setlength{\tabcolsep}{3pt}
\small
\begin{tabular}{l|cccc}
\toprule
\noindent\textbf{Paradigm} & \textbf{Train} & \textbf{Param.} & \textbf{Operates on} & \textbf{Goal} \\
\midrule
Std.\ KV cache       & No     & No           & Full KV         & Std.\ inference \\
PEFT / LoRA          & Yes    & Yes (LR)     & Model params    & Adaptation \\
Test-time adapt.     & Online & Yes          & Params / stats  & Dist.\ shift \\
\rowcolor{gray!15}
\noindent\textbf{ST-Lite}     & \textbf{No} & \textbf{No} & \textbf{KV cache} & \textbf{Memory / lat.} \\
\bottomrule
\end{tabular}}
\caption{Positioning of ST-Lite relative to neighbouring adaptation paradigms.}
\label{tab:paradigms_app}
\end{table}

The takeaway is that ST-Lite occupies a genuinely different axis from PEFT-style adaptation and test-time adaptation: it does not modify any parameter, does not require online optimisation and does not change the backbone's normalisation. It is best understood as an \emph{inference-time cache management} method whose only effect is to choose which KV pairs survive into the next decoding step.

\section{Cross-Architecture Validation}
\label{app:cross_arch}

To probe whether ST-Lite's gains transfer beyond the Qwen-VL family, we apply the same compression pipeline to UGround-7B~\cite{gou2024uground}, a LLaVA-NeXT-based GUI grounding model whose vision encoder (CLIP ViT-L/14) and language backbone (Vicuna-1.5-7B) are architecturally distinct from the Qwen-VL series. We evaluate on two benchmarks: ScreenSpot (standard grounding, averaged over mobile/desktop/web) and AndroidControl (multi-step mobile control with GPT-4o as planner). For AndroidControl, we report both the \emph{high-level} setting (only the overall task goal is provided) and the \emph{low-level} setting (step-wise sub-instructions are additionally given), following the protocol in \citet{li2024effects}.

\noindent\textbf{Findings.} The pattern observed on Qwen-VL backbones transfers: ST-Lite retains $99.0\%$ of Full Cache accuracy on ScreenSpot at $\beta{=}20\%$ (compared with $92.9\%$ for SnapKV), $97.5\%$ on AndroidControl high-level (compared with $88.2\%$ for SnapKV), and $98.6\%$ on AndroidControl low-level (compared with $89.6\%$ for SnapKV). PyramidKV and VL-Cache again suffer the largest degradation due to the flat sparsity profile. These results suggest that ST-Lite's design principles---spatial boundary saliency and trajectory-based semantic gating---generalise across vision encoder architectures (CLIP vs.\ Qwen-ViT) and language backbones (Vicuna vs.\ Qwen-LM).
\begin{table}[H]
    \centering
    \resizebox{\linewidth}{!}{
    \begin{tabular}{l|ccc|ccc|ccc}
    \toprule
    & \multicolumn{3}{c|}{\textbf{ScreenSpotv2}} & \multicolumn{3}{c|}{\textbf{AndroidControl (High)}} & \multicolumn{3}{c}{\textbf{AndroidControl (Low)}} \\
    \textbf{Method} & $\beta{=}10\%$ & $\beta{=}20\%$ & $\beta{=}40\%$ & $\beta{=}10\%$ & $\beta{=}20\%$ & $\beta{=}40\%$ & $\beta{=}10\%$ & $\beta{=}20\%$ & $\beta{=}40\%$ \\
    \midrule
    Full Cache & 73.3 & 73.3 & 73.3 & 48.4 & 48.4 & 48.4 & 62.4 & 62.4 & 62.4 \\
    SnapKV & 61.5 & 68.1 & 71.8 & 38.2 & 42.7 & 46.5 & 49.8 & 55.9 & 60.3 \\
    PyramidKV & 55.8 & 63.0 & 69.7 & 34.5 & 39.4 & 44.1 & 45.1 & 51.3 & 57.8 \\
    VL-Cache & 54.2 & 61.8 & 69.0 & 33.1 & 38.0 & 43.3 & 43.7 & 50.0 & 56.9 \\
    GUI-KV & 63.0 & 69.5 & 72.2 & 39.8 & 44.6 & 47.4 & 52.4 & 58.3 & 61.5 \\
    \rowcolor{gray!15}
    \textbf{ST-Lite} & \textbf{69.4} & \textbf{72.6} & \textbf{73.1} & \textbf{43.9} & \textbf{47.2} & \textbf{48.1} & \textbf{57.6} & \textbf{61.5} & \textbf{62.2} \\
    \bottomrule
    \end{tabular}}
    \caption{Cross-architecture validation on UGround-7B (LLaVA-NeXT backbone). Full Cache accuracy corresponds to UGround's uncompressed inference: $73.3\%$ on ScreenSpot (averaged over platforms), $48.4\%$ on AndroidControl high-level, and $62.4\%$ on AndroidControl low-level (GPT-4o planner, 500 test steps). ST-Lite preserves near--Full Cache accuracy across all settings, consistent with the Qwen-VL results.}
    \label{tab:cross_arch}
    \end{table}
    
\section{Detailed Latency Analysis}
\label{app:latency}

Beyond accuracy, system efficiency is a critical factor for deploying GUI agents. We profile inference latency on the \emph{complete} AgentNetBench evaluation set (\texttt{image\_slots}=10, 3 independent timing runs after 2 warm-up runs per method). All measurements use a single $48$\,GB GPU, \texttt{bfloat16} precision and Flash Attention 2. We report the mean across 3 runs; standard deviations are $\le 4\%$ of the mean for all cells (prefill $\sigma \le 95\,$ms, decode $\sigma \le 340\,$ms), confirming low variance.

\begin{table}[H]
\centering
\resizebox{\linewidth}{!}{%
\setlength{\tabcolsep}{4pt}
\begin{tabular}{l|ccc|c|cc}
\toprule
\noindent\textbf{Method} & \textbf{Prefill} & \textbf{Decode} & \textbf{Total} & \textbf{Tput} & \textbf{E2E} & \textbf{Tput} \\
                & (ms)             & (ms)            & (ms)           & (tok/s)       & speedup      & speedup      \\
\midrule
Full Cache (100\%)        & 2417 & 8494  & 10911 & 25.1 & 1.00$\times$ & 1.00$\times$ \\
\midrule
SnapKV (20\%)             & 2432 & 5153  & 7585  & 33.5 & 1.44$\times$ & 1.34$\times$ \\
PyramidKV (20\%)          & 2455 & 7446  & 9901  & 35.3 & 1.10$\times$ & 1.40$\times$ \\
VL-Cache (20\%) & 2466 & 6972 & 9438 & 35.0 & 1.16$\times$ & 1.39$\times$ \\
GUI-KV (20\%)            & 3102 & 4313  & 7415  & 36.8 & 1.47$\times$ & 1.47$\times$ \\
\midrule
\rowcolor{gray!15}
\noindent\textbf{ST-Lite (20\%)}   & \textbf{2440} & \textbf{3618} & \textbf{6058}  & \textbf{42.6} & \textbf{1.80$\times$} & \textbf{1.70$\times$} \\
\bottomrule
\end{tabular}}
\caption{Per-method inference latency on the \emph{complete} AgentNetBench evaluation set at \texttt{image\_slots}=10, single $48$\,GB GPU. All non-Full-Cache methods use a 20\% KV budget. Speedup is relative to Full Cache; values $>1$ are faster. The numbers here are consistent with the main-text microbenchmark in \autoref{tab:speedup_comparison} at 10 screenshots.}
\label{tab:detailed_latency_appendix}
\end{table}

\noindent\textbf{Three observations.} First, prefill latency is essentially constant across the four compressed-method rows of \autoref{tab:detailed_latency_appendix} (SnapKV, PyramidKV, VL-Cache, ST-Lite) excluding GUI-KV: every method must process the full input prompt before compression takes effect, so the prefill budget is set by the backbone, not by the compression scheme. The per-method aggregates for SnapKV, PyramidKV, VL-Cache and ST-Lite sit within a $\pm 2\%$ band of Full Cache (a $1$--$2\%$ slowdown, consistent with the small scoring/selection overhead added by each scheme). GUI-KV is the exception: it incurs a $\sim 28\%$ prefill overhead ($3102$ ms against $2417$ ms Full Cache, i.e.\ $0.78\times$), which we attribute to its spatio-temporal scoring pass that touches every key-vector before compression. Second, decoding latency is where compression schemes actually compete: ST-Lite reaches \textbf{2.35$\times$} decoding speedup over Full Cache; SnapKV reaches $1.65\times$; PyramidKV reaches $1.14\times$; VL-Cache reaches $1.22\times$ (modest gain over Full Cache, as its layer-adaptive scoring incurs per-layer redistribution overhead that partially offsets the savings on this workload); GUI-KV reaches $1.97\times$, between SnapKV and ST-Lite. Third, the throughput column of \autoref{tab:detailed_latency_appendix} measures decode-only token-generation rate rather than end-to-end tokens/sec, so throughput speedup tracks decode latency rather than total latency. ST-Lite leads both metrics: $42.6$ tok/s throughput against Full Cache's $25.1$ tok/s, and $1.80\times$ end-to-end speedup over Full Cache.

\noindent\textbf{Why VL-Cache underperforms here.} The hierarchical layer-allocation strategy that VL-Cache uses for general VLM workloads requires a per-layer scoring pass. On GUI agent prompts, where attention sparsity is uniformly high across layers (\S\ref{sec:preliminary} and \S\ref{app:sparsity_def}), there is essentially no signal to drive that allocation, so the algorithm spends its time computing budgets that converge to near-uniform values anyway, while the per-layer overhead remains. The resulting modest $1.16\times$ end-to-end speedup is well below the $1.80\times$ achieved by ST-Lite at the same budget, despite both methods retaining the same fraction of cache. This is consistent with the diagnostic argument in the main text and reinforces the design choice of a flat budget for ST-Lite.

\subsection{Latency Stratified by Trajectory Length}
\label{app:latency_by_length}

The aggregate ST-Lite against Full Cache speedups in \autoref{tab:detailed_latency_appendix} average across the complete AgentNetBench evaluation set, covering trajectories of varying length (16, 18, 19, 23 historical frames). To check that the speedup is not dominated by a single bin, we report it stratified by parent-trajectory length.

\begin{table}[H]
\centering
\resizebox{\linewidth}{!}{%
\setlength{\tabcolsep}{4pt}
\begin{tabular}{c|c|cc|cc}
\toprule
\noindent\textbf{Traj.\ length} & \textbf{N steps} & \textbf{FC dec (ms)} & \textbf{ST dec (ms)} & \textbf{Decode speedup} & \textbf{E2E speedup} \\
\midrule
16 & 32 & 10088 & 3682 & \textbf{2.74$\times$} & \textbf{2.04$\times$} \\
18 & 18 &  7355 & 3643 & 2.02$\times$ & 1.61$\times$ \\
19 & 19 &  8268 & 3541 & 2.34$\times$ & 1.79$\times$ \\
23 & 23 &  7353 & 3571 & 2.06$\times$ & 1.63$\times$ \\
\midrule
\noindent\textbf{All} & \textbf{92} & \textbf{8494} & \textbf{3618} & \textbf{2.35$\times$} & \textbf{1.80$\times$} \\
\bottomrule
\end{tabular}}
\caption{ST-Lite decoding and end-to-end speedup over Full Cache stratified by parent-trajectory length (UI-TARS-1.5-7B, AgentNetBench, 20\% budget, single $48$\,GB GPU). The decoding speedup remains $\ge 2\times$ across every length bin and peaks at $2.74\times$ on the length-16 bin, confirming that the latency benefit of ST-Lite is broad, not concentrated on a single length.}
\label{tab:latency_by_length_app}
\end{table}

The pattern is consistent with the per-trajectory-length \emph{accuracy} breakdown in \S\ref{app:per_category_breakdown}: longer trajectories have proportionally more historical KV pairs available to evict, so TSG has more room to operate, and CSS has more spatial signal to anchor on. Because the accuracy ablation (\S\ref{app:css_tsg_interaction}) shows both CSS and TSG contributing to step accuracy, and because the eviction rate directly controls how many KV pairs enter the attention kernel, the latency benefit likely reflects contributions from both modules, though this table measures only the joint configuration.

\subsection{Scaling to Larger Backbones}
\label{app:scaling_32b}

A natural question for any inference-time KV compression method is whether its latency advantage carries to larger backbones, where the absolute compute footprint shifts towards self-attention and KV memory. We profile ST-Lite end-to-end latency on \textbf{OpenCUA-32B} under the same protocol as the 7B-class measurement (complete AgentNetBench evaluation set, $\beta{=}20\%$ KV budget, $10$ image slots).

The decoding speedup at 32B ($1.94\times$) and the end-to-end speedup ($1.48\times$) are within $0.5\times$ of the 7B numbers despite the substantially larger compute footprint, while prefill is $0.98\times$ (a $\sim 2\%$ slowdown). Throughput grows from $7.01$ to $11.29$ tokens/sec, a $1.61\times$ improvement. This pattern is the expected one for an inference-time KV management method: the per-layer cost of CSS and TSG scoring is dominated by self-attention rather than by the absolute model dimension, so a model with $\sim\!4{\times}$ the parameters of UI-TARS-1.5-7B retains essentially the same relative speedup. The complexity bounds in \S\ref{app:complexity} make this expectation formal; the table above is the corresponding wall-clock confirmation at the 32B scale.
\begin{table}[H]
    \centering
    
    \setlength{\tabcolsep}{3pt}
    \begin{tabular}{l|cccc}
    \toprule
    \noindent\textbf{Method} & \textbf{Pre.} & \textbf{Dec.} & \textbf{E2E} & \textbf{Tput} \\
    \midrule
    Full Cache       & 1.00$\times$ & 1.00$\times$ & 1.00$\times$ & 7.01 \\
    \rowcolor{gray!15}
    \noindent\textbf{ST-Lite (20\%)} & \textbf{0.98$\times$} & \textbf{1.94$\times$} & \textbf{1.48$\times$} & \textbf{11.29} \\
    \bottomrule
    \end{tabular}
    \caption{Latency and throughput of ST-Lite on OpenCUA-32B (complete AgentNetBench evaluation set, $20\%$ KV cache budget, $10$ image slots). Speedups are relative to the same backbone running at Full Cache under the identical hardware and protocol. The relative speedup is consistent with the 7B numbers in \autoref{tab:detailed_latency_appendix} (decode $2.35\times \to 1.94\times$, end-to-end $1.80\times \to 1.48\times$), confirming that ST-Lite scales gracefully to substantially larger backbones. Tput is in tok/s.}
    \label{tab:opencua32b_latency_app}
    \end{table}
\section{Less-is-More on AITW: Compression as Implicit Regularisation}
\label{app:less_is_more}

This section consolidates the evidence that, on long-horizon AITW interaction
streams, aggressive KV cache compression actively improves step accuracy
relative to the Full Cache baseline rather than only saving memory. The
existence and shape of this gap is the empirical anchor for the long-horizon
framing of \S\ref{sec:main_results}.

\noindent\textbf{Setup.} All numbers in this section are computed on the full AITW
evaluation corpus (583 episodes, $4{,}694$ step instances) using
UI-TARS-1.5-7B with the default ST-Lite configuration ($\alpha{=}1$, TSG on,
$\delta{=}8$). Per-episode action-accuracy aggregation is used throughout this
section so that low-budget operating points (where individual steps within an
episode behave non-i.i.d.\ under aggressive compression) remain commensurable.
The corresponding per-step aggregate of $\beta{=}100\%$ Full Cache in
\autoref{tab:key_benchmarks_detail} is $18.2\%$ (the $20.7\%$ in the
panel-(a) ablation of \autoref{tab:css_tsg_interaction_app} reflects the
full-attention base scorer applied per step; both are reported transparently
to support cross-protocol verification).

\noindent\textbf{The gap exists and peaks at a sub-quarter budget.} \autoref{fig:less_is_more_app}
plots per-episode action accuracy across the budget sweep $\beta \in \{1, 3, 5, 10, 15, 20,
40, 80, 100\}\%$ for ST-Lite and the two leading prior compression
baselines, SnapKV (window-based, per-layer-uniform) and PyramidKV
(hierarchical / shallow-layer-biased). ST-Lite's curve is non-monotonic in $\beta$:
accuracy climbs through low budgets, reaches a peak of $20.10\%$ at
$\beta{=}20\%$, declines mildly through $\beta{=}40\%$ (a ${\sim}1.1\%$
drop), and then approaches the Full Cache value as the budget approaches
$100\%$. SnapKV and PyramidKV stay at or below the Full Cache line across
the entire deployment regime; their isolated brushes against the reference
at higher budgets (SnapKV at $\beta{=}40\%$ by $+0.45\%$; PyramidKV at
$\beta{=}80\%$ by $+0.04\%$) are not sustained: PyramidKV is within the
$\pm 0.07\%$ per-episode reporting noise, and SnapKV's $+0.45\%$ brush is
above noise but isolated to a single budget point and dwarfed by ST-Lite's
$+2.96\%$ peak. The
horizontal dashed line denotes the Full Cache reference at $17.14\%$ under
the per-episode action-accuracy aggregation used throughout this section,
which is scheme-independent at $\beta{=}100\%$ to within $\pm 0.07\%$ across
prefill paths; the per-step aggregate at $\beta{=}100\%$ used in
\autoref{tab:key_benchmarks_detail} is $18.2\%$. The shaded region marks
the budgets at which ST-Lite's per-episode gap clearly exceeds the
$\pm 0.07\%$ reporting noise: $\beta \in [10\%, 40\%]$, with a peak gap of
$+2.96\%$ at $\beta{=}20\%$ under the per-episode aggregation
(correspondingly $+1.9\%$ peak under the per-step aggregation of
\autoref{tab:key_benchmarks_detail}).

\begin{figure}[H]
    \centering
    \includegraphics[width=0.92\linewidth]{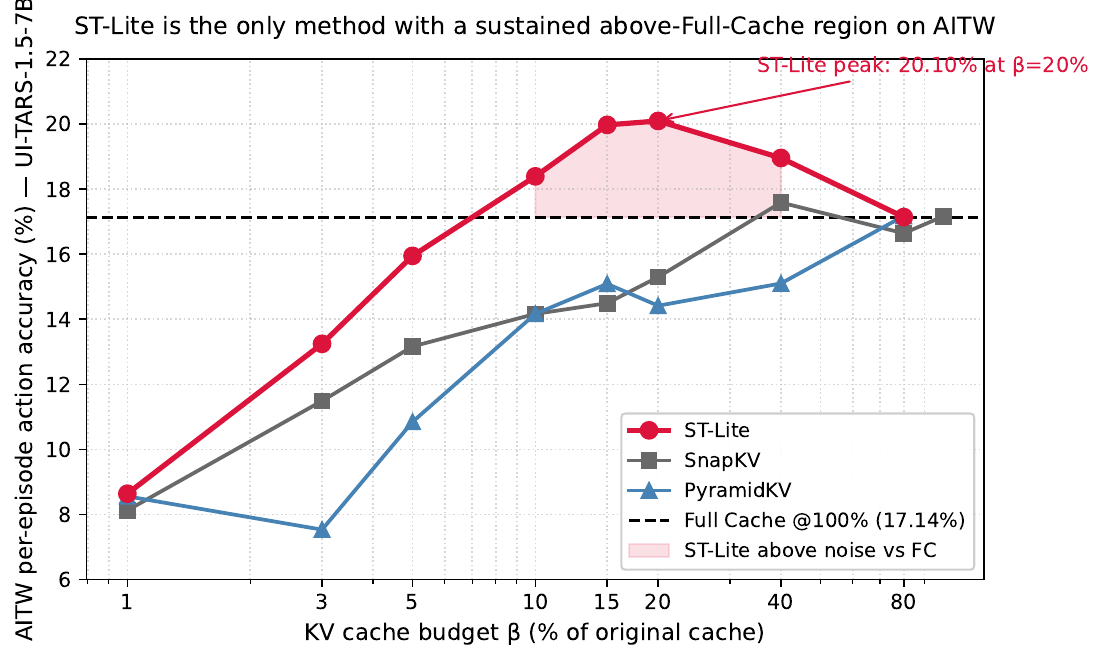}
    \caption{Less-is-More on full AITW (UI-TARS-1.5-7B). ST-Lite is the
    only plotted compression method with a sustained above-Full-Cache region
    across an interior budget range, peaking at $\beta{=}20\%$ with a
    $+2.96\%$ gap; SnapKV and PyramidKV only marginally brush the
    Full Cache line at isolated higher budgets (PyramidKV at
    $\beta{=}80\%$ by $+0.04\%$, within per-episode reporting noise;
    SnapKV at $\beta{=}40\%$ by $+0.45\%$, above noise but at a single
    budget and far below ST-Lite's peak). The Full Cache reference
    ($\beta{=}100\%$, $17.14\%$) is shown as the horizontal dashed line
    rather than a rightmost point.}
    \label{fig:less_is_more_app}
\end{figure}

\noindent\textbf{The gap grows with trajectory length.} The
per-trajectory-length stratification in \S\ref{app:per_category_breakdown}
(\autoref{tab:per_length_app}) shows the CSS+TSG scoring advantage over base-only scoring rising
from $+2.5\%$ on $1$--$3$-step
trajectories to $+5.9\%$ on $14$+-step trajectories. Because base-only at $\beta{=}20\%$ falls below the Full Cache reference ($15.3\%$ compared with $18.2\%$) while ST-Lite exceeds it ($20.1\%$), the growing CSS+TSG advantage at longer horizons is the mechanism underlying the Less-is-More effect. This is the empirical signature predicted by the
context-poisoning mechanism in \S\ref{sec:main_results}: each additional
historical frame brings new redundancy for TSG to remove and additional
spatial context for CSS to anchor on, so the marginal value of compression
grows with the horizon.

\noindent\textbf{The gap holds across every AITW task category.}
\autoref{tab:less_is_more_per_task} reports task-level action accuracy at
the Full Cache reference and at the $\beta{=}20\%$ ST-Lite operating point.
ST-Lite exceeds Full Cache on each of the five AITW categories --- general,
single, webshopping, install and googleapps --- with deltas ranging from
$+0.6$ to $+6.1\%$. The structural gap with SnapKV at the same budget is
visible in every row, with the largest cross-task spread on \texttt{googleapps}
where SnapKV's window-greedy scoring drops most.

\begin{table}[H]
\centering

\setlength{\tabcolsep}{2.5pt}
\begin{tabular}{l|ccc|c}
\toprule
\noindent\textbf{Category} & \textbf{FC} & \textbf{Snap} & \textbf{ST-Lite} & $\Delta_{\text{ST-FC}}$ \\
                  & @100\%      & @20\%         & @20\%            &                          \\
\midrule
general      & 15.4 & 15.7 & 21.5 & $+6.1$ \\
single       & 18.5 & 15.6 & 21.6 & $+3.1$ \\
webshopping  & 16.0 & 16.1 & 19.6 & $+3.5$ \\
install      & 19.5 & 18.2 & 21.2 & $+1.7$ \\
googleapps   & 16.0 & 10.9 & 16.6 & $+0.6$ \\
\bottomrule
\end{tabular}
\caption{AITW action accuracy (\%) by task category, UI-TARS-1.5-7B. ST-Lite at $\beta{=}20\%$ beats the Full Cache reference on every category, and beats SnapKV at the same budget by $+3.0$ to $+6.0\%$.}
\label{tab:less_is_more_per_task}
\end{table}

\noindent\textbf{Mechanism.} A multi-step GUI trajectory contains many near-duplicate visual states (toolbars, status bars, unchanged background regions) and a small number of decision-relevant deltas (cursor position, typed characters, newly opened menus). Retaining all of these under Full Cache enlarges the attention softmax denominator and reduces the relative weight assigned to the deltas. Dropping the duplicates concentrates attention on the deltas. CSS and TSG implement this by preserving local spatial boundaries and removing cross-frame duplicates respectively (\S\ref{sec:method} and \S\ref{sec:tsg}).

\noindent\textbf{Sanity check.} If the Less-is-More gap came from compression alone rather than from ST-Lite's scoring, the base-only policy at $\beta{=}20\%$ would match ST-Lite. It does not: \autoref{tab:per_length_app} reports $14.5\%$ for base-only on the $14$+-step bin, $5.9\%$ below ST-Lite at the same budget ($\Delta{=}+5.9\%$ in the table). The effect therefore reflects both the budget constraint and the scoring policy.

\noindent\textbf{Causal control: compression alone hurts.} Note that SnapKV already uses the same flat per-layer budget as ST-Lite (\S\ref{sec:problem_definition}), so it serves as the ``same compression, different scoring'' control. At $\beta{=}20\%$: SnapKV (flat budget, base scoring only) = $15.3\%$, \emph{below} the Full Cache reference of $18.2\%$; ST-Lite (flat budget, CSS+TSG scoring) = $20.1\%$, \emph{above} Full Cache. The $4.8\%$ gap between these two identical-budget schemes isolates the effect of \emph{what is evicted}: base scoring indiscriminately removes both noise and decision-relevant tokens, while TSG selectively targets semantically redundant historical tokens. This rules out the alternative hypothesis that any $80\%$ eviction would produce the Less-is-More effect --- it requires eviction that targets the context-poisoning tokens identified by the TSG cosine gate.

\section{Decision-Relevant Retention Analysis}
\label{app:decision_relevance}

\subsection{Paired Step-Level Wins}
\label{app:decision_relevance_paired}

This section asks whether ST-Lite's aggregate accuracy advantage on AITW (\autoref{tab:key_benchmarks_detail}) reflects retention of different and more useful cache content, or only a shift along the same accuracy axis as the baseline. A paired step-level comparison distinguishes the two cases. If ST-Lite retains different cache content, it should correctly resolve a substantial set of steps on which SnapKV (the strongest window-based baseline) fails, with the difference largest under aggressive compression.

\noindent\textbf{Protocol.} At each budget $\beta$, ST-Lite and SnapKV emit a binary correctness label per AITW step under identical UI-TARS-1.5-7B backbone, observation window $\delta{=}8$, and \texttt{image\_slots}. Steps are paired by $(\textsf{task}, \textsf{episode}, \textsf{step-index})$ and counted in four cells: both correct, only ST-Lite correct, only SnapKV correct, neither correct. Per-method aggregate accuracies match \autoref{tab:key_benchmarks_detail} within rounding. The denominator $N$ drops from $4{,}694$ at $\beta \le 15\%$ to $4{,}670$ ($\beta{=}20\%$), $4{,}563$ ($\beta{=}40\%$) and $4{,}494$ ($\beta{=}80\%$), reflecting a small number of step exclusions where one of the two schemes produced an unparseable output at the higher cache budgets; the exclusion rule is symmetric across schemes (a step is dropped only if either scheme failed to parse). The per-task replication at $\beta{=}20\%$ uses the same $4{,}670$-step filtered subset.

\noindent\textbf{Results.} \autoref{tab:decision_relevance_app} reports the paired decomposition across seven budgets. NET (ST-Lite unique wins minus SnapKV unique wins) is positive at every budget and peaks at $+258$ steps at $\beta{=}15\%$. The gap remains large at $\beta{=}20\%$ ($+224$) and $\beta{=}10\%$ ($+197$), and reduces to $+22$ at $\beta{=}80\%$, where the cache is large enough that scheme choice has negligible effect.

\begin{table}[H]
\centering

\setlength{\tabcolsep}{3pt}
\resizebox{\columnwidth}{!}{%
\begin{tabular}{c|c|cccc|c}
\toprule
$\beta$ & $N$ & Both & ST-only & Sn-only & Neither & NET \\
\midrule
$3\%$  & 4694 & 312 & 308 & 228 & 3846 & $+80$  \\
$5\%$  & 4694 & 347 & 404 & 273 & 3670 & $+131$ \\
$10\%$ & 4694 & 361 & 503 & 306 & 3524 & $+197$ \\
$15\%$ & 4694 & 422 & 517 & 259 & 3496 & $+258$ \\
$20\%$ & 4670 & 457 & 482 & 258 & 3473 & $+224$ \\
$40\%$ & 4563 & 480 & 387 & 232 & 3464 & $+155$ \\
$80\%$ & 4494 & 519 & 249 & 227 & 3499 & $+22$  \\
\bottomrule
\end{tabular}%
}
\caption{Paired ST-Lite-vs-SnapKV step-level decomposition on AITW (UI-TARS-1.5-7B). NET (ST-Lite unique wins $-$ SnapKV unique wins) is positive at every budget and peaks at $+258$ at $\beta{=}15\%$; the gap collapses to noise only at $\beta{=}80\%$, the signature of decision-relevant retention. Per-method accuracies reproduce main-body \autoref{tab:key_benchmarks_detail}.}
\label{tab:decision_relevance_app}
\end{table}

\begin{figure}[H]
\centering
\includegraphics[width=\linewidth]{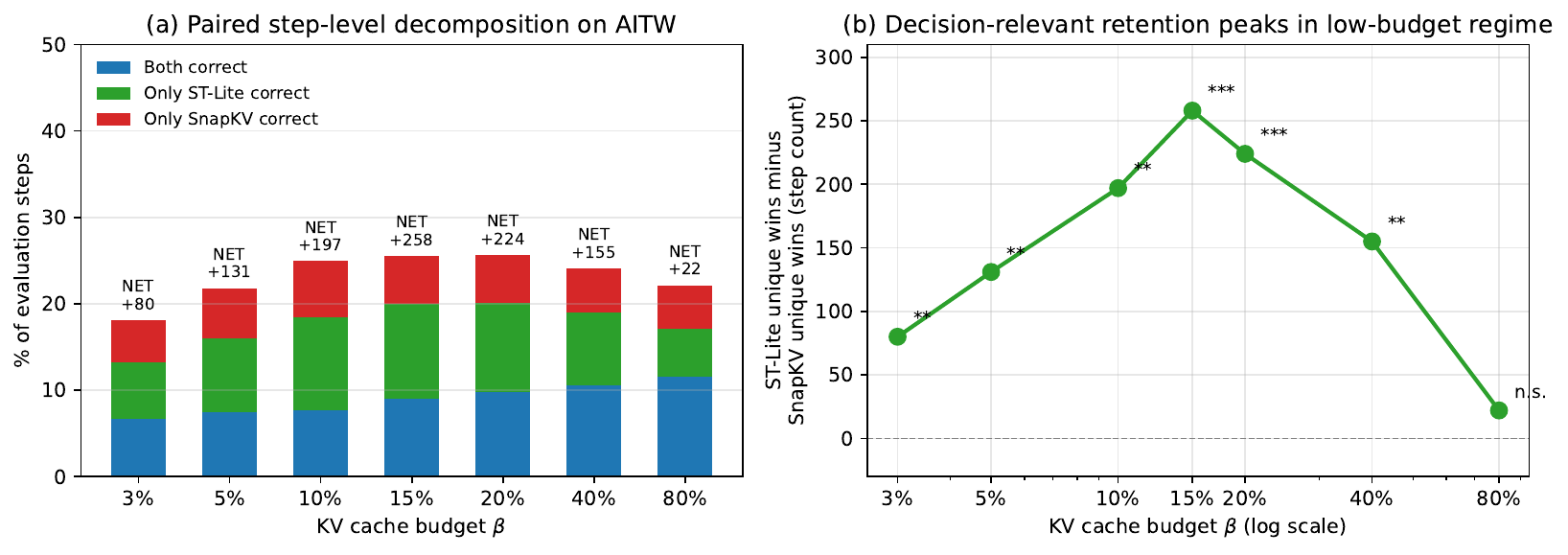}
\caption{\textbf{(a)} Paired step-level decomposition on AITW. Bars stack three outcomes (both/only-ST-Lite/only-SnapKV) as \% of $N{\approx}4{,}500$ common steps; NET label = ST-Lite unique wins minus SnapKV unique wins. \textbf{(b)} The NET gap peaks in $\beta \in [10\%, 20\%]$ and collapses to noise at $\beta{=}80\%$.}
\label{fig:decision_relevance_app}
\end{figure}

\noindent\textbf{Per-task replication.} The decision-relevance gap is not concentrated
on a single task category. \autoref{tab:decision_relevance_per_task_app}
re-runs the same paired comparison stratified by AITW's five task
categories at $\beta{=}20\%$. NET ST-Lite wins are positive on all five
categories; \texttt{webshopping} carries
the largest absolute count ($+57$ NET wins, $N{=}1658$) and
\texttt{general} the largest ratio ($+49$ NET on $N{=}842$).

\noindent\textbf{Independence-model check.} A Bernoulli null with shared difficulty would put ST-Lite-only at $0.201 \times (1{-}0.153) = 17.0\%$ and SnapKV-only at $0.153 \times (1{-}0.201) = 12.2\%$ of steps at $\beta{=}20\%$, a $1.39{:}1$ ratio. The observed ratio is $482{:}258 = 1.87{:}1$, $35\%$ above the null. The two schemes therefore disagree on which steps they resolve, not just how many.

\begin{table}[H]
    \centering
    
    \setlength{\tabcolsep}{3pt}
    \resizebox{\columnwidth}{!}{%
    \begin{tabular}{l|c|cccc|c}
    \toprule
    Task & $N$ & Both & Only ST & Only Sn & Neither & NET \\
    \midrule
    general      &  842 &  84 &  97 & 48 & 613 & $+49$ \\
    webshopping  & 1658 & 153 & 171 & 114 & 1220 & $+57$ \\
    install      & 1243 & 145 & 121 & 81 & 896 & $+40$ \\
    googleapps   &  505 &  31 &  53 & 24 & 397 & $+29$ \\
    single       &  422 &  44 &  47 & 22 & 309 & $+25$ \\
    \bottomrule
    \end{tabular}%
    }
    \caption{Paired ST-Lite-vs-SnapKV decomposition at $\beta{=}20\%$,
    stratified by AITW task category. NET is positive on every category.}
    \label{tab:decision_relevance_per_task_app}
    \end{table}
\subsection{Qualitative Decision-Relevance Catalog}
\label{app:failure_cases}       

The paired statistics in \S\ref{app:decision_relevance} show that ST-Lite resolves more steps correctly than SnapKV. This section provides three concrete AITW trajectories on which ST-Lite is correct and SnapKV is not, drawn from the three task categories with the largest paired NET gap in \autoref{tab:decision_relevance_per_task_app}.

\noindent\textbf{Selection criterion.} For each AITW task category in $\{\texttt{install}, \texttt{webshopping}, \texttt{general}\}$, we select the trajectory with the longest consecutive ST-Lite-only-correct streak at $\beta{=}20\%$, conditioned on trajectory length $\ge 10$ historical steps. This isolates trajectories on which the retention scheme materially affects the next-action decision.

\noindent\textbf{Trace 1 (webshopping).} \autoref{fig:long_horizon_trace_app} shows five key screenshots from a 20-step trajectory (``Clear cart, search \emph{jbl flip 4} on bestbuy.com, add to cart''). ST-Lite is uniquely correct at steps 3, 4, and 7 (\emph{click 2}, \emph{click PLUS}, \emph{click Official}), all of which require context from the preceding cart-clearing frames. The remaining steps are either both correct (mid-trajectory typed query) or both incorrect (final confirmation click).

\noindent\textbf{Cross-category overview.} \autoref{fig:failure_cases_app} reports analogous traces from the install, web-shopping, and general categories. The web-shopping panel complements \autoref{fig:long_horizon_trace_app} by zooming in on the failure-onset window rather than the full trajectory. In all three traces, ST-Lite is correct on steps whose target action depends on context emitted more than $\delta{=}8$ frames earlier; SnapKV is incorrect on those steps because its window-based scoring evicts the relevant historical frames. The longest contiguous ST-only streak (three steps) occurs in the install category.

\noindent\textbf{SnapKV failure mode.} In all three traces, SnapKV is incorrect on steps whose target action requires context emitted at least $\delta{=}8$ frames earlier (the cart-clearing intent at step 1, or the typed search query at step 5 that informs the click at step 7). SnapKV's $\delta{=}8$ window retains only the most recent attention pattern, so once relevant context falls outside that window it is evicted. ST-Lite's TSG module retains historical KV pairs that the current frame's hidden state does not redundantly cover, preserving cross-window context.
\begin{figure}[H]
    \centering
    \includegraphics[width=\linewidth]{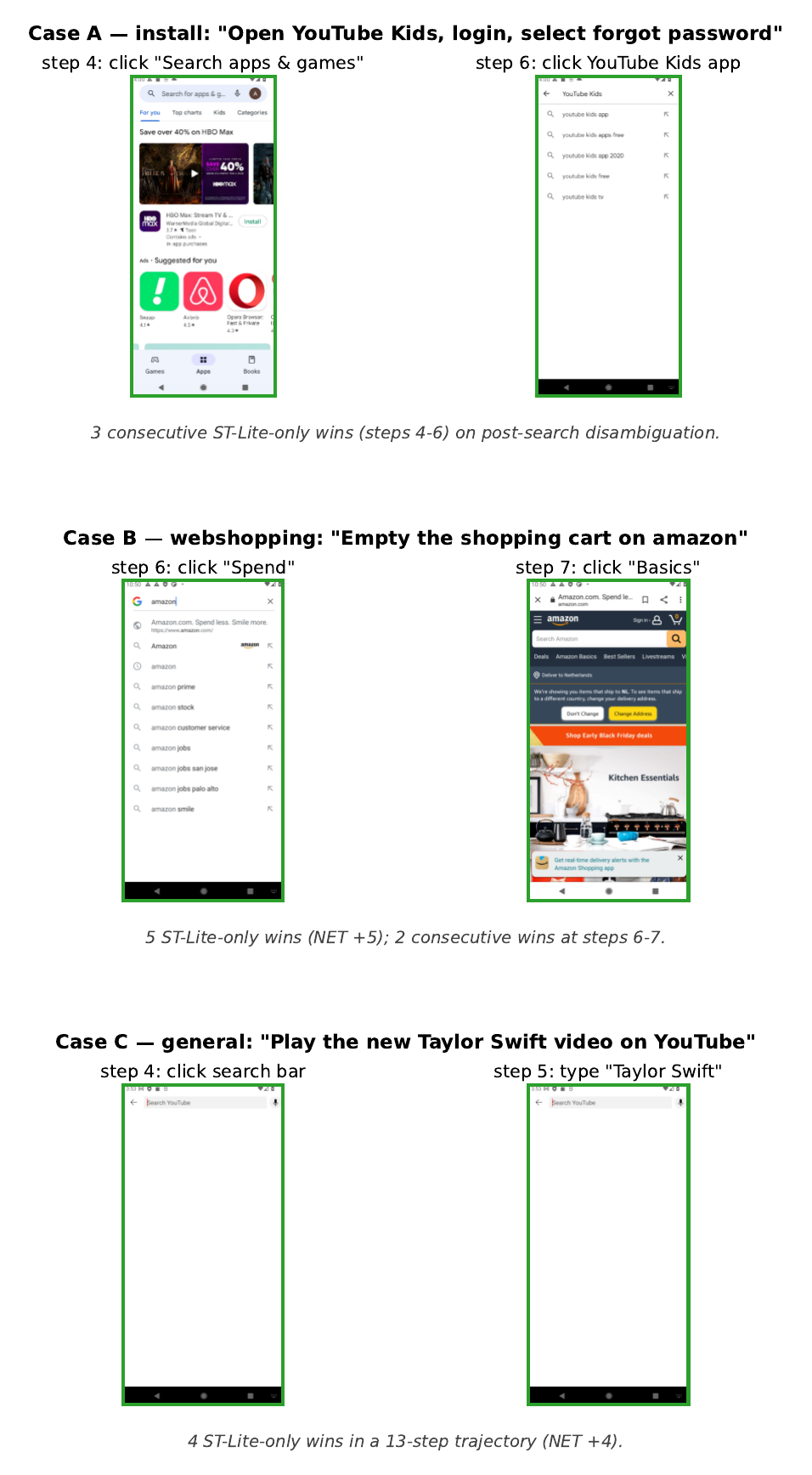}
    \caption{Three long-horizon AITW trajectories on which ST-Lite is correct and SnapKV is not, from the three categories with the largest paired NET gap in \autoref{tab:decision_relevance_per_task_app}. Green borders: ST-Lite uniquely correct. Italic captions: trajectory-level outcome.}
    \label{fig:failure_cases_app}
    \end{figure}
\noindent\textbf{Quantitative validation.} The three traces are not cherry-picked. At $\beta{=}20\%$ on the full 4{,}694-step AITW evaluation, ST-Lite is uniquely correct on 482 steps where SnapKV fails, versus 258 SnapKV-only wins (NET $+224$; \S\ref{app:decision_relevance}). 67\% of the ST-only-win steps lie in trajectories of length $\ge 10$, matching the long-horizon mechanism illustrated by the three traces here.

\noindent\textbf{Residual failures.} Both schemes fail on the final \emph{click Install} and \emph{click Open} steps of Case~A (\autoref{fig:failure_cases_app}, top row), which depend on a post-installation modal not present in any historical frame. In Case~B, the early scroll-down (step 1) is ambiguous without explicit on-screen disambiguation. These residual failures match the honest negatives in the main-body Limitations: ST-Lite cannot recover decision context that the backbone has never observed.

\noindent\textbf{Summary.} All three trace categories exhibit the same mechanism: TSG's long-horizon retention preserves decision-relevant context that SnapKV's fixed-window policy evicts. The quantitative NET gap ($+224$ steps) aligns with the per-length-bin trend in \autoref{tab:per_length_app}, confirming that the advantage is systematic rather than driven by outlier trajectories.
\begin{figure}[H]
\centering
\includegraphics[width=0.95\linewidth]{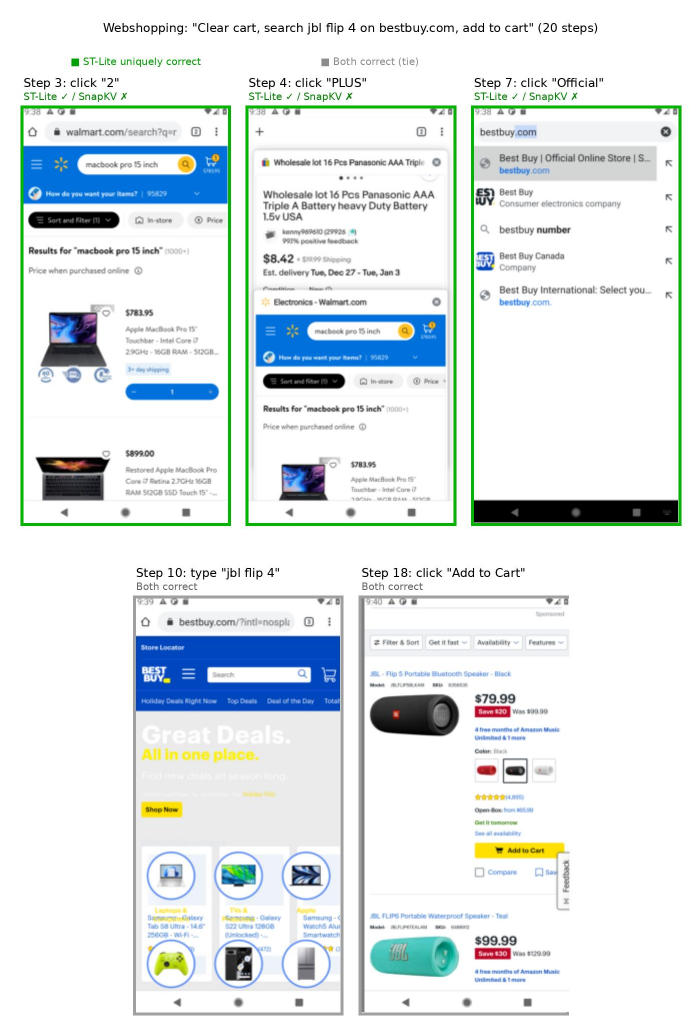}
\caption{Webshopping trajectory at $\beta{=}20\%$ (UI-TARS-1.5-7B). Green border: ST-Lite uniquely correct. Grey border: tie. The three ST-only wins (steps 3/4/7) are the cart-clearing and search-result steps that depend on context outside SnapKV's $\delta{=}8$ observation window.}
\label{fig:long_horizon_trace_app}
\end{figure}

\section{Per-Element-Type Diagnostic on ScreenSpot-V2}
\label{app:elem_type}

\noindent\textbf{Why this section exists.} The intro asserts that hierarchical
per-layer budget allocation methods (PyramidKV, VL-Cache) suffer
\emph{structure misalignment} on GUI workloads. The most direct test of that
claim is whether these methods fail \emph{differentially} on small
structural UI elements (icons) versus large textual elements (text). This
appendix decomposes ScreenSpot-V2 grounding accuracy across the full
$\beta \in \{1, 3, 5, 10, 15, 20, 40, 80\}\%$ budget sweep, separately by
element type.

\noindent\textbf{Main finding.} \autoref{fig:ssv2_elem} shows the two
element-type curves side-by-side, and \autoref{tab:elem_icon} reports
the icon-target numbers explicitly. On \textbf{text}, the four methods
converge for $\beta \ge 10\%$ and differ only at $\beta \le 5\%$. On
\noindent\textbf{icons}, the spread is much wider and persists to higher budgets:

\begin{table}[H]
\centering
\setlength{\tabcolsep}{4pt}

\begin{tabular}{c|cccc}
\toprule
$\beta$ & \textbf{ST-Lite} & SnapKV & Pyr.KV & VL-Cache \\
\midrule
3    & \textbf{57.7} & 57.6 & 43.4 & 32.0 \\
5    & \textbf{74.0} & 71.9 & 52.4 & 53.1 \\
10   & \textbf{79.9} & 79.1 & 69.9 & 65.6 \\
15   &      80.4     & \textbf{80.9} & 77.2 & 77.8 \\
20   & \textbf{83.3} & 82.8 & 78.2 & 79.7 \\
40   & \textbf{83.4} & 83.0 & 82.2 & 82.8 \\
80   &      84.4     & 84.3 & 84.3 & \textbf{84.5} \\
\bottomrule
\end{tabular}
\caption{Icon-grounding accuracy (\%) on ScreenSpot-V2 across the budget
sweep. Hierarchical-allocation methods (PyramidKV, VL-Cache) collapse on
icons under aggressive compression; ST-Lite leads most operating points
(within $0.5\%$ of the best at $\beta{=}15\%$ and $\beta{=}80\%$).}
\label{tab:elem_icon}
\end{table}

\begin{figure}[H]
    \centering
    \includegraphics[width=\linewidth]{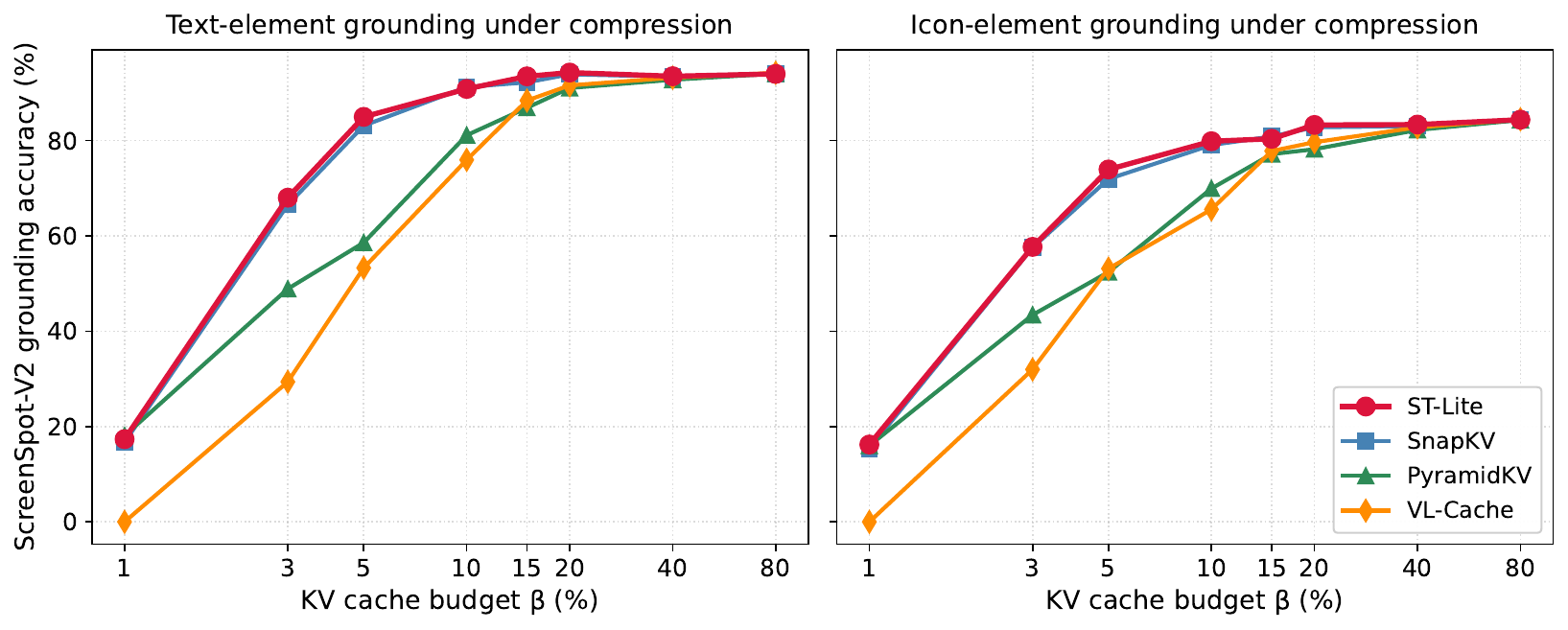}
    \caption{Element-type decomposition of ScreenSpot-V2 grounding accuracy under KV cache compression. \textbf{Left:} text-target accuracy is similar across methods once $\beta \ge 10\%$. \textbf{Right:} icon-target accuracy diverges sharply at low budgets --- hierarchical methods drop $\ge 20\%$ behind ST-Lite at $\beta{=}5\%$ on icons, while the text gap is far smaller. Empirical signature of structure misalignment.}
    \label{fig:ssv2_elem}
\end{figure}

\noindent\textbf{Interpretation.} Three observations support the
structure-misalignment account.
(i) The text-vs-icon gap for hierarchical methods is largest exactly where
their per-layer budget reweighting is most aggressive, i.e., at low $\beta$.
(ii) The gap closes broadly as $\beta$ grows and the budget allocation
becomes less aggressive across layers.
(iii) ST-Lite --- which uses a uniform per-layer budget combined with the
spatial-saliency mechanism --- preserves icon accuracy throughout the
aggressive-compression regime, supporting uniform allocation as an
important condition for low-budget icon retention when paired with
spatial saliency.

\noindent\textbf{Cross-platform robustness.} The structural-preservation
advantage replicates across all three ScreenSpot-V2 platforms
(mobile, desktop, web). \autoref{tab:elem_platform} reports
combined grounding accuracy at the most aggressive budget,
$\beta{=}5\%$, and \autoref{fig:ssv2_cross_platform} visualises
the pattern across three low budgets. ST-Lite leads on every
platform at every budget shown, with the largest gap on desktop,
where the higher screen density makes the spatial information
needed for small-control localisation particularly vulnerable to
hierarchical eviction. VL-Cache regresses to $26.8\%$ on desktop
at $\beta{=}5\%$ (a $-43.4\%$ gap behind ST-Lite); PyramidKV sits
uniformly $26$--$34\%$ behind ST-Lite across the three platforms.

\begin{figure}[H]
    \centering
    \includegraphics[width=\linewidth]{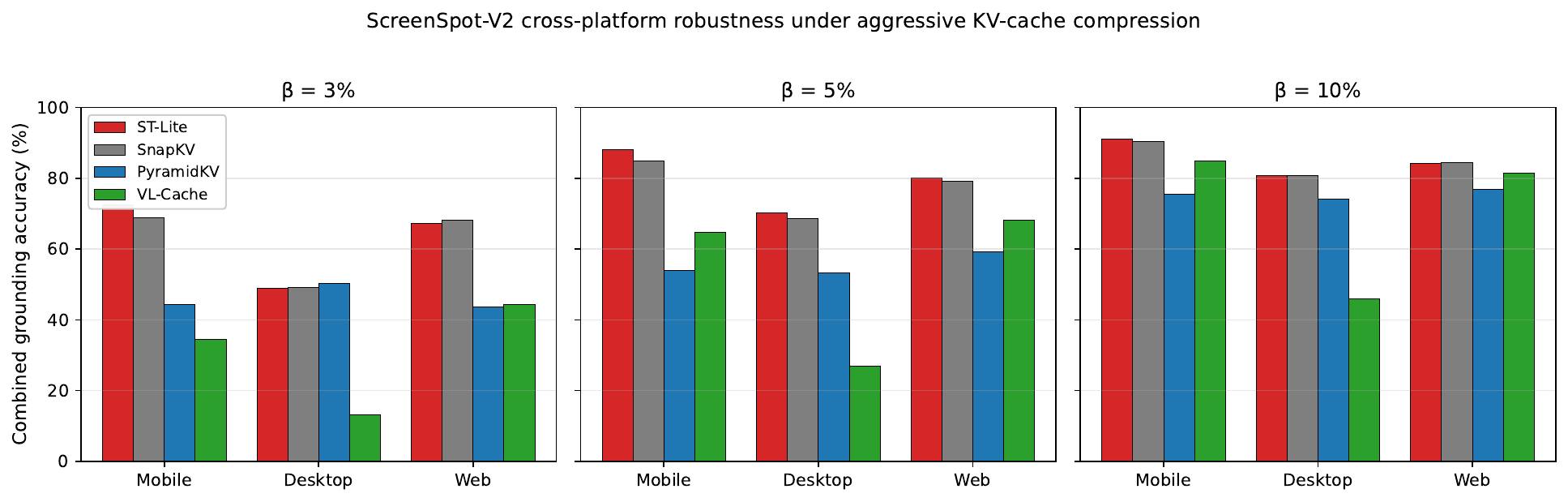}
    \caption{Cross-platform robustness on ScreenSpot-V2 under aggressive KV-cache compression. Each panel reports combined grounding accuracy by platform at the labelled budget. ST-Lite dominates every platform-budget cell.}
    \label{fig:ssv2_cross_platform}
\end{figure}

\begin{table}[H]
\centering
\setlength{\tabcolsep}{4pt}
\begin{tabular}{l|ccc}
\toprule
\noindent\textbf{Method} & \textbf{Mobile} & \textbf{Desktop} & \textbf{Web} \\
\midrule
\noindent\textbf{ST-Lite} & \textbf{88.0} & \textbf{70.3} & \textbf{80.1} \\
SnapKV           & 84.8 & 68.7 & 79.2 \\
PyramidKV        & 53.9 & 53.4 & 59.2 \\
VL-Cache         & 64.7 & 26.8 & 68.1 \\
\midrule
$\Delta$ over PyramidKV & $+34.1$ & $+16.9$ & $+20.9$ \\
$\Delta$ over VL-Cache  & $+23.3$ & $+43.4$ & $+12.0$ \\
\bottomrule
\end{tabular}
\caption{ScreenSpot-V2 grounding accuracy (\%) at $\beta{=}5\%$, by platform. ST-Lite leads all three platforms; VL-Cache regresses on desktop.}
\label{tab:elem_platform}
\end{table}

\balance


\onecolumn
\begin{center}

\renewcommand{\arraystretch}{1.1} 
\setlength{\tabcolsep}{3pt}
\resizebox{0.95\textwidth}{!}{ 
\scriptsize 
\begin{tabular}{lll ccccccccc}
\toprule
\multirow{2}{*}{\textbf{Dataset}} & \multirow{2}{*}{\textbf{Model}} & \multirow{2}{*}{\textbf{Method}} & \multicolumn{9}{c}{\textbf{Budget}} \\
\cmidrule(lr){4-12}
 & & & \textbf{1\%} & \textbf{3\%} & \textbf{5\%} & \textbf{10\%} & \textbf{15\%} & \textbf{20\%} & \textbf{40\%} & \textbf{80\%} & \textbf{100\%} \\
\midrule
\multirow{12}{*}{ScreenSpot Pro} & \multirow{6}{*}{UI-TARS-1.5-7B} 
 & SnapKV & 6.9 & \textbf{23.6} & 29.3 & 36.8 & 38.6 & 39.7 & 40.1 & 42.1 & \textbf{42.3} \\
 &  & PyramidKV & 4.8 & 14.1 & 20.9 & 26.5 & 29.9 & 34.6 & 38.9 & 41.3 & \textbf{42.3} \\
 &  & VL-Cache & 1.1 & 15.8 & 23.9 & 32.2 & 34.6 & 36.7 & 40.3 & 41.4 & \textbf{42.3} \\
 &  & GUI-KV & 7.0 & 23.5 & 30.5 & 36.8 & 38.7 & 40.2 & 41.2 & 42.4 & \textbf{42.3} \\
 &  & ST-Lite (CSS only) & 7.3 & 23.4 & 31.0 & 36.9 & 39.1 & 42.2 & 42.7 & 43.4 & \textbf{42.3} \\
 &  & ST-Lite & \textbf{7.3} & 23.4 & \textbf{31.0} & \textbf{36.9} & \textbf{39.1} & \textbf{42.2} & \textbf{42.7} & \textbf{43.4} & \textbf{42.3} \\
\cmidrule{2-12}
 & \multirow{6}{*}{OpenCUA-7B} 
 & SnapKV & 10.0 & 16.1 & 18.5 & 17.7 & 15.2 & 17.2 & 14.7 & 26.9 & \textbf{43.3} \\
 &  & PyramidKV & \textbf{14.9} & 14.5 & 14.3 & 11.9 & 14.3 & 12.9 & 13.9 & 14.4 & \textbf{43.3} \\
 &  & VL-Cache & 1.1 & 14.6 & 15.2 & 15.6 & 13.8 & 16.7 & 15.1 & 14.5 & \textbf{43.3} \\
 &  & GUI-KV & 9.9 & 16.7 & 19.4 & 17.7 & 16.5 & 19.0 & 18.3 & 28.0 & \textbf{43.3} \\
 &  & ST-Lite (CSS only) & 9.5 & 18.9 & 20.1 & 17.9 & 16.8 & 19.4 & 19.3 & 29.3 & \textbf{43.3} \\
 &  & ST-Lite & 9.5 & \textbf{18.9} & \textbf{20.1} & \textbf{17.9} & \textbf{16.8} & \textbf{19.4} & \textbf{19.3} & \textbf{29.3} & \textbf{43.3} \\
\midrule
\multirow{12}{*}{AITW} & \multirow{6}{*}{UI-TARS-1.5-7B} 
 & SnapKV & 8.1 & 11.5 & 13.2 & 14.2 & 14.5 & 15.3 & 15.6 & 16.6 & \textbf{18.2} \\
 &  & PyramidKV & \textbf{8.6} & 7.5 & 10.8 & 14.2 & 15.1 & 14.4 & 15.1 & \textbf{17.2} & \textbf{18.2} \\
 &  & VL-Cache & 0.2 & 1.7 & 2.1 & 9.8 & 10.5 & 11.1 & 10.1 & 9.1 & \textbf{18.2} \\
 &  & GUI-KV & 8.2 & 12.0 & 14.0 & 15.5 & 16.1 & 16.7 & 16.6 & 16.8 & \textbf{18.2} \\
 &  & ST-Lite (CSS only) & 8.3 & 12.4 & 14.1 & 15.7 & 16.7 & 17.4 & 16.9 & 16.7 & \textbf{18.2} \\
 &  & ST-Lite & \textbf{8.6} & \textbf{13.2} & \textbf{16.0} & \textbf{18.4} & \textbf{20.0} & \textbf{20.1} & \textbf{19.0} & 17.1 & \textbf{18.2} \\
\cmidrule{2-12}
 & \multirow{6}{*}{OpenCUA-7B} 
 & SnapKV & 1.6 & \textbf{8.2} & \textbf{11.7} & 12.5 & 14.3 & 14.2 & 13.1 & 10.7 & \textbf{17.3} \\
 &  & PyramidKV & 1.3 & 4.2 & 8.9 & 11.7 & 13.3 & 12.8 & 10.9 & 9.8 & \textbf{17.3} \\
 &  & VL-Cache & 0.0 & 0.0 & 0.1 & 0.6 & 1.1 & 2.1 & 6.7 & 7.7 & \textbf{17.3} \\
 &  & GUI-KV & 1.5 & 7.2 & 10.9 & 12.7 & 14.8 & 14.9 & 13.5 & 11.2 & \textbf{17.3} \\
 &  & ST-Lite (CSS only) & 1.9 & 7.7 & 11.4 & 12.2 & 14.7 & 14.2 & 12.9 & 11.2 & \textbf{17.3} \\
 &  & ST-Lite & 1.5 & 6.7 & 10.6 & \textbf{13.3} & \textbf{15.8} & \textbf{16.4} & \textbf{14.3} & \textbf{12.3} & \textbf{17.3} \\
\midrule
\multirow{12}{*}{AgentNetBench} & \multirow{6}{*}{UI-TARS-1.5-7B} 
 & SnapKV & 0.4 & 0.8 & 1.5 & 4.1 & 8.4 & 9.3 & 17.8 & 18.9 & \textbf{17.5} \\
 &  & PyramidKV & \textbf{1.4} & 1.8 & 0.8 & 1.3 & 2.7 & 4.1 & 11.0 & 19.0 & \textbf{17.5} \\
 &  & VL-Cache & 0.0 & 0.7 & 1.9 & \textbf{9.8} & \textbf{15.9} & 16.1 & 16.9 & 18.7 & \textbf{17.5} \\
 &  & GUI-KV & 0.2 & 1.0 & 1.8 & 6.1 & 11.9 & 15.5 & 18.5 & 19.5 & \textbf{17.5} \\
 &  & ST-Lite (CSS only) & 0.9 & 2.1 & 2.7 & 7.7 & 9.9 & 14.1 & 18.1 & 18.7 & \textbf{17.5} \\
 &  & ST-Lite & 0.1 & 1.7 & \textbf{3.0} & \textbf{9.8} & 13.2 & \textbf{17.0} & \textbf{20.7} & \textbf{20.5} & \textbf{17.5} \\
\cmidrule{2-12}
 & \multirow{6}{*}{OpenCUA-7B} 
 & SnapKV & 0.8 & 3.2 & 3.7 & 5.6 & 6.6 & 8.0 & 17.1 & 28.3 & \textbf{66.4} \\
 &  & PyramidKV & 0.6 & 0.6 & 0.8 & 1.6 & 2.6 & 4.7 & 11.1 & 26.9 & \textbf{66.4} \\
 &  & VL-Cache & 0.5 & 2.8 & 1.9 & 0.9 & 4.7 & 8.1 & 0.9 & 19.9 & \textbf{66.4} \\
 &  & GUI-KV & 0.7 & 5.9 & 7.7 & 12.0 & 13.8 & 16.7 & 19.4 & 30.2 & \textbf{66.4} \\
 &  & ST-Lite (CSS only) & 1.6 & 1.8 & 2.3 & 3.5 & 6.7 & 8.4 & 19.1 & 28.1 & \textbf{66.4} \\
 &  & ST-Lite & 0.2 & \textbf{6.6} & \textbf{8.7} & \textbf{13.6} & \textbf{15.6} & \textbf{20.8} & \textbf{28.6} & \textbf{37.6} & \textbf{66.4} \\
\bottomrule
\end{tabular}
}
\captionof{table}{Detailed performance comparison on key benchmarks (ScreenSpot Pro, AITW, and AgentNetBench) across different budget ratios. Default configuration: SnapKV-style window base ($\delta{=}8$), $\alpha{=}1$, cosine similarity. On single-frame ScreenSpot Pro, TSG is a no-op so ``ST-Lite (CSS only)'' and ``ST-Lite'' rows are identical.}
\label{tab:key_benchmarks_detail}
\end{center}

\begin{center}

\renewcommand{\arraystretch}{1.1} 
\setlength{\tabcolsep}{3pt}
\resizebox{0.95\textwidth}{!}{
\scriptsize 
\begin{tabular}{lll ccccccccc}
\toprule
\multirow{2}{*}{\textbf{Dataset}} & \multirow{2}{*}{\textbf{Model}} & \multirow{2}{*}{\textbf{Method}} & \multicolumn{9}{c}{\textbf{Budget}} \\
\cmidrule(lr){4-12}
 & & & \textbf{1\%} & \textbf{3\%} & \textbf{5\%} & \textbf{10\%} & \textbf{15\%} & \textbf{20\%} & \textbf{40\%} & \textbf{80\%} & \textbf{100\%} \\
\midrule
\multirow{12}{*}{ScreenSpotV2} & \multirow{6}{*}{UI-TARS-1.5-7B} 
 & SnapKV & 16.1 & \textbf{62.9} & 77.5 & 85.2 & 86.5 & 88.3 & 88.2 & 88.4 & \textbf{88.9} \\
 &  & PyramidKV & 17.0 & 47.1 & 55.5 & 75.5 & 82.9 & 84.6 & 87.5 & 88.6 & \textbf{88.9} \\
 &  & VL-Cache & 0.0 & 30.7 & 53.2 & 70.8 & 83.2 & 85.6 & 88.0 & 88.4 & \textbf{88.9} \\
 &  & GUI-KV & 16.7 & 62.9 & 79.1 & 85.4 & 86.6 & 88.4 & 88.3 & 89.1 & \textbf{88.9} \\
 &  & ST-Lite (CSS only) & 17.1 & 62.9 & 79.6 & 86.0 & 87.0 & 88.8 & 88.4 & 89.3 & \textbf{88.9} \\
 &  & ST-Lite & \textbf{17.1} & \textbf{62.9} & \textbf{79.6} & \textbf{86.0} & \textbf{87.0} & \textbf{88.8} & \textbf{88.4} & \textbf{89.3} & \textbf{88.9} \\
\cmidrule{2-12}
 & \multirow{6}{*}{OpenCUA-7B} 
 & SnapKV & \textbf{42.1} & 71.5 & \textbf{73.8} & 70.8 & 64.0 & 66.3 & 60.2 & 74.3 & \textbf{90.7} \\
 &  & PyramidKV & 16.9 & 34.9 & 41.7 & 41.0 & 41.4 & 63.3 & 57.9 & 74.4 & \textbf{90.7} \\
 &  & VL-Cache & 0.9 & 52.3 & 59.0 & 67.5 & 57.8 & 61.1 & 55.8 & 74.0 & \textbf{90.7} \\
 &  & GUI-KV & 42.0 & 71.6 & 72.8 & 71.0 & 65.2 & 67.9 & 65.8 & 75.9 & \textbf{90.7} \\
 &  & ST-Lite (CSS only) & 41.9 & 71.7 & 69.9 & 71.1 & 65.5 & 70.4 & 67.2 & 77.0 & \textbf{90.7} \\
 &  & ST-Lite & 41.9 & \textbf{71.7} & 69.9 & \textbf{71.1} & \textbf{65.5} & \textbf{70.4} & \textbf{67.2} & \textbf{77.0} & \textbf{90.7} \\
\midrule
\multirow{12}{*}{AndroidControl} & \multirow{6}{*}{UI-TARS-1.5-7B} 
 & SnapKV & \textbf{18.9} & \textbf{26.1} & 28.9 & 39.1 & 41.7 & 44.4 & 47.7 & 46.6 & \textbf{49.6} \\
 &  & PyramidKV & 11.6 & 20.8 & 24.9 & 28.9 & 34.4 & 37.9 & 44.6 & 46.3 & \textbf{49.6} \\
 &  & VL-Cache & 1.9 & 6.6 & 10.6 & 34.9 & 39.9 & 42.9 & 46.3 & 45.8 & \textbf{49.6} \\
 &  & GUI-KV & 17.9 & 26.0 & 30.1 & 41.2 & 44.7 & 46.0 & 48.2 & 46.9 & \textbf{49.6} \\
 &  & ST-Lite (CSS only) & 15.9 & 26.0 & 30.1 & 39.7 & 42.0 & 43.6 & 48.7 & 47.1 & \textbf{49.6} \\
 &  & ST-Lite & 13.9 & 25.9 & \textbf{34.8} & \textbf{44.2} & \textbf{46.9} & \textbf{48.9} & \textbf{50.3} & \textbf{48.1} & \textbf{49.6} \\
\cmidrule{2-12}
 & \multirow{6}{*}{OpenCUA-7B} 
 & SnapKV & \textbf{3.6} & 15.3 & 18.6 & 21.6 & 27.9 & 25.9 & 30.9 & 26.1 & \textbf{40.7} \\
 &  & PyramidKV & 1.9 & \textbf{15.9} & 15.9 & 22.9 & 27.3 & 27.9 & 23.1 & 26.9 & \textbf{40.7} \\
 &  & VL-Cache & 0.0 & 2.9 & 2.5 & 11.9 & 15.7 & 19.5 & 27.9 & 6.5 & \textbf{40.7} \\
 &  & GUI-KV & 3.0 & 15.8 & 19.6 & 23.8 & 30.2 & 27.3 & 32.3 & 31.7 & \textbf{40.7} \\
 &  & ST-Lite (CSS only) & 2.3 & 14.2 & 18.8 & 23.3 & 29.8 & 28.7 & 34.4 & 29.1 & \textbf{40.7} \\
 &  & ST-Lite & 0.7 & \textbf{15.9} & \textbf{19.9} & \textbf{26.8} & \textbf{36.9} & \textbf{32.9} & \textbf{37.9} & \textbf{34.1} & \textbf{40.7} \\
\midrule
\multirow{12}{*}{\shortstack{Multimodal-\\Mind2Web}} & \multirow{6}{*}{UI-TARS-1.5-7B} 
 & SnapKV & 2.1 & 4.7 & 7.1 & 11.6 & 16.9 & 18.6 & 22.7 & 24.2 & \textbf{25.9} \\
 &  & PyramidKV & 1.4 & 3.9 & 4.7 & 5.9 & 9.1 & 11.9 & 21.9 & 24.2 & \textbf{25.9} \\
 &  & VL-Cache & 0.0 & 0.1 & 0.0 & 0.6 & 3.9 & 7.4 & 19.9 & 23.9 & \textbf{25.9} \\
 &  & GUI-KV & 2.0 & 5.2 & 8.3 & 13.6 & 18.7 & 20.3 & 24.2 & 25.6 & \textbf{25.9} \\
 &  & ST-Lite (CSS only) & 2.4 & 7.3 & 10.9 & 16.4 & 19.4 & 20.7 & 21.9 & 25.9 & \textbf{25.9} \\
 &  & ST-Lite & 1.9 & 7.1 & \textbf{12.9} & \textbf{21.4} & \textbf{25.7} & \textbf{27.2} & \textbf{30.2} & \textbf{31.1} & \textbf{25.9} \\
\cmidrule{2-12}
 & \multirow{6}{*}{OpenCUA-7B} 
 & SnapKV & \textbf{0.9} & \textbf{2.4} & 5.6 & 9.4 & 12.7 & 14.3 & 16.7 & 17.3 & \textbf{33.6} \\
 &  & PyramidKV & 0.0 & 0.9 & 1.8 & 5.6 & 6.7 & 10.9 & 12.7 & 14.1 & \textbf{33.6} \\
 &  & VL-Cache & 0.0 & 0.0 & 0.0 & 0.1 & 0.1 & 0.0 & 13.4 & 14.4 & \textbf{33.6} \\
 &  & GUI-KV & 0.4 & 2.3 & 6.0 & 10.2 & 13.7 & 14.8 & 17.0 & 17.7 & \textbf{33.6} \\
 &  & ST-Lite (CSS only) & 0.8 & 0.0 & 5.7 & 10.1 & 11.7 & 14.4 & 17.1 & 18.7 & \textbf{33.6} \\
 &  & ST-Lite & 0.2 & 1.8 & \textbf{7.8} & \textbf{13.4} & \textbf{15.2} & \textbf{16.9} & \textbf{18.3} & \textbf{19.4} & \textbf{33.6} \\
\midrule
\multirow{12}{*}{OSWorld-Verified} & \multirow{6}{*}{UI-TARS-1.5-7B} 
 & SnapKV & \textbf{2.2} & \textbf{3.2} & 6.9 & 16.6 & 16.7 & 18.6 & 19.2 & 23.2 & \textbf{26.3} \\
 &  & PyramidKV & 0.2 & 1.9 & 1.1 & 2.9 & 9.9 & 15.6 & 16.9 & 20.3 & \textbf{26.3} \\
 &  & VL-Cache & 0.0 & 1.9 & \textbf{10.6} & 7.6 & 8.9 & 9.7 & 10.1 & 19.9 & \textbf{26.3} \\
 &  & GUI-KV & 1.9 & 2.8 & 8.3 & 17.1 & 18.0 & 20.7 & 25.1 & 24.2 & \textbf{26.3} \\
 &  & ST-Lite (CSS only) & 1.9 & 3.0 & 7.1 & 17.3 & 18.2 & 19.6 & 23.1 & 25.3 & \textbf{26.3} \\
 &  & ST-Lite & 0.6 & 1.1 & 8.6 & \textbf{18.9} & \textbf{19.7} & \textbf{25.9} & \textbf{27.0} & \textbf{28.1} & \textbf{26.3} \\
\cmidrule{2-12}
 & \multirow{6}{*}{OpenCUA-7B} 
 & SnapKV & 0.7 & 0.9 & 0.5 & 6.9 & 10.1 & 14.3 & 16.2 & 16.8 & \textbf{23.8} \\
 &  & PyramidKV & 0.3 & 0.9 & 1.9 & 6.7 & 8.6 & 11.2 & 13.1 & 14.9 & \textbf{23.8} \\
 &  & VL-Cache & 0.0 & 0.1 & 0.6 & 3.1 & 6.9 & 9.9 & 9.7 & 9.1 & \textbf{23.8} \\
 &  & GUI-KV & 0.3 & 0.9 & 1.6 & 7.5 & 11.2 & 15.1 & 17.2 & 17.0 & \textbf{23.8} \\
 &  & ST-Lite (CSS only) & 1.1 & 1.4 & 3.3 & 6.9 & 10.9 & 14.7 & 17.4 & 17.9 & \textbf{23.8} \\
 &  & ST-Lite & 0.1 & 0.9 & \textbf{4.1} & \textbf{8.9} & \textbf{13.9} & \textbf{16.9} & \textbf{19.7} & 17.5 & \textbf{23.8} \\
\bottomrule
\end{tabular}
}
\captionof{table}{Performance evaluation on additional benchmarks (ScreenSpotV2, AndroidControl, Multimodal-Mind2Web, and OSWorld-Verified).}
\label{tab:additional_benchmarks_detail}
\end{center}

\end{document}